\newtheorem{theorem}{Theorem}
\newtheorem{remark}{Remark}
\newtheorem{lemma}{Lemma}
\newtheorem{definition}{Definition}
\newtheorem{assumption}{Assumption}
\title{Improved Learning Rates for Stochastic Optimization}
\author{%
  Shaojie Li,  Pengwei Tang,  Yong Liu\thanks{Corresponding Author.} \\
  Renmin University of China\\
  \texttt{\{2020000277, tangpwei, liuyonggsai\}@ruc.edu.cn} 
}
\begin{document}
\maketitle
\begin{abstract}
Stochastic optimization is a cornerstone of modern machine learning. This paper studies the generalization performance of two classical stochastic optimization algorithms: stochastic gradient descent (SGD) and Nesterov's accelerated gradient (NAG). We establish new learning rates for both algorithms, with improved guarantees in some settings or comparable rates under weaker assumptions in others. We also provide numerical experiments to support the theory.
\end{abstract}

\section{Introduction}
Stochastic optimization plays an essential role in machine learning, as many problems can be cast in this framework \cite{shalev2014understanding}. The goal is to optimize an expected objective over a feasible set $\mathcal{W} \subseteq \mathbb{R}^d$:
\begin{align}\label{eq111}
\min_{\mathbf{w} \in \mathcal{W}} F(\mathbf{w}) := \mathbb{E}_{z \sim \rho}  [f(\mathbf{w};z)],
\end{align}
where $f(\cdot;z): \mathcal{W} \mapsto \mathbb{R}$ depends on a random variable $z \in \mathcal{Z}$ drawn from a distribution $\rho$. In statistical learning, this is also referred to as risk minimization problem: $F(\mathbf{w})$ is the population risk, $z$ denotes a single example, $\mathbf{w}$ represents a hypothesis/model, and $f(\cdot;\cdot)$ is a loss. A canonical instance is supervised learning \cite{vapnik2013nature}:
\begin{align}\label{eq112}
\min_{\mathbf{w} \in \mathcal{W}} F(\mathbf{w}) := \mathbb{E}_{(x,y) \sim \rho} [\ell(\mathbf{w};x,y)],
\end{align}
where $z\in \mathcal{Z} = \mathcal{X} \times \mathcal{Y} \subset \mathbb{R}^d \times \mathbb{R}$ and $ f(\mathbf{w};z) = \ell(\mathbf{w};x,y)$. Since the underlying distribution $\rho$ is unknown, the minimization of $F(\mathbf{w})$ is not accessible, and we instead minimize the empirical risk over $n$ i.i.d. samples $S=\{ z_1,...,z_n  \} \sim \rho^n$:
\begin{align}\label{eq:empirical risk}
 \min_{\mathbf{w} \in \mathcal{W}} F_S(\mathbf{w}) := \frac{1}{n} \sum_{i=1}^n f(\mathbf{w} ; z_i).
\end{align} 
Stochastic optimization is typically used to approximately solve (\ref{eq:empirical risk}) and output a learned model $\mathbf{w}_S$. Its generalization performance is then evaluated by how well $\mathbf{w}_S$ performs on unseen data, often through the excess risk
\begin{align}\label{excess-risk}
F( \mathbf{w}_S) -F^{\ast}, \qquad F^{\ast} := \inf_{\mathbf{w} \in \mathcal{W}} F(\mathbf{w}).
\end{align}

In this work, we study the \emph{learning rate}, namely, the convergence rate of an upper bound on (\ref{excess-risk}), for two classical stochastic optimization algorithms for solving \eqref{eq:empirical risk}: stochastic gradient descent (SGD) and Nesterov’s accelerated gradient (NAG). We establish $\mathcal{O}(1/n^2)$-type learning rates for both algorithms in several settings; compared with prior work, our analysis gives improved bounds in some regimes and comparable rates under weaker assumptions in others. Our results also suggest that, under suitable curvature conditions, the generalization performance improves with optimization accuracy and therefore do not exhibit the early-stopping tradeoff that appears in some prior analyses.

\textbf{Organization.} The remainder of this paper is organized as follows. In Section \ref{section2}, we discuss related works. Notations and assumptions used are introduced in Section \ref{section23}. In Section \ref{mainresults}, we provide our main results. In Section \ref{sectionexp}, experiments are presented to support our theory. Section \ref{section6} concludes this paper. All proofs are postponed to the appendix.

\section{Related Work}\label{section2}
There are two mainstream approaches to studying generalization in stochastic optimization: algorithmic stability and uniform convergence. We review the related literature accordingly.

\textbf{Algorithmic Stability.}
Algorithmic stability is a fundamental tool in statistical learning theory \cite{shalev2010learnability}. Informally, an algorithm is stable if a small perturbation of the training set leads to only a small change in the output. A classical notion is \emph{uniform stability} \cite{bousquet2002stability}: an algorithm $\mathbf{w}:\mathcal{Z}^n \mapsto \mathcal{W}$ is uniformly $\varepsilon$-stable if for all datasets $S,S' \in \mathcal{Z}^n$ that differ in at most one example,
\begin{align} \label{uniform stability}
\left| f( \mathbf{w}_S; z)  - f( \mathbf{w}_{S'} ;z) \right| \leq \varepsilon, \qquad  \forall z \in \mathcal{Z},
\end{align}
which in turn yields a bound on the generalization error. Other variants of stability include uniform argument stability \cite{liu2017algorithmic,bassily2020stability}, hypothesis stability \cite{bousquet2002stability,charles2018stability}, hypothesis set stability \cite{foster2019hypothesis}, on-average stability \cite{shalev2010learnability,kuzborskij2018data,zhang2021stability}, and locally elastic stability \cite{deng2020toward}. 
Despite this breadth, stability-based bounds are often obtained only in expectation \cite{koren2015fast,gonen2017average,vavskevivcius2020suboptimality,mourtada2021distribution,lei2021sharper}. More recent work has focused on establishing high-probability bounds for uniformly stable algorithms \cite{feldman2018generalization,feldman2019high,bousquet2020sharper,klochkov2021stability,fan2024high,zhu2025stability}. In particular, the fastest $\mathcal{O}(1/n^2)$ rate in this line was recently obtained by \cite{zhu2025stability}. However, their analysis relies on a uniformly bounded gradient condition, which in the differentiable setting is implied by a Lipschitz continuity assumption on the loss and is typically required in stability-based arguments. Such a condition can be restrictive in modern stochastic optimization problems. This motivates the search for an alternative route to fast high-probability rates under weaker assumptions.

\textbf{Uniform Convergence.}
Uniform convergence provides such an alternative perspective and is another central approach to generalization \cite{shalev2010learnability}. It controls the deviation between empirical and population quantities directly. Formally, it requires that, for any distribution $\rho$ over $\mathcal{Z}$, the empirical risks of all hypotheses in the class converge uniformly to their population risks \cite{shalev2010learnability}:
\begin{align}\label{uniform convergence}
\mathbb{P}_{S \sim \rho^n} \left[\sup_{\mathbf{w} \in \mathcal{W}}\left|F(\mathbf{w}) - F_S(\mathbf{w}) \right| > \varepsilon \right] \stackrel{n \rightarrow  \infty }{\longrightarrow} 0,\qquad \forall \varepsilon >0. 
\end{align}
In this paper, we focus on \emph{uniform convergence of gradients}, namely the uniform deviation between the population gradient and the empirical gradient, obtained by replacing $F$ and $F_S$ in (\ref{uniform convergence}) with $\nabla F$ and $\nabla F_S$. This gradient-based viewpoint has become a standard tool in stochastic optimization \cite{mei2018landscape,zhang2017empirical,foster2018uniform,zhang2019stochastic,lei2021learning,davis2018uniform,xu2025towards}. Existing techniques include covering-number arguments \cite{mei2018landscape}, vector-valued Rademacher complexity for generalized linear models \cite{foster2018uniform}, Rademacher chaos of order two \cite{lei2021learning,de2012decoupling}, and graphical uniform convergence for certain nonsmooth settings \cite{davis2018uniform}.
Among these developments, the localized technique of \cite{xu2025towards} is particularly relevant to our work, since it substantially sharpens earlier global uniform-convergence arguments. Building on this line of research, our analysis uses modern localized uniform-convergence tools \cite{xu2025towards} to derive improved bounds for stochastic optimization and, in several settings, to obtain comparable rates under weaker assumptions.

\section{Preliminaries}\label{section23}
We begin with notation. For $\mathbf w=(w_1,\dots,w_d)\in\mathbb R^d$, let $\|\mathbf w\|$ denote the Euclidean norm, i.e., $\|\mathbf w\|=\sqrt{\sum_{j=1}^d w_j^2}$. 
Let $B(\mathbf{w}_0 , R) := \{ \mathbf{w} \in \mathbb{R}^d: \|\mathbf{w}-\mathbf{w}_0 \| \leq R \}$ denote a ball with center $\mathbf{w}_0 \in \mathbb{R}^d$ and radius $R$. We assume that the set $\mathcal{W}$ satisfies the condition $\mathcal{W} \subseteq B(\mathbf{w}^{\ast} , R) $, where $\mathbf w^\ast \in \arg\min_{\mathbf w\in\mathcal W}F(\mathbf w)$,
and that all iterates stay in the domain, i.e., for all $t\ge 1$, $\mathbf w_t,\mathbf y_t\in\mathcal W$ (e.g., enforced by projection or by a suitable stepsize choice). We write $A\asymp B$ if there exist universal constants $C_1,C_2>0$ such that $C_1A\le B\le C_2A$. Standard order of magnitude notation (e.g., $\mathcal O(\cdot)$) is used throughout. 

\subsection{Standard Assumptions in Stochastic Optimization}
We consider differentiable losses $f:\mathcal W\times\mathcal Z\to\mathbb R_+$ and introduce some standard assumptions. 
\begin{assumption}[Lipschitz Continuity]\label{assum666}
There exists $L>0$ such that for all $z\in\mathcal Z$ and $\mathbf w_1,\mathbf w_2\in\mathcal W$,
\begin{align*}
| f(\mathbf{w}_1;z)  - f( \mathbf{w}_2;z) | \leq L \| \mathbf{w}_1 - \mathbf{w}_2 \|.
\end{align*}
\end{assumption}
\begin{remark}\rm{}\label{remark1ci}
When $f$ is differentiable, the lipschitz continuity of $f(\mathbf w;z)$ is equivalent to uniform boundedness of its gradient \cite{hardt2016train}
\begin{align*}
\|\nabla f(\mathbf w;z)\| \le L, \quad \textup{for all $z\in\mathcal Z$ and $\mathbf w \in\mathcal W$}.
\end{align*}
This condition can be relaxed through the following Assumptions~\ref{assu7} and~\ref{assu5}.
\end{remark}
\begin{assumption}[Smoothness]\label{assu4}
There exists $\beta>0$ such that for all $z\in\mathcal Z$ and $\mathbf w_1,\mathbf w_2\in\mathcal W$,
\begin{align*}
\|\nabla f(\mathbf{w}_1;z) - \nabla f(\mathbf{w}_2;z) \| \leq \beta \| \mathbf{w}_1 - \mathbf{w}_2 \|.
\end{align*}
\end{assumption}
\begin{remark}\rm{}\label{assnonsmo}
Smoothness means Lipschitz continuity of the gradient \cite{reddi2016stochastic}.
\end{remark}

\begin{assumption}[Relaxed Boundedness of Gradient]\label{assu7}
Assume the existence of $G>0$ satisfying for $\forall t \in \mathbb{N}, z\in \mathcal{Z}$
\begin{align*}
&\sqrt{\eta_t} \|\nabla f(\mathbf{w}_{t};z) \| \leq G; \quad\sqrt{\eta_t} \|\nabla f(\mathbf{y}_{t};z) \| \leq G .
\end{align*}
\end{assumption}
\begin{remark}\rm{}\label{shiremark}
Compared with the common global bounded-gradient condition   
\begin{align*}
\|\nabla f(\mathbf w;z)\| \le L, \quad \textup{for all $z\in\mathcal Z$ and $\mathbf w \in\mathcal W$},
\end{align*}
Assumption \ref{assu7} is milder in our setting because the stochastic gradient is scaled by the stepsize, and the stepsizes considered in this paper decrease over the course of training \cite{lei2021learning}.
\end{remark}

\begin{assumption}[Stochastic Gradient Noise]\label{assu8}
Assume the existence of $\sigma > 0$ satisfying
\begin{align*}
\mathbb{E}_{j_t} \left[ \| \nabla f(\mathbf{w}_{t};z_{j_t}) - \nabla F_S (\mathbf{w}_{t}) \|^2 \right]\leq \sigma^2   , \quad \forall t \in \mathbb{N},
\end{align*}
where $\mathbb{E}_{j_t}$ denotes the expectation w.r.t. $j_t$.
\end{assumption}
\begin{remark}\rm{}\label{remark6}
Assumption~\ref{assu8} states that the conditional variance of the stochastic gradient is bounded; it is standard in stochastic optimization \cite{nemirovski2009robust,ghadimi2013stochastic,bottou2018optimization}. The related works \cite{li2022high,lihigh} instead assumed a \emph{sub-Weibull} tail (study stochastic gradient gradient and Polyak’s momentum, respectively) for the gradient noise $\nabla f(\mathbf w_t;z_{j_t})-\nabla F_S(\mathbf w_t)$: 
\begin{align*}
\mathbb{E}_{j_t}\Big[\exp \big((\| \nabla f(\mathbf{w}_t;z_{j_t}) - \nabla F_S(\mathbf{w}_{t})\|/K)^{\frac{1}{\theta}} \big) \Big] \leq 2
\end{align*}
such that $\theta \geq \frac{1}{2}$, which implies polynomial moment control $\mathbb E\|X\|^k\le (K'k^\theta)^k$ for all $k\ge1$ (see Theorem 2.1 in \cite{vladimirova2020sub} or \cite{kuchibhotla2018moving}), where $K'$ and $K$ differ each by a constant depend only on $\theta$.
In particular, the variance is finite. Therefore, Assumption~\ref{assu8} is strictly weaker than the sub-Weibull noise assumption and covers a broader class of distributions with finite variance.
\end{remark}

\begin{assumption}[Polyak-{\L}ojasiewicz]\label{assu10}
Fix a set $\mathcal{W}$ and let $f^{\ast} = \inf_{\mathbf{w} \in \mathcal{W}} f(\mathbf{w})$. For any function $f: \mathcal{W} \mapsto \mathbb{R}$, we say it satisfies the Polyak-{\L}ojasiewicz (PL) condition with parameter $\mu > 0$ on $\mathcal{W}$ if for all $\mathbf{w} \in \mathcal{W}$,
\begin{align*}
f (\mathbf{w}) - f^{\ast} \leq \frac{1}{2 \mu} \| \nabla f(\mathbf{w}) \|^2.
\end{align*}
\end{assumption}
\begin{remark}\rm{}
The PL inequality simply requires that the gradient
grows faster than a quadratic function as we move away
from the optimal function value. It is a weak curvature condition that facilitates linear convergence without requiring strong convexity \cite{karimi2016linear}. Relationships between PL and other curvature notions are shown in Lemma~\ref{appendixa}. 
\end{remark}
\subsection{Standard Assumptions in Generalization Theory}
\begin{assumption}[Bernstein Condition] \label{assu5}
There exists $B_\ast>0$ such that for all integers $k \ge 2$,
\begin{align*}
\mathbb{E}\left [ \|\nabla f(\mathbf{w}^{\ast}; z) \|^k \right] \leq \frac{1}{2} k! \mathbb{E} \left[ \|\nabla f(\mathbf{w}^{\ast}; z) \|^2 \right] B_{\ast}^{k-2}.
\end{align*}
\end{assumption}
\begin{remark}\rm{}
Assumption \ref{assu5} imposes the Bernstein condition on the random variable $\|\nabla f(\mathbf{w}^\ast; z)\|$. This condition is essentially equivalent to be a sub-exponential random variable \cite{wainwright2019high}, and is therefore weaker than assuming that $\|\nabla f(\mathbf{w}^\ast; z)\|$ is uniformly bounded for all $z \in \mathcal{Z}$. Note that the latter boundedness requirement is imposed only at the optimum $\mathbf{w}^\ast$, rather than uniformly over $\mathcal{W}$. Hence, the Bernstein condition is strictly weaker than boundedness of the underlying random variable. 
\end{remark}
\section{Main Results}\label{mainresults}
This section gives learning rates for SGD and NAG. 

\subsection{Stochastic Gradient Descent}\label{section3}
SGD is widely used due to its simplicity of implementation, low memory requirement, and strong empirical performance \cite{bottou201113,bottou2018optimization}. At iteration $t$, with stepsize $\eta_t \ge 0$, SGD samples an index $j_t$ i.i.d. uniformly from $\{1,\ldots,n\}$ and updates
\begin{align}\label{eq1}
\mathbf{w}_{t+1} = \mathbf{w}_t - \eta_t \nabla f (\mathbf{w}_t;z_{j_t}),
\end{align}
where $\nabla f (\mathbf{w}_t;z_{j_t})$ denotes the gradient of $f$ with respect to (w.r.t.) the first argument. The randomness in $\mathbf{w}_t$ comes from both the sample draw $S$ and the index sequence $\{j_t\}_t$. We evaluate generalization performance through both the averaged iterate and the last iterate. 

We begin with an average-iterate excess risk bound $(\sum_{t = 1}^T \eta_t )^{-1} \sum_{t = 1}^T \eta_t F(\mathbf{w}_{t}) - F^{\ast}$ and then turn to the last-iterate excess risk bound $F(\mathbf{w}_{T+1})-F^\ast$. For simplicity, we assume $\mathbf{w}_1=0$ without loss of generality, since all results extend to arbitrary $\mathbf{w}_1 \in \mathcal{W}$.
 
\begin{theorem} \label{theo67}
Suppose Assumptions \ref{assu4}, \ref{assu7}, \ref{assu8} and \ref{assu5} hold,  and suppose the population risk $F$ satisfies Assumption \ref{assu10} with parameter $\mu$. Let $\{ \mathbf{w}_t\}_t$ be the sequence produced by (\ref{eq1}) with $\eta_t = \eta_1 t^{- 1/2}$ such that $\eta_1 \leq \frac{1}{2\beta}$. 
 When $n \geq \frac{c\beta^2(d+ \log(\frac{16 \log(2n R +2)}{\delta}))}{\mu^2}$, for any $\delta >0$, with probability at least $1 - \delta$, choosing $T \asymp  n^4$ yields
\begin{align*}
\left(\sum_{t = 1}^T \eta_t \right)^{-1} \sum_{t = 1}^T \eta_t F(\mathbf{w}_{t}) - F^{\ast}  =  \mathcal{O} \left(\frac{\log^2(\frac{1}{\delta})}{n^2} + \frac{ F^{\ast}  \log(\frac{1}{\delta})}{n} \right);
\end{align*}
if further assuming $F^{\ast}  = \mathcal{O}(1/n)$, we have 
\begin{align*}
\left(\sum_{t = 1}^T \eta_t \right)^{-1} \sum_{t = 1}^T \eta_t F(\mathbf{w}_{t}) - F^{\ast} = \mathcal{O} \left( \frac{\log^2(1/\delta)}{ n^2}  \right),
\end{align*}
where $c$ is an absolute constant. 
\end{theorem}

Under additional curvature assumptions on both the empirical and population risks, SGD admits the following last-iterate guarantee.
\begin{theorem} \label{theo7}
Suppose Assumptions \ref{assu4}, \ref{assu7}, \ref{assu8} and \ref{assu5} hold, suppose the empirical risk $F_S$ satisfies Assumption \ref{assu10} with parameter $2\mu_S$, and suppose the population risk $F$ satisfies Assumption \ref{assu10} with parameter $2\mu$.
Let $\{ \mathbf{w}_t\}_t$ be the sequence produced by (\ref{eq1}) with $\eta_t = \frac{2}{\mu_S (t+t_0)}$ such that $t_0 \geq \max \left\{\frac{4\beta}{\mu_S},1 \right\}$ for all $t \in \mathbb{N}$. When $n \geq \frac{c\beta^2(d+ \log(\frac{16 \log(2n R +2)}{\delta}))}{\mu^2}$, 
for any $\delta >0$, with probability at least $1 - \delta$, choosing $T \asymp n^2$ yields
\begin{align*}
F(\mathbf{w}_{T+1}) - F^{\ast} 
= \mathcal{O} \left(\frac{\log n \log^3(\frac{1}{\delta})}{n^2}  + \frac{ F^{\ast} \log(\frac{1}{\delta})}{ n} \right);
\end{align*}
if further assuming $F^{\ast}  = \mathcal{O}(1/n)$, we have 
\begin{align*}
F(\mathbf{w}_{T+1}) - F^{\ast}  = \mathcal{O} \left( \frac{\log n \log^3(1/\delta)}{n^2}  \right),
\end{align*}
where $c$ is an absolute constant. 
\end{theorem} 

\begin{remark}\rm{}\label{remark17}
% Theorem~\ref{theo7} shows a rate of order
% $\mathcal O (\frac{\log n\,\log^3(1/\delta)}{n^2}+\frac{F^\ast\log(1/\delta)}{n})$, which specializes to
% $\mathcal O(\log n\,\log^3(1/\delta)/n^2)$ when $F^\ast=\mathcal O(1/n)$.
Compared with Theorem~\ref{theo67}, when both $F$ and $F_S$ satisfy the PL condition, Theorem~\ref{theo7} not only strengthens the guarantee from an averaged-iterate bound to a last-iterate bound, but also improves the required iteration complexity from $T\asymp n^4$ to $T\asymp n^2$. 
\end{remark}

\paragraph{Comparison with prior work.}
Fast generalization guarantees for SGD have been studied extensively. Classical $\mathcal{O}(1/n)$-type rates are typically derived under strong convexity \cite{kakade2008on,kar2013on,lei2018stochastic,klochkov2021stability}. The work \cite{lei2021learning} relaxes strong convexity to the PL condition, but still obtains only an $\mathcal{O}(1/n)$-type rate for SGD: $F(\mathbf{w}_{T+1}) - F^{\ast}  = \mathcal{O}\big(\frac{(d+ \log (1/\delta)) \log^2 n \log^2 (1/\delta)}{n}\big)$. In contrast, Theorem~\ref{theo7} yields an $\mathcal{O}(1/n^2)$-type bound under PL-type curvature, without requiring convexity. Several more recent works also obtain $\mathcal{O}(1/n^2)$-type rates, but under different assumptions or in different problem settings. The works \cite{li2022high,lihigh} derive such $\mathcal{O}(1/n^2)$-type rates for stochastic gradient descent and Polyak's momentum, respectively, under sub-Weibull gradient noise assumptions, whereas the results in Section \ref{section3} require only a bounded-variance-type condition, which is weaker in our framework (see Remark~\ref{remark6}). The recent stability-based analysis of \cite{zhu2025stability} also attains an $\mathcal{O}(1/n^2)$-type rate, but requires a uniformly bounded gradient condition (Assumption \ref{assum666}); by contrast, our analysis does not rely on bounded gradients. Finally, \cite{li2023learning} and \cite{zhu2024towards} establish $\mathcal{O}(1/n^2)$-type rates for the specialized settings of pairwise learning and minimax problems, whereas our result is for the standard (single-sample/parameter) risk minimization setting.

\paragraph{Theoretical insight.} The proof uses the PL condition to convert the excess risk into the population gradient term $\| \nabla F(\mathbf{w}_{T+1}) \|^2$, which is then bounded by the sum of
\begin{align}\label{afirstcom}
2\| \nabla F_S(\mathbf{w}_{T+1}) \|^2 \quad \textup{and} \quad 2\| \nabla F(\mathbf{w}_{T+1}) - \nabla F_S(\mathbf{w}_{T+1}) \|^2.
\end{align}
The first term is an optimization error, since it is related to how well the optimization algorithm minimizes the empirical risk $F_S$; the second term is a generalization error, since it is related to the gap between the population gradient and its empirical counterpart based on random samples. These two terms are bounded by
\begin{align*}
\mathcal{O} \Big(\frac{\log T \log^3(1/\delta)}{T} \Big)
\quad \textup{and} \quad
\mathcal{O} \Big(\frac{\log T \log^3(1/\delta)}{T} +\frac{\log^2(1/\delta)}{n^2} \Big),
\end{align*}
respectively. One can see that the optimization error decreases as the iteration number increases and, interestingly, the generalization error also decreases as the iteration number increases. This contrasts with Theorem 5 in \cite{lei2021learning}, where the generalization term increases along training. Thus, one should select an appropriate iteration number $T$ for early stopping to balance the iteration complexity (optimization error) and generalization so as to achieve a good learning rate, which is the intuition behind resisting overfitting. By contrast, Theorem~\ref{theo7} reveals that, if the population and empirical risks satisfy certain curvature conditions such as the PL condition and a suitable sample complexity condition holds, generalization continues to improve as training accuracy increases, which means that the overfitting phenomenon would not happen and that the same early-stopping tradeoff does not arise. This perspective is also broadly consistent with empirical observations in overparameterized neural networks, where PL-like geometry may hold near global minima \cite{zhou2019sgd,kleinberg2018alternative}.

\subsection{Nesterov’s Accelerated Gradient}\label{section4}
NAG is also widely used in modern machine learning due to its strong empirical performance \cite{nesterov1983method,sutskever2013importance}. At iteration $t$,
with stepsize $\eta_t \ge 0$ and momentum $\gamma\in[0,1)$, NAG samples an index $j_t$ i.i.d. uniformly from $\{1,\ldots,n\}$ and performs the following look-ahead update 
\begin{align}\label{eq:nag}\nonumber
\mathbf{y}_t \;=\; \mathbf{w}_t + \gamma \mathbf{m}_t,\qquad
\mathbf{g}_t \;=\; \nabla f(\mathbf{y}_t; z_{j_t}),\qquad \\
\mathbf{m}_{t+1} \;=\; \gamma \mathbf{m}_t - \eta_t\, \mathbf{g}_t,\qquad
\mathbf{w}_{t+1} \;=\; \mathbf{w}_t + \mathbf{m}_{t+1}.
\end{align}
Here $\nabla f(\mathbf{y}_t; z_{j_t})$ denotes the gradient of $f$ w.r.t. its first argument at the look-ahead point $\mathbf{y}_t$. As in the SGD case, the randomness arises from both the sample draw $S$ and the index sequence $\{j_t\}_t$. For simplicity, we also assume $\mathbf{w}_1=\mathbf{m}_1=0$ without loss of generality, since all results extend to arbitrary initialization.

Compared with SGD, the generalization analysis of NAG is more delicate because of the coupling among the iterate $\mathbf{w}_t$, the look-ahead point $\mathbf{y}_t$, and the momentum variable $\mathbf{m}_t$. Accordingly, we present our results in two stages. We first control population gradient under weak curvature, and then upgrade it to excess risk bounds by invoking the PL condition.

We begin with gradient control in the general nonconvex regime by establishing a high-probability bound for the averaged population gradient norm
 $(\sum_{t = 1}^T \eta_t )^{-1} \sum_{t = 1}^T \eta_t \| \nabla F(\mathbf{w}_t) \|^2$. 
\begin{theorem} \label{theo6nag}
Suppose Assumptions \ref{assu4}, \ref{assu7}, \ref{assu8} and \ref{assu5} hold. Let $\{ \mathbf{w}_t\}_t$ be the sequence produced by (\ref{eq:nag}) with $\eta_t = \eta_1 t^{-1/2} $ such that $\eta_1 \leq   \min\!\left\{\frac{1-\gamma}{2\sqrt{2}\,\gamma \beta},\ 
\frac{(1-\gamma)^2}{32\,C_m(\gamma, \beta)}\right\}$. Then, for any $\delta >0$, with probability $1 - \delta$, choosing $T \asymp nd^{-1} $ yields
\begin{align*}
& \Big(\sum_{t = 1}^T \eta_t \Big)^{-1} \sum_{t = 1}^T \eta_t \| \nabla F(\mathbf{w}_t) \|^2  =
             \mathcal{O} \left( \sqrt{\frac{d}{n}} \log(\frac{n}{d\delta})  \log^3(\frac{1}{\delta}) \right),
\end{align*}
where $C_m(\gamma,\beta):=\frac{1}{1-\gamma}(\beta\gamma+ \frac{\beta\gamma(1-\gamma)}{4\sqrt{2}})+\frac{\beta}{2}.$
\end{theorem}
\paragraph{Comparison with prior work.}
To our knowledge, only two prior works have investigated the generalization performance of Nesterov's accelerated gradient \cite{attia2021algorithmic,chen2018stability}. \cite{attia2021algorithmic} derives a uniform stability bound for deterministic (full-gradient) NAG under general convexity, whereas \cite{chen2018stability} develops guarantees for convex quadratic losses. The result in \cite{chen2018stability} scales as $\mathcal{O}(1/\sqrt{n})$, which is a slow rate. Overall, generalization analyses for NAG remain notably scarce compared to the extensive literature on SGD, particularly in stochastic, nonconvex settings. Although \cite{li2023high,lihigh} also study the generalization performance of momentum-based methods in stochastic, nonconvex settings, their focus is on Polyak-type momentum, with clipping in \cite{li2023high} and without clipping in \cite{lihigh}, rather than Nesterov's accelerated gradient.  Motivated by these gaps, Section \ref{section4} provides generalization analysis of stochastic NAG in nonconvex regimes. 
Relative to the SGD analysis in \cite{li2022high}, the SGD part of this manuscript is primarily a refinement, whereas the main new technical ingredient of the present manuscript is the analysis of NAG.

\paragraph{Theoretical insight.}
Theorem~\ref{theo6nag} can also be interpreted through a decomposition into a sum of an empirical-gradient term and a uniform gradient deviation term:
\begin{align*}
2 \Big(\sum_{t = 1}^T \eta_t \Big)^{-1} \sum_{t = 1}^T \eta_t \| \nabla F_S(\mathbf{w}_t) \|^2
\quad \textup{and} \quad
2 \Big(\sum_{t = 1}^T \eta_t \Big)^{-1} \sum_{t = 1}^T \eta_t \| \nabla F(\mathbf{w}_t) - \nabla F_S(\mathbf{w}_t) \|^2.
\end{align*} 
Under the stated stepsize choice, these two terms scale as
\begin{align}\label{theteocom}
\mathcal{O}\Big(\frac{1}{T^{1/2}} \log(T/\delta)\Big)  \quad \textup{and} \quad \mathcal{O}\Big(\frac{ T^{1/2}}{n} (d + \log \frac{ 1 }{\delta})\log^2(1/\delta)  \log T \Big),
\end{align}
respectively. The first term decreases as $T$ increases, whereas the second term increases along $T$. Therefore, there is an inherent tradeoff between optimization and generalization in the general nonconvex regime, and early stopping serves as an effective regularization mechanism in this upper bound. The choice of $T$ in Theorem~\ref{theo6nag} balances these two terms and yields the stated averaged population-gradient guarantee.

We next impose the PL condition on the population risk $F$ to give an average-iterate excess risk bound $(\sum_{t = 1}^T \eta_t )^{-1} \sum_{t = 1}^T \eta_t F(\mathbf{w}_{t}) - F^{\ast}$. 
\begin{theorem} \label{theo67nag}
Suppose Assumptions \ref{assu4}, \ref{assu7}, \ref{assu8} and \ref{assu5} hold,  and suppose the population risk $F$ satisfies Assumption \ref{assu10} with parameter $\mu$. Let $\{ \mathbf{w}_t\}_t$ be the sequence produced by (\ref{eq:nag}) with $\eta_t = \eta_1 t^{- 1/2}$ such that $\eta_1 \leq  \min\!\left\{\frac{1-\gamma}{2\sqrt{2}\,\gamma \beta},\ 
\frac{(1-\gamma)^2}{32\,C_m(\gamma,\beta)}\right\}$. 
 When $n \geq \frac{c\beta^2(d+ \log(\frac{16 \log(2n R +2)}{\delta}))}{\mu^2}$, for any $\delta >0$, with probability at least $1 - \delta$, choosing $T \asymp  n^4$ yields
\begin{align*}
\Big(\sum_{t = 1}^T \eta_t \Big)^{-1} \sum_{t = 1}^T \eta_t F(\mathbf{w}_{t}) - F^{\ast}  =  \mathcal{O} \left(\frac{\log^2(\frac{1}{\delta})}{n^2} + \frac{ F^{\ast}  \log(\frac{1}{\delta})}{n} \right);
\end{align*}
if further assuming $F^{\ast}  = \mathcal{O}(1/n)$, we have 
\begin{align*}
\left(\sum_{t = 1}^T \eta_t \right)^{-1} \sum_{t = 1}^T \eta_t F(\mathbf{w}_{t}) - F^{\ast} = \mathcal{O} \left( \frac{\log^2(1/\delta)}{ n^2}  \right),
\end{align*}
where $c$ is an absolute constant, and where
$C_m(\gamma,\beta):=\frac{1}{1-\gamma}(\beta\gamma+ \frac{\beta\gamma(1-\gamma)}{4\sqrt{2}})+\frac{\beta}{2}.$
\end{theorem}
 
Finally, under additional PL curvature on the empirical risk $F_S$ and stronger regularity assumptions, we derive a last-iterate excess risk bound $F(\mathbf{w}_{T+1})-F^\ast$.
\begin{theorem} \label{theo7nag}
Suppose Assumptions \ref{assum666}, \ref{assu4}, \ref{assu8} and \ref{assu5} hold, suppose the empirical risk $F_S$ satisfies Assumption \ref{assu10} with parameter $2\mu_S$, and suppose the population risk $F$ satisfies Assumption \ref{assu10} with parameter $2\mu$.
Let $\{ \mathbf{w}_t\}_t$ be the sequence produced by (\ref{eq:nag}) with $\eta_t = \frac{2}{\mu_S (t+t_0)}$ such that $t_0 >0$ for all $t \in \mathbb{N}$. When $n \geq \frac{c\beta^2(d+ \log(\frac{16 \log(2n R +2)}{\delta}))}{\mu^2}$, 
for any $\delta >0$, with probability at least $1 - \delta$, choosing $T \asymp n^2$ yields
\begin{align*}
F(\mathbf{w}_{T+1}) - F^{\ast} 
= \mathcal{O} \left(\frac{ \log^2(\frac{1}{\delta})}{n^2}  + \frac{ F^{\ast} \log(\frac{1}{\delta})}{ n} \right);
\end{align*}
if further assuming $F^{\ast}  = \mathcal{O}(1/n)$, we have 
\begin{align*}
F(\mathbf{w}_{T+1}) - F^{\ast}  = \mathcal{O} \left( \frac{ \log^2(1/\delta)}{n^2}  \right),
\end{align*}
where $c$ is an absolute constant. 
\end{theorem}

\begin{remark}\rm Theorem \ref{theo7nag} shows that the $\mathcal O(1/n^2)$-type behavior extends to the last iterate with a tailored $1/t$ stepsize schedule and PL on both $F$ and $F_S$. Compared with the averaged-risk result, the last-iterate guarantee not only strengthens the output notion but also improves the iteration complexity from $T \asymp n^4$ to $T \asymp n^2$. 
\end{remark}
\paragraph{Acceleration versus generalization.}
A natural question is whether NAG improves upon SGD not only in optimization, but also in generalization. Our results suggest that this is not necessarily the case. Although NAG is well known to accelerate optimization in deterministic convex and strongly convex settings, such acceleration does not automatically translate into a sharper population excess-risk bound (the generalization), particularly in stochastic and nonconvex regimes. In our framework, NAG matches rather than improves the order of the SGD excess-risk bound.
 \paragraph{Theoretical insight.}
Theorem \ref{theo7nag} also decomposes the excess risk into an empirical-gradient (optimization) term and a uniform gradient deviation (generalization) term, as in (\ref{afirstcom}), which is scaled as 
\begin{align}\label{thethreecom}
\mathcal{O} \Big(\frac{  \log (1/\delta)}{T} \Big) \quad \textup{and} \quad  \mathcal{O} \Big(\frac{ \log (1/\delta)}{T} +\frac{\log^2(1/\delta)}{n^2} \Big),
\end{align}
respectively. Compared to (\ref{theteocom}) of Theorem \ref{theo6nag},
the two terms in (\ref{thethreecom}) decrease as $T$ grows. This shows that, like SGD, NAG enjoys the property that under suitable curvature, improving training accuracy continually improves generalization, softening the classical argument that a model should balance under-fitting and over-fitting. 
 \paragraph{Technical ingredients.}
The central insight is an optimization-driven analysis that links generalization directly to the optimization trajectory. The analysis combines two ingredients. First, instead of the more common function-value uniform convergence, we work with uniform convergence of gradients, which aligns more naturally with stochastic optimization. To obtain fast $\mathcal{O}(1/n^2)$-type rates, we adopt a localization argument from \cite{xu2025towards}, linking uniform convergence of gradients to optimization (see Lemma \ref{erfgefge}). This directly couples generalization with the optimization accuracy that the algorithm actually achieves. A fast optimization bound can thus admit a fast uniform convergence bound.  Second, while existing results commonly provide in-expectation optimization bounds, we establish new high probability optimization bounds (i) for SGD under non-smoothness, weaker than widely used smoothness \cite{lei2021learning}, both in general nonconvex regime  and under PL (see Section \ref{optimisgd}); (ii) for NAG under smoothness, both in general nonconvex regime  and under PL (see Section \ref{proof-of-nag}). Especially for NAG, such high-probability optimization guarantees are rare even in convex settings.

\begin{figure*}[htbp]
  %\vspace*{-0.05\textheight}
  \centering
   % \hspace*{-0.8cm}
  \subfigure[Breast Cancer.]{\includegraphics[width=0.24\textwidth,height=0.18\textwidth,trim=3 0 6 3, clip]{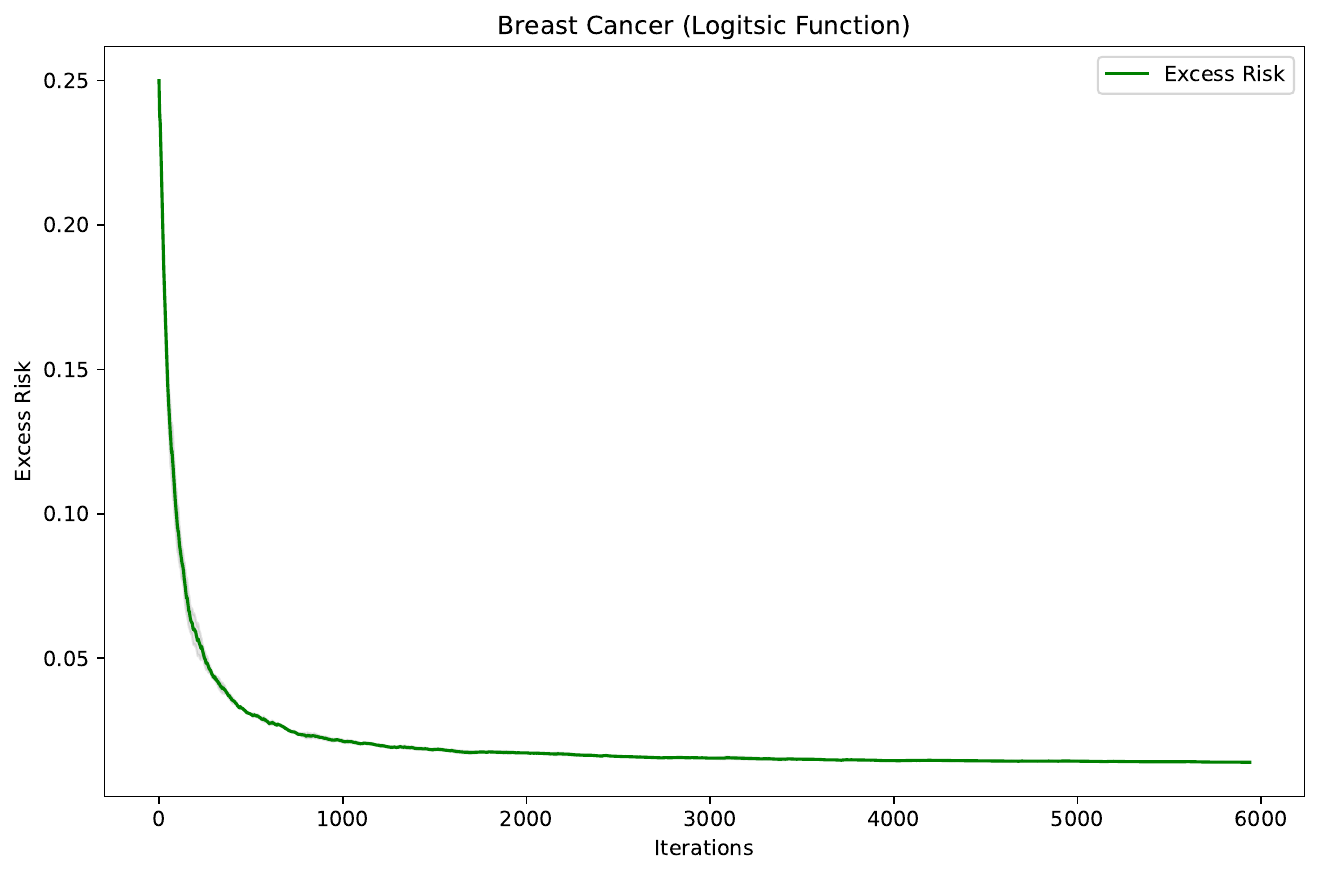}}
  \subfigure[German.]{\includegraphics[width=0.24\textwidth,height=0.18\textwidth,trim=3 0 6 3, clip]{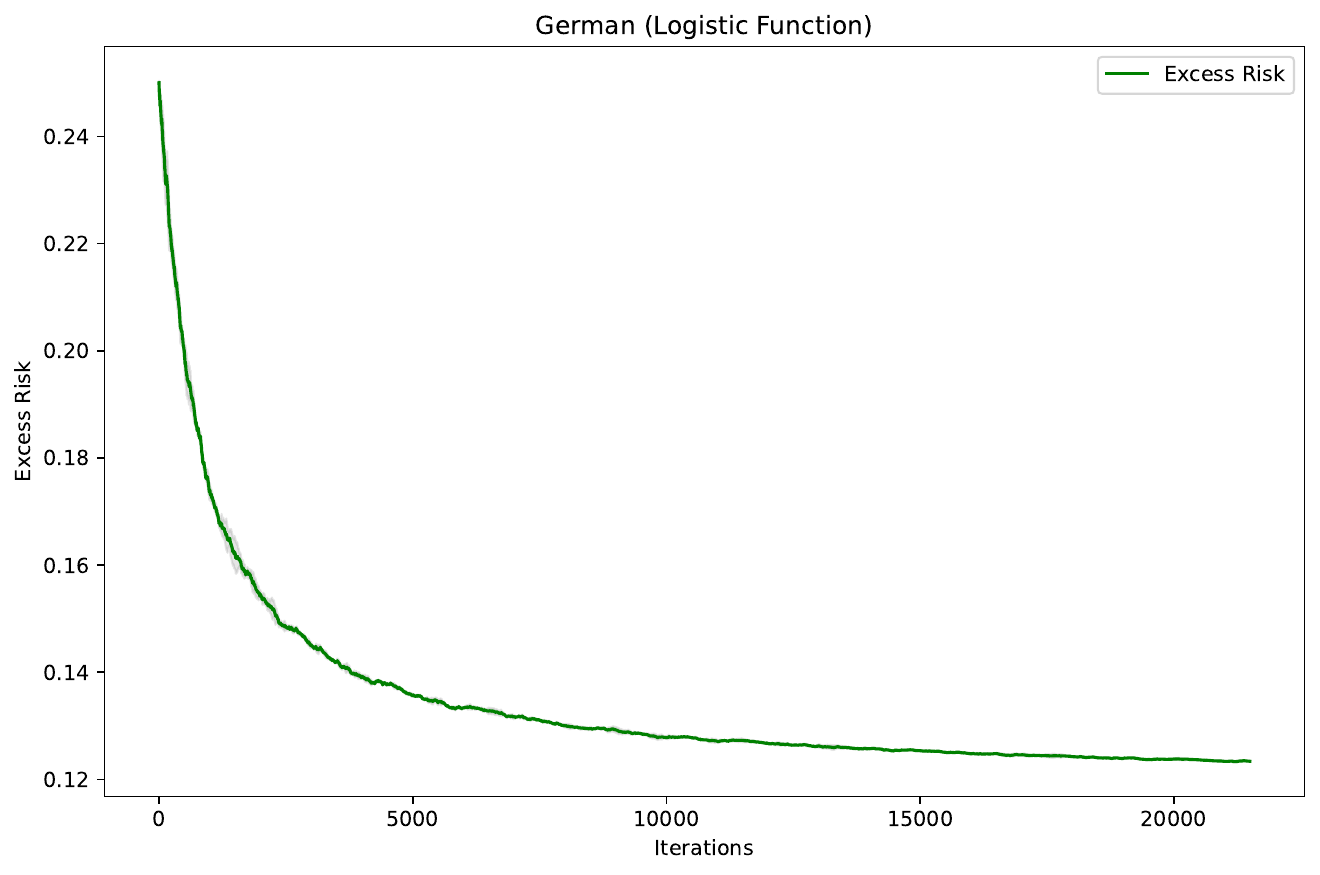}}%trim={<left> <lower> <right> <upper>}  left, bottom, right and top
  \subfigure[Heart.]{\includegraphics[width=0.24\textwidth,height=0.18\textwidth,trim=3 0 6 3, clip]{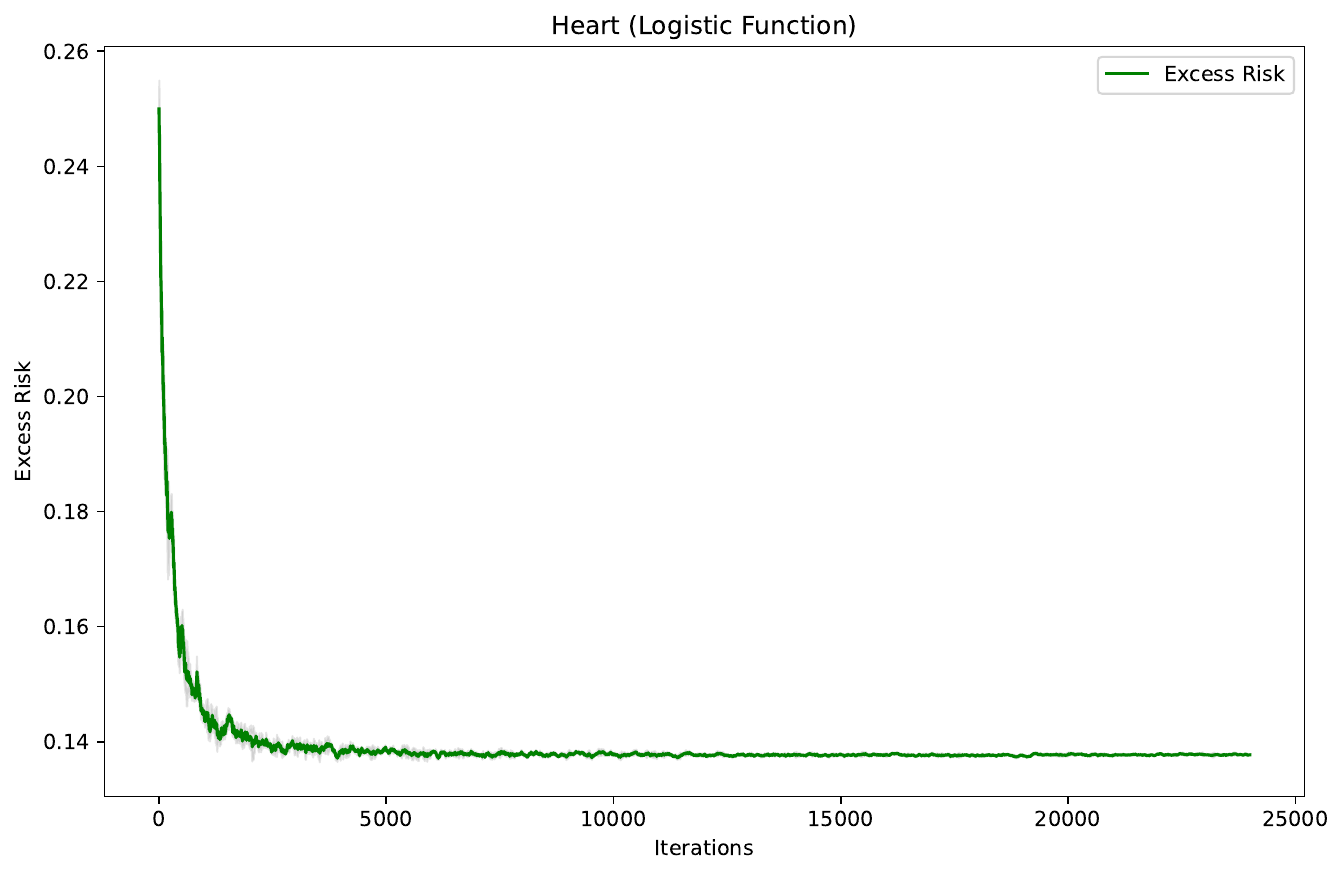}}
  \subfigure[IJCNN.]{\includegraphics[width=0.24\textwidth,height=0.18\textwidth,trim=3 0 6 3, clip]{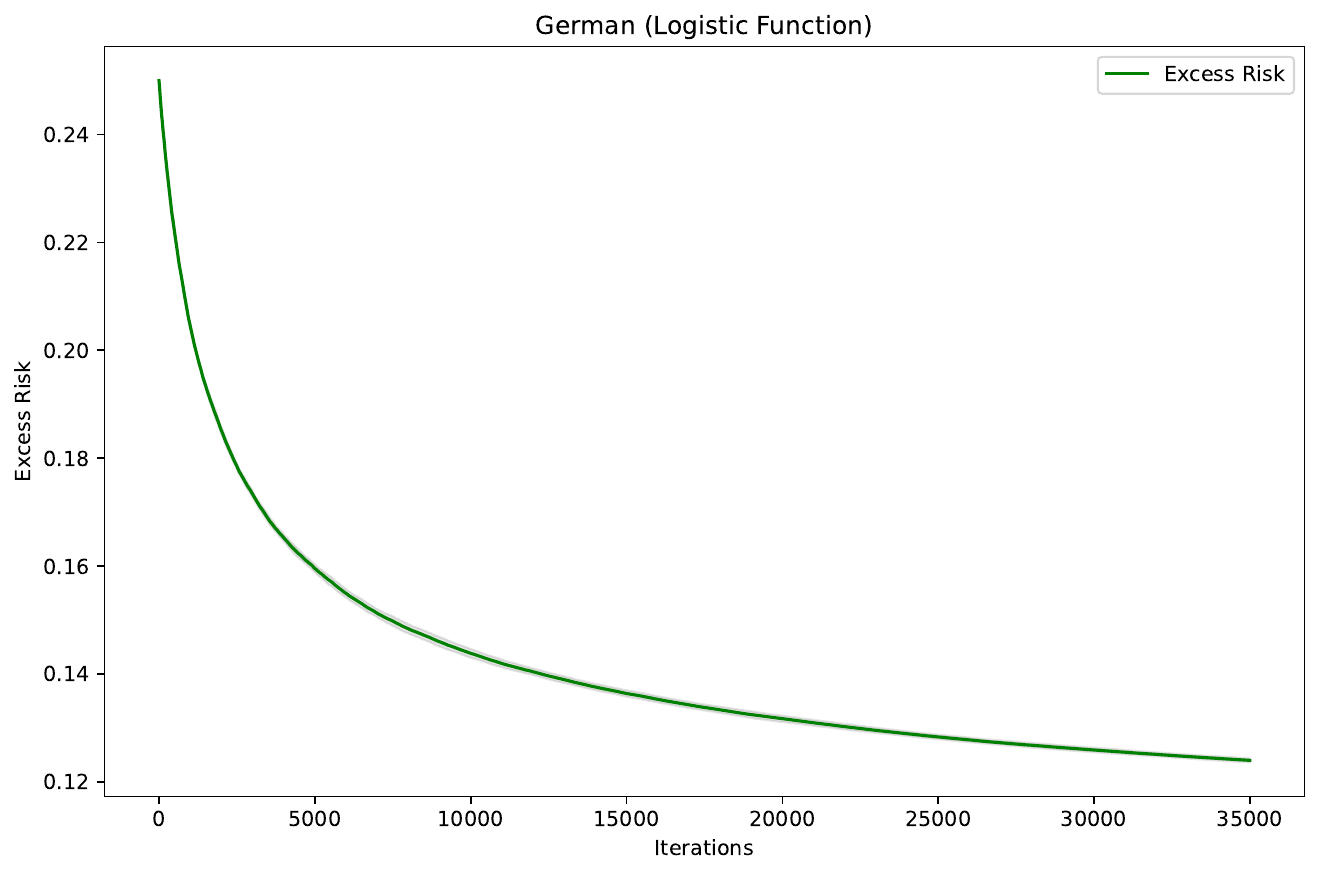}}
  \caption{The excess risk $F(\mathbf{w}) - F^{\ast}$ versus the number of iterations for the logistic link function across different datasets:  Breast-Cancer, German, Heart, and IJCNN.
  \label{fig:aera-ppjjyy}}
\end{figure*}

\begin{figure*}[htbp]
  %\vspace*{-0.05\textheight}
  \centering
   % \hspace*{-0.8cm}
 \subfigure[Breast Cancer.]{\includegraphics[width=0.24\textwidth,height=0.18\textwidth,trim=3 0 6 3, clip]{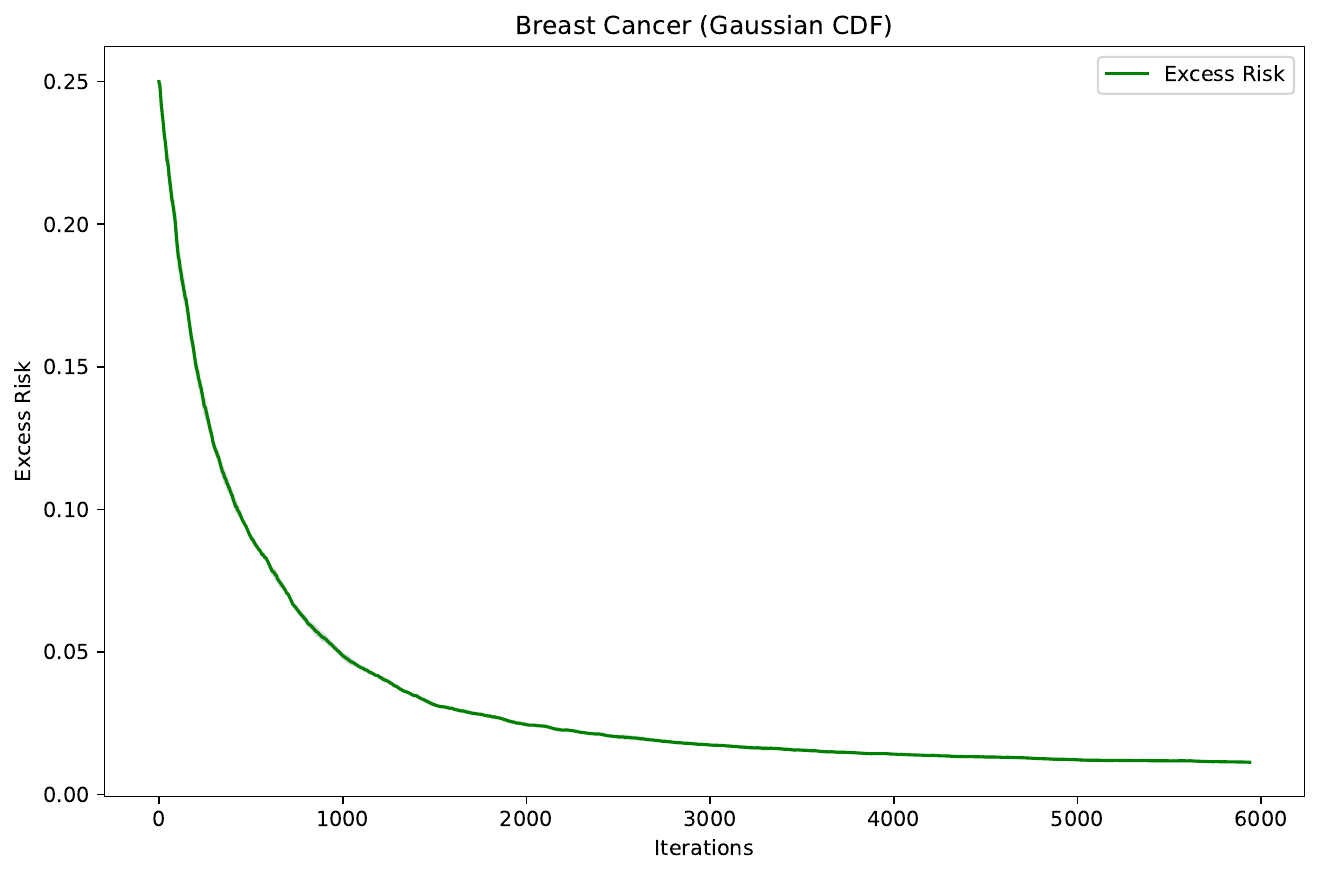}}
  \subfigure[German.]{\includegraphics[width=0.24\textwidth,height=0.18\textwidth,trim=3 0 6 3, clip]{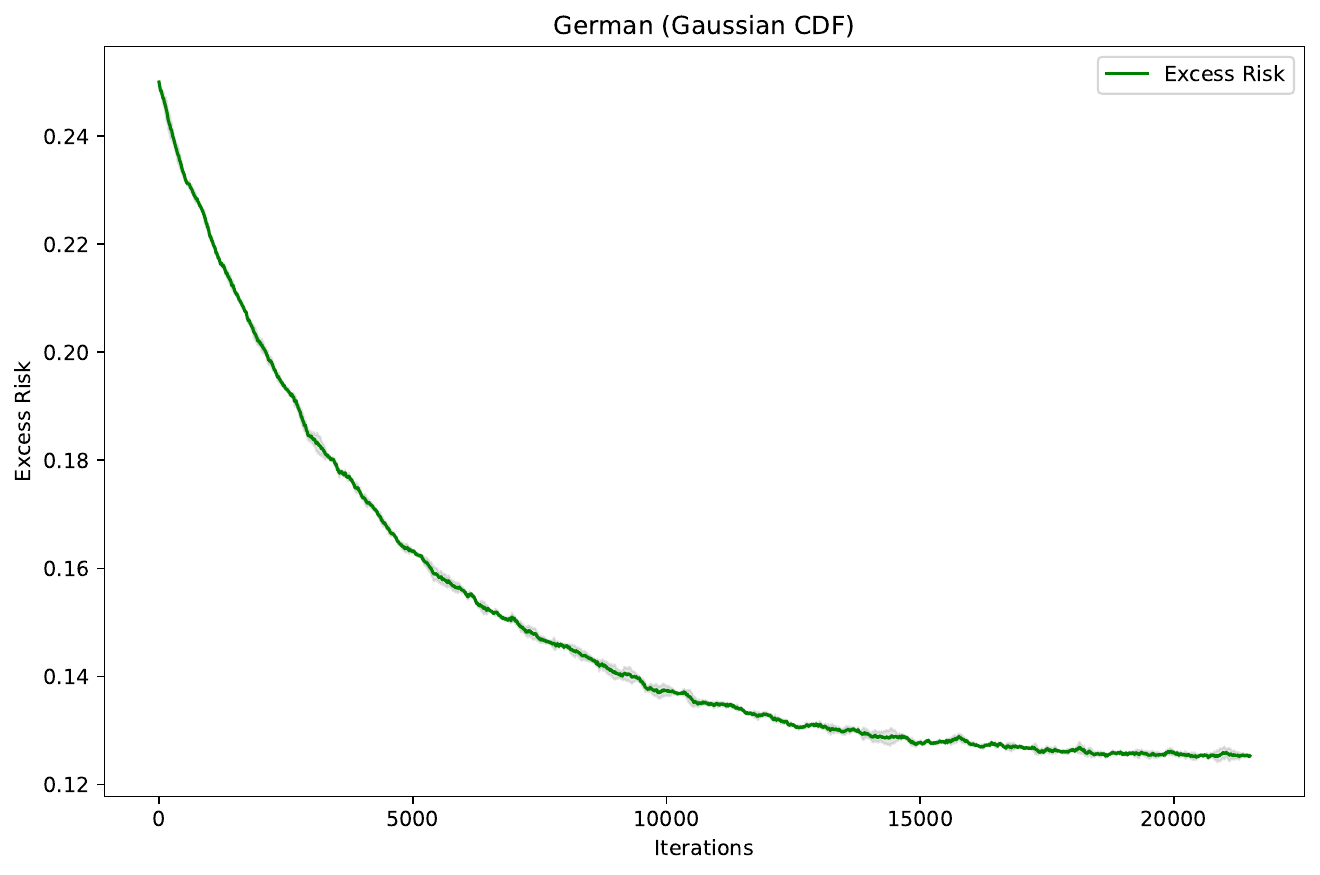}}%trim={<left> <lower> <right> <upper>}  left, bottom, right and top
  \subfigure[Heart.]{\includegraphics[width=0.24\textwidth,height=0.18\textwidth,trim=3 0 6 3, clip]{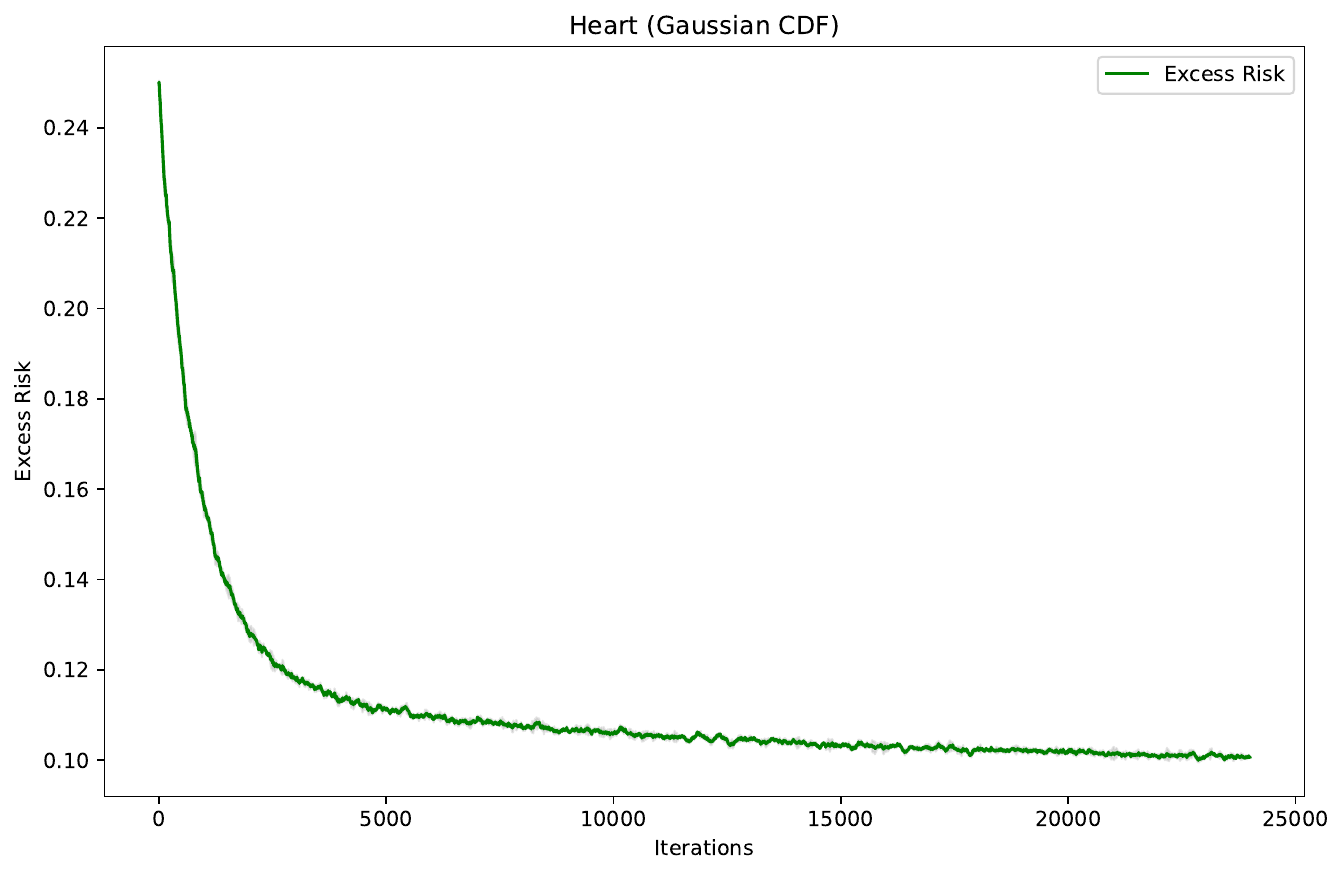}}
    \subfigure[IJCNN.]{\includegraphics[width=0.24\textwidth,height=0.18\textwidth,trim=3 0 6 3, clip]{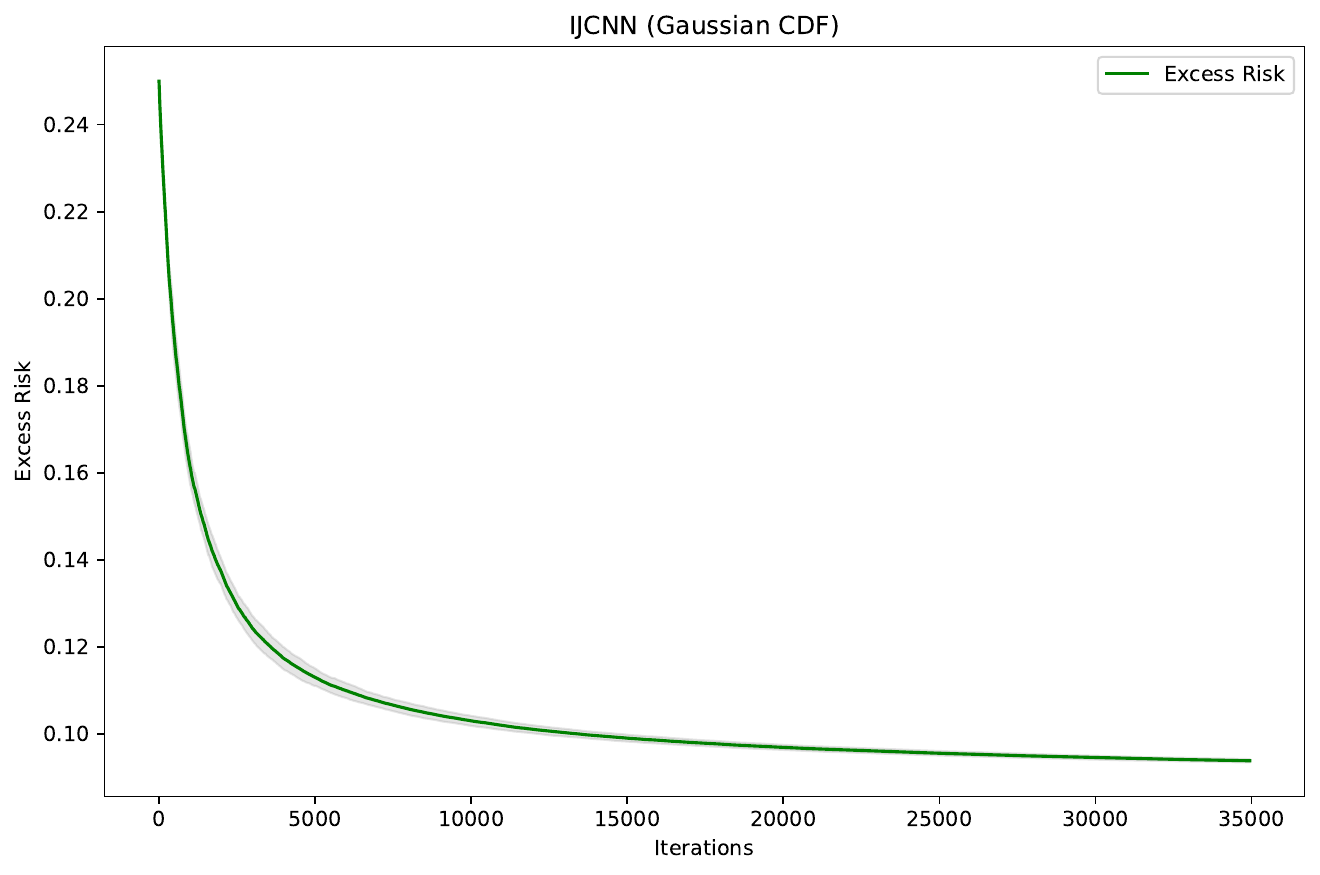}}
  \caption{The excess risk $F(\mathbf{w}) - F^{\ast}$ versus the number of iterations for the probit link function across different datasets: Breast-Cancer, German, Heart, and IJCNN.
  \label{fig:aera-ppjjyyy}}
\end{figure*}

\begin{figure}[htbp]
  %\vspace*{-0.05\textheight}
  \centering
   \hspace*{-0.2cm}
  \subfigure{\includegraphics[width=0.23\textwidth,height=0.17\textwidth,trim=3 0 6 3, clip]{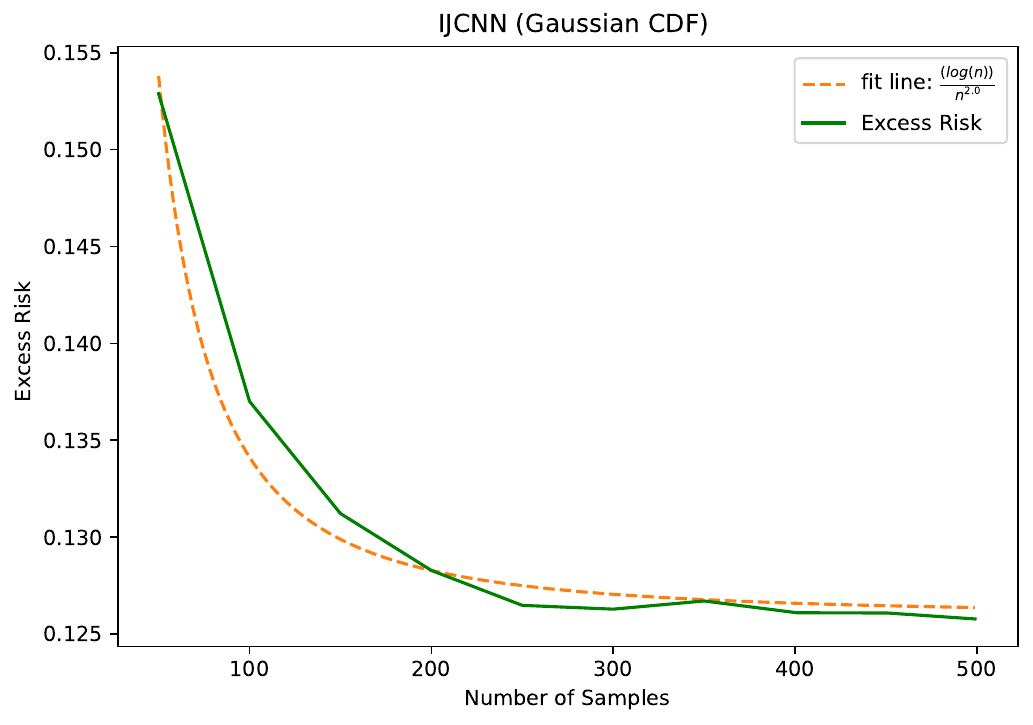}} 
  \hspace{0.1\textwidth}
  \subfigure{\includegraphics[width=0.23\textwidth,height=0.17\textwidth,trim=3 0 6 3, clip]{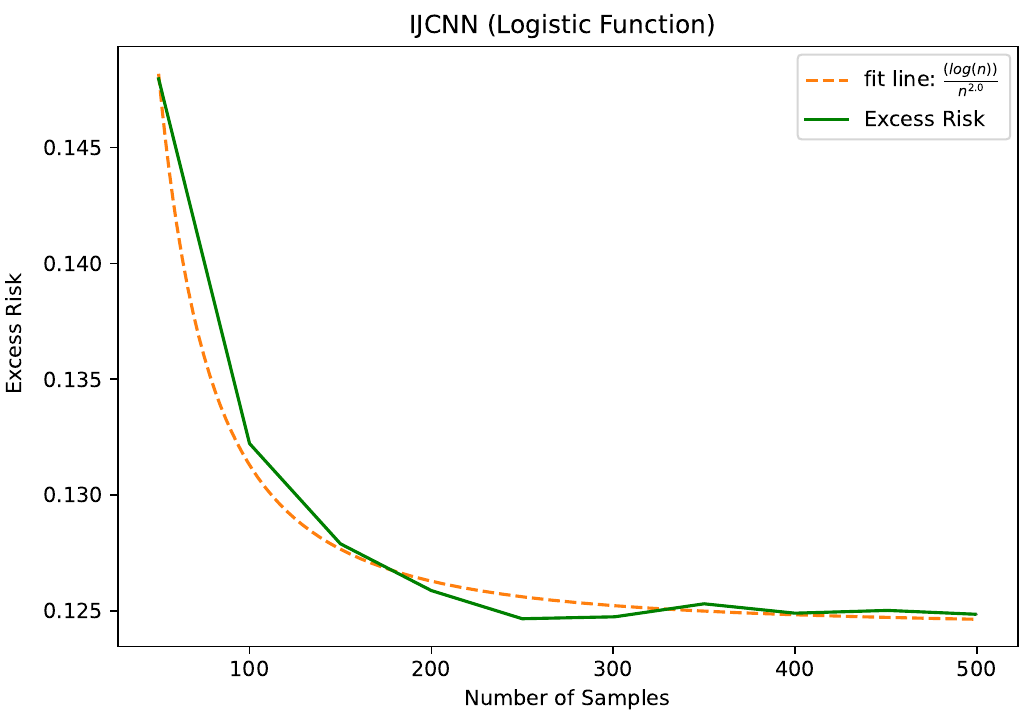}}
  \caption{The excess risk $F(\mathbf{w}) - F^{\ast}$ versus the number of samples for the probit link function (left) and the logistic link function (right) on the IJCNN dataset.
  \label{fig:aera-ppjjjyyy}}
\end{figure}

\begin{figure*}[t]
\centering

\begin{minipage}[t]{0.48\textwidth}
    \centering
    \includegraphics[width=0.48\linewidth,height=0.35\linewidth,trim=3 0 6 3,clip]{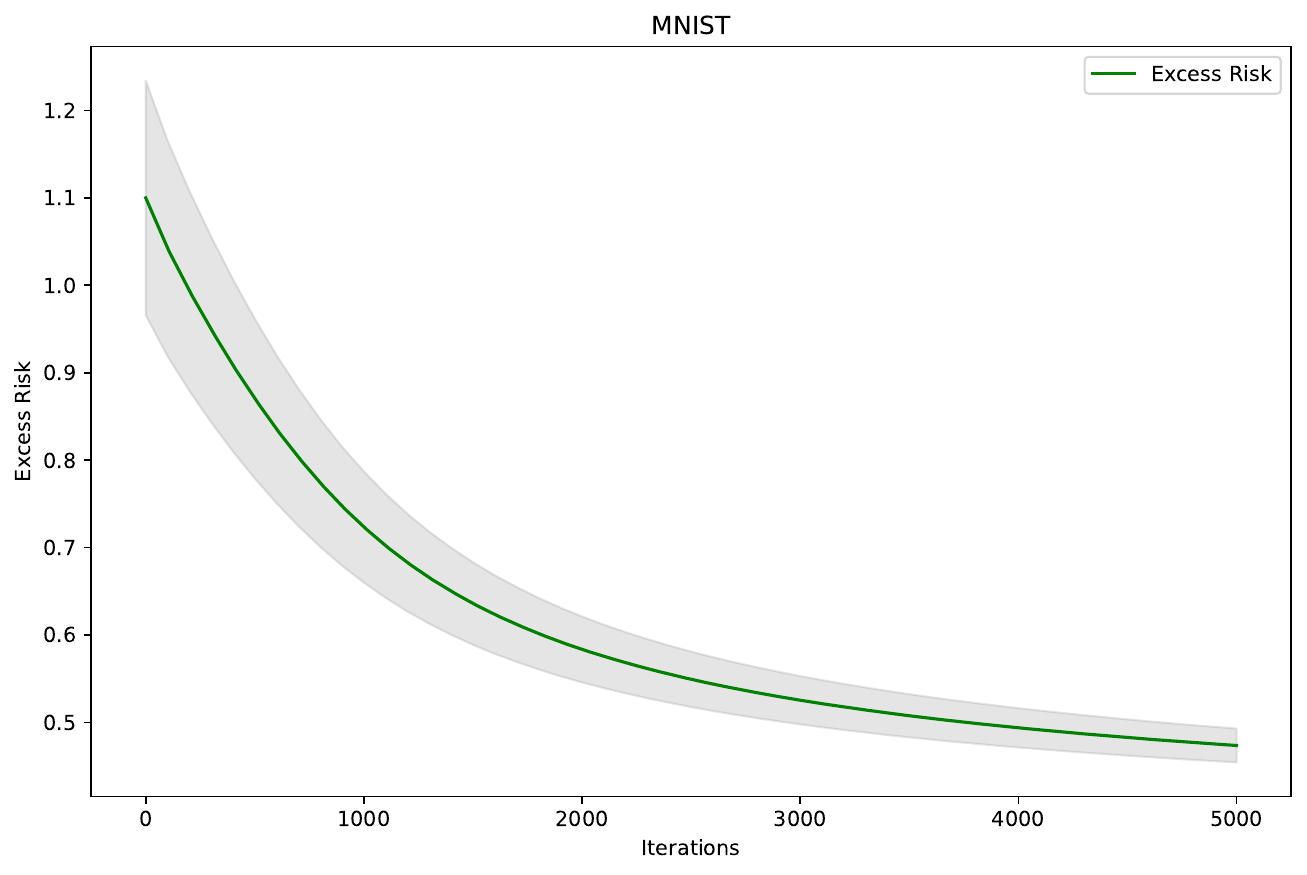}
    \hfill
    \includegraphics[width=0.48\linewidth,height=0.35\linewidth,trim=3 0 6 3,clip]{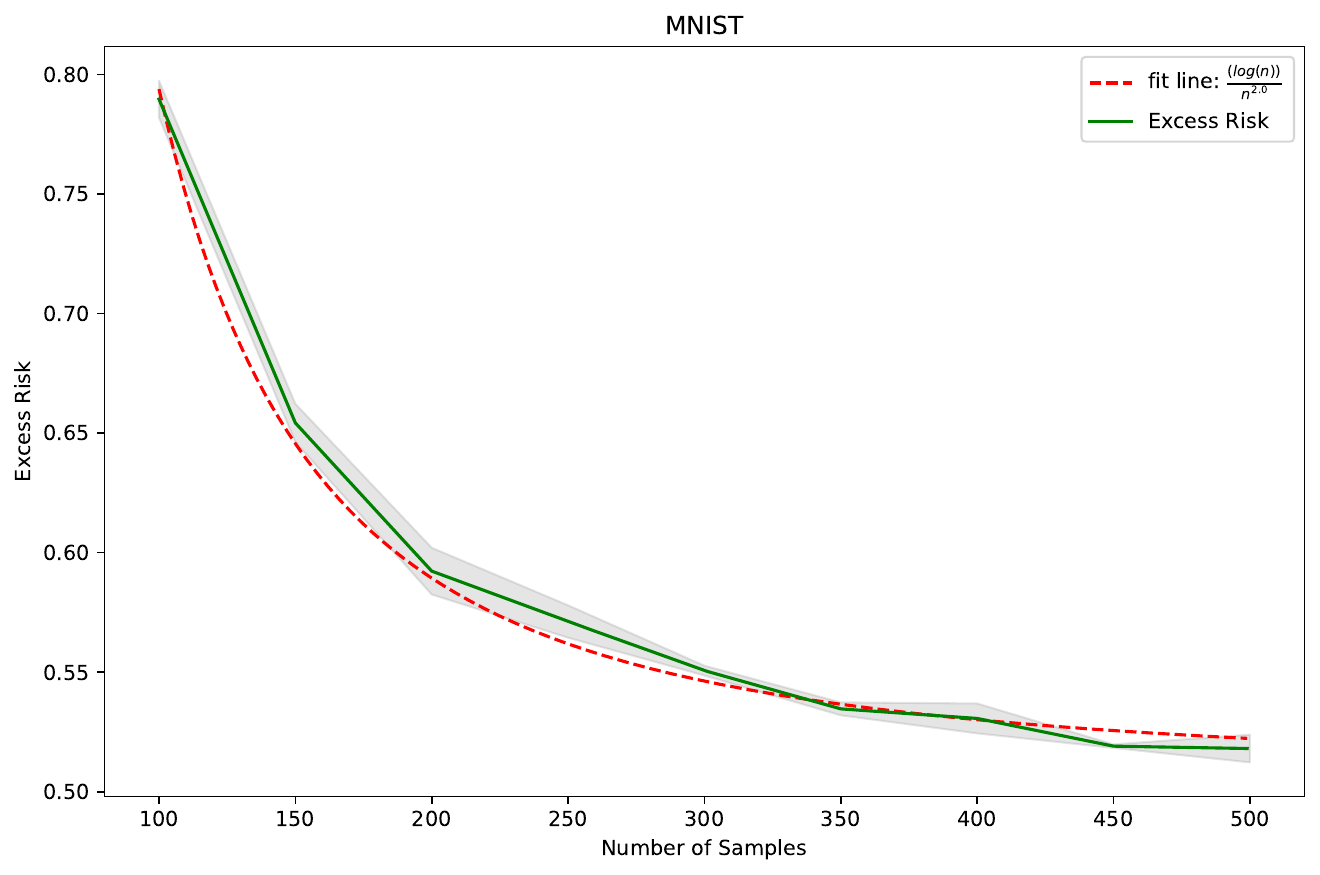}
    
    \captionof{figure}{The excess risk $F(\mathbf{w})-F^\ast$ versus the number of iterations (left) and the number of samples (right) on the MNIST dataset for image classification.}
    \label{fig:mnist-two}
\end{minipage}
\hfill
\begin{minipage}[t]{0.48\textwidth}
    \centering
    \includegraphics[width=0.48\linewidth,height=0.35\linewidth,trim=3 0 6 3,clip]{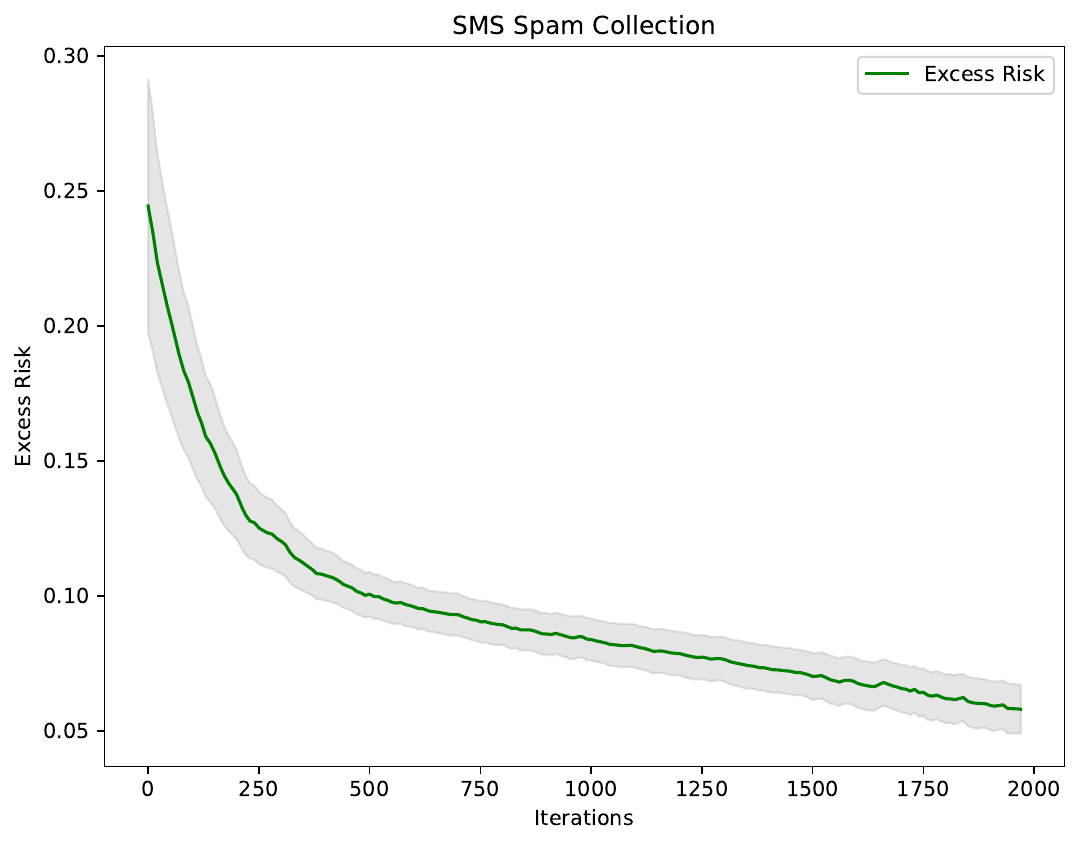}
    \hfill
    \includegraphics[width=0.48\linewidth,height=0.35\linewidth,trim=3 0 6 3,clip]{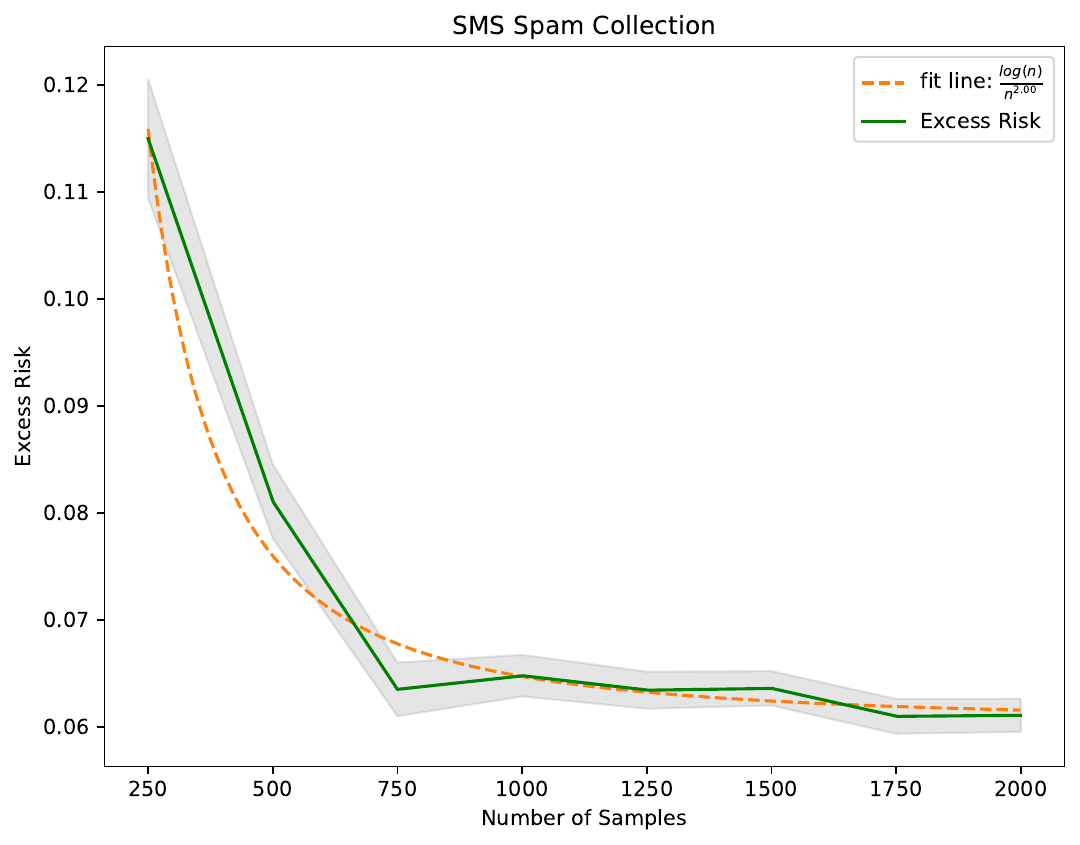}
    
    \captionof{figure}{The excess risk $F(\mathbf{w})-F^\ast$ versus the number of iterations (left) and the number of samples (right) on the SMS Spam Collection dataset for spam detection.}
    \label{fig:sms-two}
\end{minipage}

\end{figure*}

\section{Experiments}\label{sectionexp}
To empirically illustrate our theory, we present experimental results for SGD. Since the main experimental goal is to examine the qualitative behavior predicted by our theory, we use SGD as a representative stochastic optimization algorithm.

\subsection{Numerical Experiments}\label{srctionao}
Section \ref{srctionao} presents numerical experiments to validate our theory. Let $F_S(\mathbf{x})$ and $F_{S'}(\mathbf{x})$ be the risk built on the training dataset $S$ and the test dataset $S'$ respectively. Thus, $F_{S'}(\mathbf{x}) = \frac{1}{|S'|}\sum_{z \in S'} f(\mathbf{x};z)$, where $|S'|$ denotes the cardinality of the set $S'$. We use $F_{S'}(\mathbf{x})$ as a good approximation of the population risk $F$. In this section, we first consider a generalized linear model $\ell(\langle \mathbf{w}, x  \rangle)$ for binary classification where $\ell$ is the logistic link function $\ell(s) = (1+e^{-s})^{-1}$ and a square loss that takes the form $f(\mathbf{w},z) = (\ell(\langle \mathbf{w}, x  \rangle) - y)^2$. Then, we study a generalized linear model $\ell(\langle \mathbf{x}, x  \rangle)$ where $\ell$ is the probit link function $\ell(s) = \Phi(s)$ and where $\Phi$ is the Gaussian cumulative distribution function (CDF). It was shown in Theorem 3 in \cite{foster2018uniform} that the two corresponding objective functions $f$ satisfy the PL condition. We consider four datasets available from the LIBSVM dataset: Breast-Cancer, German, Heart, and IJCNN \cite{chang2011libsvm}. For  Breast-Cancer, German and Heart, we take $80$ percents as the training dataset and leave the remaining $20$ percents as the test dataset.  For IJCNN, we use its provided training dataset and test dataset \cite{chang2011libsvm}.

As discussed below Remark \ref{remark17}, Theorem \ref{theo7} implies that the generalization would improve as we increase the training accuracy and the overfitting phenomenon would not happen under the PL condition.  We aim to verify this theoretical finding. Towards this goal, we apply SGD to the training set and get the model sequence $\{\mathbf{w}_{t} \}$, and then compute the excess risk $F(\mathbf{w}) - F^{\ast}$ of $\{\mathbf{w}_{t} \}$ on the test dataset. Note that $F^{\ast}$ can be seen as a constant, thus we only need to compute $F(\mathbf{w})$. We set the stepsize as $\eta_t = 10 (t+1000)^{-1}$, repeat experiments $100$ times, and report the average of experimental results. The behavior of the excess risk versus the number of iterations is presented in Figure \ref{fig:aera-ppjjyy} and Figure \ref{fig:aera-ppjjyyy} for the logistic link function and the probit link function respectively. It is clear that the excess risk continues to decrease along the learning process. This result is consistent with the conclusion of Theorem \ref{theo7}, verifying our theoretical findings. 

Additionally, we study how the excess risk $F(\mathbf{w}) - F^{\ast}$ would behave along the number of samples. We apply SGD with the above experimental setup to evaluate the excess risk $F(\mathbf{w}) - F^{\ast}$ of $\{\mathbf{w}_{t} \}$. Following Theorem \ref{theo7}, we iterate SGD with $T = n^2$ times. The behavior of the excess risk versus the number of training samples for the probit link function and the logistic link function on the IJCNN dataset is reported in Figure \ref{fig:aera-ppjjjyyy}. From this figure, one can see that the rate of the excess risk matches the predicted rate $\log  n/n^2$ of Theorem \ref{theo7}; see the fitting line in Figure \ref{fig:aera-ppjjjyyy}.

\subsection{Applications}\label{srctionao1} 
Section \ref{srctionao1} evaluates our theory by applications to computer vision and natural language processing: image classification and spam classification respectively. Firstly, we consider Feedforward neural networks for image classification on the dataset MNIST \cite{lecun1998gradient}. We adopt a simple model consisting of one hidden layer with $128$ neurons and one output layer with $10$ neurons. We use Relu as the activate function and select the usual cross entropy loss. Additionally, we avoid the use of regularization such as dropout to make the experiments more interpretable. We apply SGD to the training set and then compute the excess risk $F(\mathbf{w}) - F^{\ast}$ of the produced model sequence $\{\mathbf{w}_{t} \}$ on the test dataset. And we follow the experimental setup in Section \ref{srctionao}.  The behavior of the excess risk versus the number of iterations and the number of training samples is presented in Figure \ref{fig:mnist-two}. Similarly, in this experiment, the excess risk continues to decrease and its rate matches the predicted one $\log  n/n^2$, see fitting line in Figure \ref{fig:mnist-two}. This is also consistent with our conclusions of Theorem \ref{theo7}. 

Secondly, we examine our theory for spam classication on the SMS Spam Collection dataset. We consider the recurrent neural network, which has a considerably different connectivity construction than the Feedforward neural network. Specifically, we use the LSTM model \footnote{ https://www.kaggle.com/code/mehmetlaudatekman/lstm-text-classification-pytorch/notebook}{}. Similarly, we use SGD as the optimizer and follow the experimental setup in Section \ref{srctionao}. In this case, the behavior of the excess risk versus the number of iterations and the number of training samples is displayed in Figure \ref{fig:sms-two}. From this figure, one can see that the excess risk continues to decrease and its rate matches the predicted one $\log  n/n^2$ to a large extent, which is also consistent with our conclusions of Theorem \ref{theo7}.

\section{Conclusion}\label{section6}
This paper studies the generalization performance of two fundamental stochastic optimization methods, SGD and NAG. We establish new learning rates for both algorithms and provide theoretical insight into how optimization dynamics interact with generalization. Several directions remain for future work. First, although fast rates cannot be obtained without additional structure, it would be valuable to relax the assumptions used in this paper while retaining comparable guarantees. Second, it would be interesting to extend our analysis to other stochastic optimization methods, such as stochastic variance-reduced methods \cite{reddi2016stochastic,allen2016variance} and stochastic coordinate descent methods \cite{wright2015coordinate}.

\bibliographystyle{abbrv}
\bibliography{sample}
%%%%%%%%%%%%%%%%%%%%%%%%%%%%%%%%%%%%%%%%%%%%%%%%%%%%%%%%%%%%

\appendix

\section{Fundamental Tools}
In this section, we provide some useful tools. 
\subsection{Uniform Convergence and Generalization}
We first introduce a basic lemma.
\begin{definition}
For every $\alpha > 0$, we define the $Orlicz-\alpha$ norm of a random $v$:
\begin{align*}
\| v \|_{Orlicz-\alpha} = \inf \{ K>0 : \mathbb{E} \exp((|v|/K)^{\alpha}) \leq 2\}.
\end{align*}
A random variable (or vector) $X \in \mathbb{R}^d$ is $K$-sub-exponential if $\forall \lambda \in \mathbb{R}^d$, we have
$\| \lambda^T X \|_{Orlicz-1} \leq K \| \lambda \|$.
A random variable (or vector) $X \in \mathbb{R}^d$ is $K$-sub-Gaussian if $\forall \lambda \in \mathbb{R}^d$, we have
$\| \lambda^T X \|_{Orlicz-2} \leq K \| \lambda \|$.
\end{definition}
\begin{lemma}[\cite{xu2025towards}]\label{lemma30}
For all $\mathbf{w}_1, \mathbf{w}_2 \in \mathcal{W}$, we assume that $\frac{\nabla f(\mathbf{w}_1;z) - \nabla f(\mathbf{w}_2;z)}{\| \mathbf{w}_1 - \mathbf{w}_2\|}$ is a $\gamma$-sub-exponential random vector, i.e., there exists $\gamma > 0$ such that for any unit vector $u \in B(0,1)$ and $\mathbf{w}_1, \mathbf{w}_2 \in \mathcal{W}$,
\begin{align*}
\mathbb{E}\left \{ \exp \left(\frac{|u^T (\nabla f (\mathbf{w}_1;z) - \nabla f (\mathbf{w}_2;z))|}{\gamma \|\mathbf{w}_1 - \mathbf{w}_2 \| } \right) \right \} \leq 2.
\end{align*}
Then $\forall \delta \in (0,1)$, with probability $1-\delta$, for all $\mathbf{w} \in \mathcal{W}$, there holds that
\begin{align*}
& \left \| (\nabla F (\mathbf{w} )-\nabla F_S(\mathbf{w}))  -  (\nabla F (\mathbf{w}^{\ast})-\nabla F_S(\mathbf{w}^{\ast})) \right \| \\\leq &c\gamma \max \left \{ \| \mathbf{w} - \mathbf{w}^{\ast} \|, \frac{1}{n} \right \} \Bigg ( \sqrt{\frac{d+ \log \frac{4 \log_2(2n R + 2)}{\delta}}{n}} + \frac{d+ \log\frac{4\log_2(2n R +2)}{\delta}}{n} \Bigg),
\end{align*}
where $c$ is an absolute constant.
\end{lemma}
The next lemma is important for our bounds.
\begin{lemma}\label{erfgefge}
Suppose Assumptions \ref{assu4} and \ref{assu5} hold.
For all $\mathbf{w} \in \mathcal{W}$ and any $\delta >0 $, with probability at least $1- \delta$,
\begin{align}\label{firstpart} 
\left\| \nabla F (\mathbf{w} )-\nabla F_S(\mathbf{w}) \right\| \leq c' \beta \max \left \{ \| \mathbf{w} - \mathbf{w}^{\ast} \|, \frac{1}{n} \right\} 
\eta   + \frac{B_{\ast}\log(4/\delta)}{n} + \sqrt{\frac{2 \mathbb{E} [ \| \nabla f(\mathbf{w}^{\ast};z) \|^2 ] \log(4/\delta)}{n}},
\end{align}
where $c'$ is an absolute constant and $\eta =  \sqrt{\frac{d + \log \frac{8 \log_2(2n R +2)}{\delta}}{n}}  +\frac{d + \log \frac{8 \log_2(2n R+2)}{\delta}}{n} $.

Further, if population risk $F$ satisfies Assumption \ref{assu10} with parameter $\mu$, then for any $\delta >0 $, when $n \geq \frac{c\beta^2(d+ \log(\frac{8 \log(2n R +2)}{\delta}))}{\mu^2}$, with probability at least $1- \delta$,
\begin{align*}
&\hphantom{{}={}}\left\| \nabla F (\mathbf{w} )- \nabla F_S(\mathbf{w}) \right\|\leq \left\| \nabla F_S(\mathbf{w}) \right\| +  \frac{\mu}{n}   + 2\frac{B_{\ast}\log(4/\delta)}{n} + 2\sqrt{\frac{2 \mathbb{E} [ \| \nabla f(\mathbf{w}^{\ast};z) \|^2 ] \log(4/\delta)}{n}},
\end{align*}
and 
\begin{align*}
&\|  \nabla F(\mathbf{w})  \| 
\leq 2\left\| \nabla F_S(\mathbf{w}) \right\| +  \frac{\mu}{n}   + 2\frac{B_{\ast}\log(4/\delta)}{n} + 2\sqrt{\frac{2 \mathbb{E} [ \| \nabla f(\mathbf{w}^{\ast};z) \|^2 ] \log(4/\delta)}{n}},
\end{align*}
where $\mathbf{w}^{\ast}$ is the minimizer of $F$ closest to $\mathbf{w}$, and where $c$ is an absolute constant.
\end{lemma}

\begin{proof}
The proof follows from \cite{xu2025towards}, but requires some changes induced by a set of different assumptions. For brevity, denote by $\eta(\delta) =  \sqrt{\frac{d+ \log \frac{4 \log_2(2n R + 2)}{\delta}}{n}} + \frac{d+ \log\frac{4\log_2(2n R +2)}{\delta}}{n} $.
According to Assumption \ref{assu4}, for any $z \in \mathcal{Z}$ and $\mathbf{w}_1, \mathbf{w}_2 \in \mathcal{W}$, there holds
\begin{align*}
\|\nabla f(\mathbf{w}_1;z) - \nabla f(\mathbf{w}_2;z) \| \leq \beta \| \mathbf{w}_1 - \mathbf{w}_2 \|.
\end{align*}
For any unit vector $u \in B(0,1)$, 
we have 
\begin{align*}
&|u^T (\nabla f(\mathbf{w}_1;z) - \nabla f(\mathbf{w}_2;z) )|  \leq \| u\| \|\nabla f(\mathbf{w}_1;z) - \nabla f(\mathbf{w}_2;z) \| 
\leq \beta \| \mathbf{w}_1 - \mathbf{w}_2 \|. 
\end{align*}
Then we have 
\begin{align*}
\frac{|u^T (\nabla f(\mathbf{w}_1;z) - \nabla f(\mathbf{w}_2;z))|}{\beta \| \mathbf{w}_1 - \mathbf{w}_2 \|} \leq 1, 
\end{align*}
which implies
\begin{align*}
\mathbb{E} \left \{\exp \left (\frac{(\log 2) |u^T (\nabla f(\mathbf{w}_1;z) - \nabla f(\mathbf{w}_2;z))|}{\beta \| \mathbf{w}_1 - \mathbf{w}_2 \|} \right ) \right \}\leq 2, 
\end{align*}
so we obtain that for all $\mathbf{w}_1, \mathbf{w}_2 \in \mathcal{W}$, $\frac{ \nabla f(\mathbf{w}_1;z) - \nabla f(\mathbf{w}_2;z) }{\| \mathbf{w}_1 - \mathbf{w}_2 \|_2} $ is a $\frac{\beta}{\log 2}$-sub-exponential random vector. By Lemma \ref{lemma30}, we know that if Assumption \ref{assu4} holds, for $\forall \delta \in (0,1)$ and all $\mathbf{w} \in \mathcal{W}$,  we have the following inequality with probability at least $1 - \delta$
\begin{align} \label{ineq 4}
&\hphantom{{}={}}\left \| (\nabla F (\mathbf{w} )-\nabla F_S(\mathbf{w}))  -  (\nabla F (\mathbf{w}^{\ast} )-\nabla F_S(\mathbf{w}^{\ast})) \right \| \leq c \frac{\beta}{\log 2} \max \left \{ \| \mathbf{w} - \mathbf{w}^{\ast} \|, \frac{1}{n} \right \} \eta(\delta),
\end{align}
which means that there exists an absolute constant $c'$ such that $\forall \delta > 0$, with probability at least $1 - \delta/2$, there holds 
\begin{align}\label{eq127} 
&\left\| \nabla F (\mathbf{w})-\nabla F_S(\mathbf{w}) \right\| - \left\| \nabla F (\mathbf{w}^{\ast})-\nabla F_S(\mathbf{w}^{\ast}) \right \| \leq c' \beta \max \left \{ \| \mathbf{w} - \mathbf{w}^{\ast} \|, \frac{1}{n} \right\} \eta(\delta/2).
\end{align}
Then, using Lemma \ref{vector} (vector Bernstein inequality) and the fact $\nabla F (\mathbf{w}^{\ast}) = \mathbf{0}$, under Assumption \ref{assu5}, we have the following inequality with probability at least $1- \frac{\delta}{2}$
\begin{align}\label{eq128} 
&\left\| \nabla F (\mathbf{w}^{\ast} )-\nabla F_S(\mathbf{w}^{\ast}) \right \| \leq \frac{B_{\ast}\log(4/\delta)}{n} + \sqrt{\frac{2 \mathbb{E} [ \| \nabla f(\mathbf{w}^{\ast};z) \|^2 ] \log(4/\delta)}{n}}.
\end{align}
Combining (\ref{eq128}) and (\ref{eq127}), we obtain the following inequality with probability at least $1- \delta$
\begin{align}\label{ineq 12} 
\left\| \nabla F (\mathbf{w} )-\nabla F_S(\mathbf{w}) \right\| \leq c' \beta \max \left \{ \| \mathbf{w} - \mathbf{w}^{\ast} \|, \frac{1}{n} \right\}  \eta(\delta/2)  + \frac{B_{\ast}\log(4/\delta)}{n} + \sqrt{\frac{2 \mathbb{E} [ \| \nabla f(\mathbf{w}^{\ast};z) \|^2 ] \log(4/\delta)}{n}}.
\end{align}
This completes the proof of (\ref{firstpart}).

From (\ref{ineq 12}), we get the following inequality with probability at least $1- \delta$,
\begin{align}\label{129} \nonumber
 &\left\| \nabla F (\mathbf{w} )\right\| - \left\| \nabla F_S(\mathbf{w}) \right\| \leq\left\| \nabla F (\mathbf{w} )-\nabla F_S(\mathbf{w}) \right\|  
 \\\leq& c' \beta  \left (\| \mathbf{w} - \mathbf{w}^{\ast} \| + \frac{1}{n} \right ) \eta(\delta/2)  + \frac{B_{\ast}\log(4/\delta)}{n} + \sqrt{\frac{2 \mathbb{E} [ \| \nabla f(\mathbf{w}^{\ast};z) \|^2 ] \log(4/\delta)}{n}}.
\end{align}
According to Lemma \ref{appendixa}, we know that the PL condition of $F$ imply that for all $\mathbf{w} \in \mathcal{W}$, there holds 
\begin{align}\label{130}
\|  \nabla F(\mathbf{w})  \| \geq \mu \| \mathbf{w} - \mathbf{w}^{\ast} \|,
\end{align}
where $\mathbf{w}^{\ast}$ is the minimizer of $F$ closest to $\mathbf{w}$. 
Thus, combining (\ref{129}) and (\ref{130}), 
there holds the following inequality with probability at least $1- \delta$
\begin{align*}
\mu \| \mathbf{w} - \mathbf{w}^{\ast} \| \leq \|  \nabla F(\mathbf{w})  \|  \leq  \left\| \nabla F_S(\mathbf{w}) \right\| + c' \beta  \left (\| \mathbf{w} - \mathbf{w}^{\ast} \| + \frac{1}{n} \right )  \eta(\delta/2)   + \frac{B_{\ast}\log(4/\delta)}{n} + \sqrt{\frac{2 \mathbb{E} [ \| \nabla f(\mathbf{w}^{\ast};z) \|^2 ] \log(4/\delta)}{n}}.
\end{align*}
Let $c = \max\{ 4{c'}^2, 1\}$. When 
\begin{align*}
n \geq \frac{c\beta^2(d+ \log(\frac{8 \log(2n R +2)}{\delta}))}{\mu^2},
\end{align*}
we have $c' \beta \eta(\delta/2)  \leq \mu/2$, followed from the fact $\frac{\mu}{\beta} \leq 1$ \cite{nesterov2003introductory}. 
So we have
\begin{align}\label{132} 
&\| \mathbf{w} - \mathbf{w}^{\ast} \| \leq \frac{2}{\mu} \Big(\left\| \nabla F_S(\mathbf{w}) \right\| + \frac{B_{\ast}\log(4/\delta)}{n}  + \sqrt{\frac{2 \mathbb{E} [ \| \nabla f(\mathbf{w}^{\ast};z) \|^2 ] \log(4/\delta)}{n}} + \frac{\mu}{2n} \Big).
\end{align}
Plugging (\ref{132}) into (\ref{ineq 12}), we obtain that 
% for all $\mathbf{w} \in \mathcal{W}$ and its closest $\mathbf{w}^{\ast}$, 
when $n \geq \frac{c\beta^2(d+ \log(\frac{8 \log(2n R +2)}{\delta}))}{\mu^2}$, with probability at least $1- \delta$,
\begin{align*}
 \left\| \nabla F (\mathbf{w} )- \nabla F_S(\mathbf{w}) \right\| \leq \left\| \nabla F_S(\mathbf{w}) \right\| +  \frac{\mu}{n}  + 2\frac{B_{\ast}\log(4/\delta)}{n} + 2\sqrt{\frac{2 \mathbb{E} [ \| \nabla f(\mathbf{w}^{\ast};z) \|^2 ] \log(4/\delta)}{n}}.
\end{align*}
Plugging (\ref{132}) into (\ref{129}), we obtain that 
when $n \geq \frac{c\beta^2(d+ \log(\frac{8 \log(2n R +2)}{\delta}))}{\mu^2}$, with probability at least $1- \delta$,
\begin{align*}
&\|  \nabla F(\mathbf{w})  \| 
\leq 2\left\| \nabla F_S(\mathbf{w}) \right\| +  \frac{\mu}{n}  + 2\frac{B_{\ast}\log(4/\delta)}{n} + 2\sqrt{\frac{2 \mathbb{E} [ \| \nabla f(\mathbf{w}^{\ast};z) \|^2 ] \log(4/\delta)}{n}}.
\end{align*}
The proof is complete.
\end{proof}

\subsection{Auxiliary Lemmas}
The following lemma provides relationships between the commonly used curvature conditions in stochastic optimization.
\begin{lemma}[Appendix A in \cite{karimi2016linear}]\label{appendixa}
Let $F(\mathbf{w})$ be differential, and assume that $\mathcal{W}_{\ast}$ be a non-empty solution set of $\arg \min_{\mathbf{w} \in \mathcal{W}} F(\mathbf{w})$. For any $\mathbf{w} \in \mathcal{W}$, let $\mathbf{w}^{\ast} = \arg \min_{\mathbf{u} \in \mathcal{W}_{\ast}} \| \mathbf{u} - \mathbf{w}\|$ denote an optimal solution closest to $\mathbf{w}$.

\noindent (1) Polyak-Lojasiewise (PL.) \cite{polyak1963gradient,lojasiewicz1963topological}: for all $\mathbf{w} \in \mathcal{W}$ we have 
\begin{align*}
F(\mathbf{w}) - F(\mathbf{w}^{\ast}) \leq \frac{1}{2\mu} \| \nabla F(\mathbf{w}) \|^2.
\end{align*}

\noindent (2) Error Bound (EB.) \cite{luo1993error}: for all $\mathbf{w} \in \mathcal{W}$ we have 
\begin{align*}
\| \nabla F(\mathbf{w}) \| \geq \mu \| \mathbf{w} -  \mathbf{w}^{\ast} \|.
\end{align*}

\noindent (3) Quadratic Growth (QG.) \cite{anitescu2000degenerate}: for all $\mathbf{w} \in \mathcal{W}$ we have 
\begin{align*}
F(\mathbf{w}) - F(\mathbf{w}^{\ast}) \geq \frac{\mu}{2} \| \mathbf{w} -  \mathbf{w}^{\ast} \|^2.
\end{align*}

\noindent There holds that:
\begin{align*}
 (PL) \rightarrow (QG), \ \quad  (PL) \rightarrow (EB) . 
\end{align*}
\end{lemma}
\begin{lemma}[\cite{ying2017unregularized}]\label{xinjiade}
Let $f$ be a differentiable function. Let $\alpha \in (0,1]$ and $P>0$. If Assumption \ref{assu4-5} holds, for any $\mathbf{w}_1, \mathbf{w}_2 \in \mathcal{W}$ and $z \in \mathcal{Z}$, then we have 
\begin{align*}
&f(\mathbf{w}_1;z) - f(\mathbf{w}_2;z)  \leq \langle \mathbf{w}_1 - \mathbf{w}_2, \nabla f(\mathbf{w}_2;z) \rangle + \frac{P\|\mathbf{w}_1 - \mathbf{w}_2  \|^{1+\alpha}}{1+\alpha}.
\end{align*}
\end{lemma}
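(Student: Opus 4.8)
The plan is to derive this descent-type inequality for $\alpha$-H\"older smooth functions by the standard route: integrate the gradient along the segment joining $\mathbf{w}_2$ to $\mathbf{w}_1$ and control the integrand using Assumption \ref{assu4-5}. Concretely, for fixed $z$ and $\mathbf{w}_1,\mathbf{w}_2 \in \mathcal{W}$ I would set $\phi(t) = f(\mathbf{w}_2 + t(\mathbf{w}_1-\mathbf{w}_2),z)$ for $t \in [0,1]$, which is differentiable with $\phi'(t) = \langle \nabla f(\mathbf{w}_2 + t(\mathbf{w}_1-\mathbf{w}_2),z), \mathbf{w}_1-\mathbf{w}_2\rangle$. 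Then
\begin{align*}
f(\mathbf{w}_1,z) - f(\mathbf{w}_2,z) = \phi(1) - \phi(0) = \int_0^1 \phi'(t)\, dt = \int_0^1 \langle \nabla f(\mathbf{w}_2 + t(\mathbf{w}_1-\mathbf{w}_2),z), \mathbf{w}_1-\mathbf{w}_2\rangle\, dt.
\end{align*}

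The next step is to subtract and add the base-point gradient term: rewrite the right-hand side as
\begin{align*}
\langle \nabla f(\mathbf{w}_2,z), \mathbf{w}_1-\mathbf{w}_2\rangle + \int_0^1 \langle \nabla f(\mathbf{w}_2 + t(\mathbf{w}_1-\mathbf{w}_2),z) - \nabla f(\mathbf{w}_2,z), \mathbf{w}_1-\mathbf{w}_2\rangle\, dt.
\end{align*}
By Cauchy--Schwarz and Assumption \ref{assu4-5}, the integrand is bounded by $\|\nabla f(\mathbf{w}_2 + t(\mathbf{w}_1-\mathbf{w}_2),z) - \nabla f(\mathbf{w}_2,z)\|\cdot \|\mathbf{w}_1-\mathbf{w}_2\| \le P\, \|t(\mathbf{w}_1-\mathbf{w}_2)\|^{\alpha}\,\|\mathbf{w}_1-\mathbf{w}_2\| = P\, t^{\alpha}\, \|\mathbf{w}_1-\mathbf{w}_2\|^{1+\alpha}$. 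Integrating $t^{\alpha}$ over $[0,1]$ gives the factor $\frac{1}{1+\alpha}$, which yields exactly the claimed bound
\begin{align*}
f(\mathbf{w}_1,z) - f(\mathbf{w}_2,z) \leq \langle \mathbf{w}_1 - \mathbf{w}_2, \nabla f(\mathbf{w}_2,z) \rangle + \frac{P\|\mathbf{w}_1 - \mathbf{w}_2\|^{1+\alpha}}{1+\alpha}.
\end{align*}

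There is essentially no serious obstacle here; the only points requiring a little care are the measurability/integrability of $t \mapsto \phi'(t)$ (which follows from differentiability of $f$ in its first argument together with continuity of $\nabla f$ implied by the H\"older bound) and the fact that the segment $\mathbf{w}_2 + t(\mathbf{w}_1-\mathbf{w}_2)$ stays in $\mathcal{W}$, which holds because $\mathcal{W}$ is convex (it is a ball in the relevant sections). Since this lemma is quoted from \citep{ying2017unregularized}, I would present the short self-contained argument above rather than reproving any auxiliary facts.
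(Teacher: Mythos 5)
Your argument is the standard integral-remainder proof of the descent inequality for H\"older-smooth functions, and it is correct: the fundamental theorem of calculus along the segment, adding and subtracting the base-point gradient, Cauchy--Schwarz plus Assumption~\ref{assu4-5} giving the integrand bound $P t^{\alpha}\|\mathbf{w}_1-\mathbf{w}_2\|^{1+\alpha}$, and integrating $t^{\alpha}$ to obtain $\frac{1}{1+\alpha}$ all check out. The paper does not supply a proof (it quotes the lemma from \citep{ying2017unregularized}), but the route you take is the standard one and matches that reference's argument.
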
 
\begin{lemma}[Geometric reordering; \cite{li2020high}]\label{lem:reorder}
For sequences $\{a_t\}_{t\ge1},\{b_t\}_{t\ge1}$ and any integer $T\ge1$,
\[
\sum_{t=1}^{T} a_t\sum_{i=1}^{t} b_i=\sum_{t=1}^{T} b_t\sum_{i=t}^{T} a_i;
\qquad
\sum_{t=1}^{T} a_t\sum_{i=0}^{t-1} b_i=\sum_{t=1}^{T-1} b_t\sum_{i=t+1}^{T} a_i .
\]
\end{lemma}
\begin{lemma}[\cite{lei2021learning}]\label{lei32}
Let $e$ be the base of the natural logarithm.
There holds the following elementary inequalities.

\noindent (a) If $\theta \in (0,1)$, then $\sum_{k =1}^t k^{-\theta} \leq t^{1-\theta}/(1-\theta)$;

\noindent (b) If $\theta = 1$, then $\sum_{k =1}^t k^{-\theta} \leq \log (et)$;

\noindent (c) If $\theta > 1$, then $\sum_{k =1}^t k^{-\theta} \leq \frac{\theta}{\theta - 1}$.
\end{lemma}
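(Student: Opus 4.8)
The plan is to establish all three inequalities by the elementary integral comparison test, exploiting that the map $x\mapsto x^{-\theta}$ is positive and strictly decreasing on $(0,\infty)$ for every $\theta>0$. The one observation that drives everything is that for any integer $k\ge 2$ we have $k^{-\theta}\le\int_{k-1}^{k}x^{-\theta}\,dx$, since on the interval $[k-1,k]$ the integrand dominates its right endpoint value $k^{-\theta}$; moreover, when $\theta\in(0,1)$ the improper integral $\int_0^1 x^{-\theta}\,dx$ converges, so in that regime even the $k=1$ term is controlled by an integral, namely $1\le\int_0^1 x^{-\theta}\,dx$.

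For part (a), I would sum the bound $k^{-\theta}\le\int_{k-1}^{k}x^{-\theta}\,dx$ over $k=1,\dots,t$, telescoping the integrals into $\int_0^t x^{-\theta}\,dx$, which equals $t^{1-\theta}/(1-\theta)$ and is finite precisely because $\theta<1$ (so that $x^{1-\theta}\to 0$ as $x\to 0^+$). For part (b), I would peel off the first term and write $\sum_{k=1}^t k^{-1}=1+\sum_{k=2}^t k^{-1}\le 1+\int_1^t x^{-1}\,dx=1+\log t$, and then rewrite $1+\log t=\log e+\log t=\log(et)$. For part (c), I would again peel off the first term, bound the remaining sum by $\int_1^t x^{-\theta}\,dx=\frac{1-t^{1-\theta}}{\theta-1}$, and use $t^{1-\theta}>0$ together with $\theta>1$ to conclude $\sum_{k=1}^t k^{-\theta}\le 1+\frac{1}{\theta-1}=\frac{\theta}{\theta-1}$.

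There is no real obstacle in this argument; the only point requiring a moment's care is the differing treatment of the $k=1$ term across the three regimes: it can be absorbed into the integral $\int_0^t$ when $\theta<1$, but must be separated out and estimated by the constant $1$ when $\theta\ge 1$, since $\int_0^1 x^{-\theta}\,dx$ diverges there. As this is a purely elementary auxiliary fact quoted from \citep{lei2021learning}, the write-up is short and the details are routine.
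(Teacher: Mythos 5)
Your proof is correct and uses the standard integral comparison argument, which is the natural route for these elementary sum estimates. The paper itself gives no proof here (the lemma is quoted directly from \citep{lei2021learning}), so there is nothing to compare against; your derivation — absorbing the $k=1$ term into $\int_0^t x^{-\theta}\,dx$ when $\theta<1$ and peeling it off as the constant $1$ when $\theta\ge 1$ — is precisely the routine treatment, and all three cases check out.
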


\begin{lemma}[\cite{lei2021learning}]\label{lemma36}
Let $z_1,...,z_n$ be a sequence of randoms variables such that $z_k$ may depend the previous variables $z_1,...,z_{k-1}$ for all $k = 1,...,n$. Consider a sequence of functionals $\xi_k(z_1,...,z_k)$, $k=1,...,n$. Let $\sigma_n^2 = \sum_{k=1}^n \mathbb{E}_{z_k} [ (\xi_k - \mathbb{E}_{z_k}[\xi_k])^2]$ be the conditional variance.

\noindent (a) Assume $|\xi_k - \mathbb{E}_{z_k}[\xi_k] | \leq b_k$ for each $k$. Let $\delta \in (0,1)$. With probability at least $1-\delta$
\begin{align*}
\sum_{k=1}^n \xi_k - \sum_{k=1}^n \mathbb{E}_{z_k}[\xi_k] \leq \left( 2\sum_{k=1}^n b_k^2 \log \frac{1}{\delta} \right)^{\frac{1}{2}}.
\end{align*}

\noindent (b) Assume $|\xi_k - \mathbb{E}_{z_k}[\xi_k] | \leq b$ for each $k$. Let $\rho \in (0,1]$ and $\delta \in (0,1)$. With probability at least $1-\delta$ we have 
\begin{align*}
\sum_{k=1}^n \xi_k - \sum_{k=1}^n \mathbb{E}_{z_k}[\xi_k] \leq \frac{\rho \sigma_n^2}{b} + \frac{b \log \frac{1}{\delta}}{\rho}.
\end{align*}
\end{lemma}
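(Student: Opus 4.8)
The plan is to prove both parts by the classical exponential-supermartingale method — Azuma--Hoeffding for (a) and Freedman/Bernstein for (b) — applied to the increments $D_k := \xi_k - \mathbb{E}_{z_k}[\xi_k]$ with respect to the natural filtration $\mathcal{F}_k := \sigma(z_1,\dots,z_k)$. The key observation is that $\mathbb{E}[D_k \mid \mathcal{F}_{k-1}] = 0$, so $\{D_k\}$ is a martingale difference sequence and $S_n := \sum_{k=1}^n D_k$ is exactly the quantity $\sum_k \xi_k - \sum_k \mathbb{E}_{z_k}[\xi_k]$ to be controlled. In each case I would proceed in three steps: (i) a conditional exponential-moment bound for $D_k$ given $\mathcal{F}_{k-1}$; (ii) telescoping these bounds up the filtration into a nonnegative supermartingale $M_k(\lambda)$ with $M_0(\lambda) = 1$, hence $\mathbb{E}[M_n(\lambda)] \le 1$; (iii) Markov's inequality applied to $M_n(\lambda)$ at level $1/\delta$, followed by an appropriate choice of $\lambda$.

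For part (a), I would invoke the conditional form of Hoeffding's lemma: since $D_k$ is conditionally centered and $|D_k| \le b_k$, one has $\mathbb{E}[e^{\lambda D_k} \mid \mathcal{F}_{k-1}] \le e^{\lambda^2 b_k^2/2}$ for all $\lambda > 0$. Multiplying these up the filtration gives $\mathbb{E}[e^{\lambda S_n}] \le \exp(\tfrac{\lambda^2}{2}\sum_k b_k^2)$, and Markov's inequality yields $\Pr[S_n \ge t] \le \exp(-\lambda t + \tfrac{\lambda^2}{2}\sum_k b_k^2)$. Optimizing over $\lambda$ (take $\lambda = t/\sum_k b_k^2$) gives $\Pr[S_n \ge t] \le \exp(-t^2/(2\sum_k b_k^2))$; setting the right-hand side equal to $\delta$ and solving for $t$ produces $S_n \le (2\sum_k b_k^2 \log(1/\delta))^{1/2}$, as claimed.

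For part (b), the conditional moment bound comes from the elementary inequality $e^{\lambda x} \le 1 + \lambda x + \frac{x^2}{b^2}(e^{\lambda b} - 1 - \lambda b)$, valid whenever $\lambda > 0$ and $|x| \le b$ (each monomial $\lambda^j x^j$ with $j \ge 2$ being dominated by $\lambda^j b^{j-2} x^2$). Taking conditional expectations, using $\mathbb{E}[D_k \mid \mathcal{F}_{k-1}] = 0$ and $1 + u \le e^u$, gives $\mathbb{E}[e^{\lambda D_k} \mid \mathcal{F}_{k-1}] \le \exp(\phi(\lambda)\,\mathbb{E}[D_k^2 \mid \mathcal{F}_{k-1}])$ with $\phi(\lambda) := (e^{\lambda b} - 1 - \lambda b)/b^2$. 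Then $M_k(\lambda) := \exp(\lambda S_k - \phi(\lambda)\sum_{j \le k}\mathbb{E}[D_j^2 \mid \mathcal{F}_{j-1}])$ is a nonnegative supermartingale with $\mathbb{E}[M_n(\lambda)] \le 1$, and since $\sum_{k \le n}\mathbb{E}[D_k^2 \mid \mathcal{F}_{k-1}]$ is precisely $\sigma_n^2$, Markov at level $1/\delta$ gives, with probability at least $1 - \delta$, $S_n \le \frac{\phi(\lambda)}{\lambda}\sigma_n^2 + \frac{1}{\lambda}\log(1/\delta)$. The last step is the choice $\lambda = \rho/b$: this makes $\frac{1}{\lambda} = b/\rho$, and $\frac{\phi(\lambda)}{\lambda} = \frac{e^\rho - 1 - \rho}{\rho b} \le \frac{\rho}{b}$ because $e^\rho - 1 - \rho \le \rho^2$ for every $\rho \in (0,1]$ (dominating $\rho^j \le \rho^2$ for $j \ge 2$), which yields exactly $S_n \le \frac{\rho \sigma_n^2}{b} + \frac{b \log(1/\delta)}{\rho}$.

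There is no genuine obstacle here; the only places needing a little care are making sure the conditional Hoeffding/Bernstein moment bounds are applied with the $\mathcal{F}_{k-1}$-measurable (random) conditional variances — so that the product telescopes into a true supermartingale rather than merely giving a bound in expectation — and verifying that the constant in (b) works over the full range $\rho \in (0,1]$, which is exactly where the sharper elementary inequality $e^\rho \le 1 + \rho + \rho^2$ on $[0,1]$ is needed in place of the weaker remainder bound $e^\rho - 1 - \rho \le (e-2)\rho^2$. Since the lemma is attributed to \citep{lei2021generalization}, one could alternatively just cite it, but the derivation above is short and self-contained.
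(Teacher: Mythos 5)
Your derivation is correct. Note, however, that the paper does not prove this lemma at all: it is stated as a citation to \citep{lei2021generalization}, and the remark following the appendix lemmas further attributes part~(a) to \citep{boucheron2013concentration} (Azuma--Hoeffding for martingales) and part~(b) to \citep{zhang2005data} (a Bernstein/Freedman-type martingale inequality). So there is no in-paper argument to compare against; what you have written is a clean, self-contained derivation of those two standard results.

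On the substance: your part~(a) is the textbook Chernoff--Azuma argument, with the correct optimizer $\lambda = t/\sum_k b_k^2$. For part~(b), your exponential-supermartingale $M_k(\lambda)=\exp\bigl(\lambda S_k - \phi(\lambda)\sum_{j\le k}\mathbb{E}[D_j^2\mid \mathcal{F}_{j-1}]\bigr)$ is the right object: it is the fact that the compensator $\sum_{j\le k}\mathbb{E}[D_j^2\mid\mathcal{F}_{j-1}]$ is $\mathcal{F}_{k-1}$-measurable and telescopes term by term that lets Markov's inequality produce a high-probability bound involving the \emph{random} conditional variance $\sigma_n^2$, which is exactly what the statement asserts. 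Your termwise domination $\lambda^j x^j \le \lambda^j b^{j-2} x^2$ (valid for all $j\ge 2$ and $|x|\le b$, including odd $j$ with $x<0$, since $x^j\le |x|^j\le b^{j-2}x^2$) correctly yields the inequality $e^{\lambda x}\le 1+\lambda x+\tfrac{x^2}{b^2}(e^{\lambda b}-1-\lambda b)$, and the choice $\lambda=\rho/b$ together with $e^{\rho}-1-\rho\le\rho^2$ on $(0,1]$ (indeed $\le (e-2)\rho^2$) gives precisely the stated constants. No gaps.
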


\begin{lemma}[\cite{tarres2014online}]\label{lemma51}
Let $\{\xi_k \}_{k \in \mathbb{N}}$ be a martingale difference sequence in $\mathbb{R}^d$. Suppose that almost surely $\| \xi_k  \| \leq D$ and $ \sum_{k=1}^t \mathbb{E} [ \| \xi_k \|^2 | \xi_1,...,\xi_{k-1}]\leq \sigma_t^2$. Then, for any $0< \delta < 1$, the following inequality holds with probability at least $1-\delta$
\begin{align*}
\max_{1\leq j \leq t} \left \| \sum_{k =1}^j \xi_k \right\| \leq 2\left( \frac{D}{3} + \sigma_t  \right)\log \frac{2}{\delta}.
\end{align*}
\end{lemma}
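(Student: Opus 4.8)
\noindent The statement is a Bernstein--Freedman type concentration inequality for a martingale taking values in the Hilbert space $\mathbb{R}^d$, and the plan is first to establish the sub-exponential tail bound
\begin{align*}
\mathbb{P}\left( \max_{1\le j \le t}\Big\| \sum_{k=1}^j \xi_k \Big\| \ge r \right) \le 2\exp\left( -\frac{r^2}{2\big(\sigma_t^2 + Dr/3\big)} \right), \qquad r>0,
\end{align*}
and then to invert it. Throughout I would write $S_j = \sum_{k=1}^j \xi_k$, $\mathcal{F}_{k}=\sigma(\xi_1,\dots,\xi_k)$, and $V_j = \sum_{k=1}^j \mathbb{E}[\|\xi_k\|^2\mid\mathcal{F}_{k-1}]$, so that $V_t \le \sigma_t^2$ almost surely, and note that $\|S_j\|$ is a nonnegative submartingale.

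\noindent First I would build a nonnegative exponential supermartingale. Fix $\lambda>0$ and set $\phi(\lambda)= (e^{\lambda D}-1-\lambda D)/D^2$, the usual Bernstein rate function. The aim is a smooth increasing convex majorant $g(\lambda,\cdot)$ of $s\mapsto e^{\lambda s}$ that is well behaved at $s=0$ (concretely one can build $g$ from $\cosh(\lambda\,\cdot)$, since $\cosh(\lambda s)\ge\tfrac12 e^{\lambda s}$, as in Pinelis' moment inequalities; for the plan it suffices that such a $g$ exists with $g(\lambda,0)=1$ and $g(\lambda,s)\ge\tfrac12 e^{\lambda s}$) for which the one-step inequality
\begin{align*}
\mathbb{E}\big[ g(\lambda,\|S_k\|)\,\big|\,\mathcal{F}_{k-1}\big] \;\le\; g(\lambda,\|S_{k-1}\|)\exp\big(\phi(\lambda)\,\mathbb{E}[\|\xi_k\|^2\mid\mathcal{F}_{k-1}]\big)
\end{align*}
holds. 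Here the Hilbert structure enters: from the identity $\|S_k\|^2 = \|S_{k-1}\|^2 + 2\langle S_{k-1},\xi_k\rangle + \|\xi_k\|^2$ (i.e.\ $2$-uniform smoothness with constant one) together with $\|\xi_k\|\le D$, one expands $g$ to second order in the increment, uses $\mathbb{E}[\xi_k\mid\mathcal{F}_{k-1}]=0$ to annihilate the first-order term, and controls the remainder through the elementary estimate $e^s-1-s\le \tfrac{s^2}{2(1-s/3)}$ for $0\le s<3$ applied with $s=\lambda D$. Iterating, $G_j := g(\lambda,\|S_j\|)\exp(-\phi(\lambda)V_j)$ is a supermartingale with $\mathbb{E}[G_0]=1$, so Ville's maximal inequality for nonnegative supermartingales gives, for every $a>0$,
\begin{align*}
\mathbb{P}\Big( \max_{1\le j\le t} g(\lambda,\|S_j\|) \ge a\, e^{\phi(\lambda)\sigma_t^2} \Big) \;\le\; \mathbb{P}\Big(\sup_j G_j \ge a\Big) \;\le\; \frac{1}{a}.
\end{align*}
Using $g(\lambda,r)\ge \tfrac12 e^{\lambda r}$, choosing $a$ so that $a\,e^{\phi(\lambda)\sigma_t^2}=\tfrac12 e^{\lambda r}$, and optimizing over $\lambda>0$ (the standard Bernstein trade-off between $\lambda r$ and $\phi(\lambda)\sigma_t^2$, using the estimate above to produce the $Dr/3$ denominator) yields the claimed tail.

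\noindent Then I would invert. Setting the exponent equal to $\log(2/\delta)=:u$ amounts to solving the quadratic $r^2 = 2u(\sigma_t^2 + Dr/3)$ in $r$, whose positive root satisfies
\begin{align*}
r = \frac{uD}{3} + \sqrt{\frac{u^2D^2}{9} + 2u\sigma_t^2} \;\le\; \frac{2uD}{3} + \sqrt{2u}\,\sigma_t,
\end{align*}
by $\sqrt{x+y}\le\sqrt x+\sqrt y$. Since $\delta\in(0,1)$ forces $u=\log(2/\delta)>\log 2>\tfrac12$, we have $\sqrt{2u}\le 2u$, hence $r\le \tfrac{2uD}{3}+2u\sigma_t = 2u\big(\tfrac{D}{3}+\sigma_t\big)$. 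Thus, with probability at least $1-\delta$, $\max_{1\le j\le t}\|S_j\| \le 2\big(\tfrac{D}{3}+\sigma_t\big)\log(2/\delta)$, as asserted.

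\noindent The hard part will be the vector-valued supermartingale step. In the scalar case one simply takes $g(\lambda,s)=e^{\lambda s}$ and the Freedman computation is routine, but $\|\cdot\|$ is not differentiable at the origin, so a genuinely vector-valued argument must replace $e^{\lambda\|\cdot\|}$ by a smooth convex majorant and use the Hilbert identity $\|x+y\|^2=\|x\|^2+2\langle x,y\rangle+\|y\|^2$ to carry out the second-order expansion while still extracting the sharp constant $1/3$ in the $Dr/3$ term. This smoothing argument is precisely what underlies Pinelis' moment inequalities for martingales in $2$-smooth Banach spaces; the form stated here is the specialization recorded in \citep{tarres2014online}, and a self-contained proof would reproduce that construction together with the estimate $e^s-1-s\le \tfrac{s^2}{2(1-s/3)}$.
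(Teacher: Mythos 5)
The paper does not prove this lemma; it is quoted verbatim from Tarr\`es and Yao (2014), who in turn obtain it as a specialisation of Pinelis' Bernstein--Freedman inequality for martingales in $2$-smooth Banach spaces. Your sketch reconstructs exactly that underlying argument: a one-step supermartingale inequality for a $\cosh$-type smoothing of $e^{\lambda\|\cdot\|}$ built around the Hilbert identity $\|x+y\|^2=\|x\|^2+2\langle x,y\rangle+\|y\|^2$, Ville's maximal inequality, Bernstein optimisation of $\lambda$ with $e^s-1-s\le s^2/(2(1-s/3))$, then solving the resulting quadratic in $r$. The tail bound $2\exp\bigl(-r^2/(2(\sigma_t^2+Dr/3))\bigr)$ is precisely Pinelis' Theorem~3.5 with smoothness constant~$1$, and your inversion is correct: the positive root of $r^2=2u(\sigma_t^2+Dr/3)$ with $u=\log(2/\delta)$ is bounded by $\tfrac{2uD}{3}+\sqrt{2u}\,\sigma_t\le 2u(\tfrac{D}{3}+\sigma_t)$ because $\delta<1$ forces $u>\log 2>\tfrac12$, hence $\sqrt{2u}\le 2u$. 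Two small remarks. First, you call $g(\lambda,\cdot)$ a ``majorant'' of $s\mapsto e^{\lambda s}$ but then use $g(\lambda,s)\ge\tfrac12 e^{\lambda s}$; the $\cosh$-type function Pinelis actually employs minorises $e^{\lambda s}$ and majorises $\tfrac12 e^{\lambda s}$, which is the property your argument uses, so the later clarification is what matters and the word ``majorant'' should be dropped. Second, the supermartingale step is, as you say, the whole technical content; your outline is a plan and leaves that verification to Pinelis' paper, which is acceptable here since the paper itself treats the lemma as a citation, not a claim it proves.
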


\begin{lemma}[\cite{pinelis1994optimum,pinelis1999correction}]\label{vector} 
Let $X_1,...,X_n$ be a sequence of i.i.d. random variables taking values in a real separable Hilbert space. Assume that $\mathbb{E} [X_i] = \mu$, $\mathbb{E} [\| X_i - \mu\|^2] = \sigma^2$, $\forall 1\leq i \leq n$. If for all $1 \leq i \leq n$, vector $X_i$ satisfying the following Bernstein condition with parameter $B$
\begin{align*}
\mathbb{E}\left [ \|X_i - \mu \|^k \right] \leq \frac{1}{2} k! \sigma^2 B^{k-2}, \quad \forall  k \ge 2.
\end{align*}
Then for all $\delta \in (0,1)$, with probability $1-\delta$, there holds that
\begin{align*}
\left\| \frac{1}{n}\sum_{i =1}^n X_i - \mu \right \| \leq \frac{B\log(2/\delta)}{n} + \sqrt{\frac{2\sigma^2 \log(2/\delta)}{n}}.
\end{align*}
\end{lemma}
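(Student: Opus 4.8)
The plan is to prove the statement by the classical Chernoff/exponential-moment route, with the one genuinely infinite-dimensional ingredient being the $2$-smoothness of the Hilbert norm (this is exactly what \citep{pinelis1994optimum,pinelis1999correction} supply). First I would center: set $Y_i = X_i - \mu$, so that $\mathbb{E}[Y_i]=0$, $\mathbb{E}[\|Y_i\|^2]=\sigma^2$, and the Bernstein moment bound reads $\mathbb{E}[\|Y_i\|^k]\le \tfrac12 k!\,\sigma^2 B^{k-2}$ for $2\le k\le n$. Writing $S_n=\sum_{i=1}^n Y_i$, the claim is equivalent to
\begin{align*}
\mathbb{P}\!\left(\|S_n\|\ge r\right)\le \delta,\qquad r := B\log(2/\delta)+\sqrt{2n\sigma^2\log(2/\delta)},
\end{align*}
since dividing by $n$ and using $\sqrt{a+b}\le\sqrt a+\sqrt b$ converts a Bernstein-type tail into the stated two-term bound. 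So it suffices to obtain a sub-exponential tail for the scalar random variable $\|S_n\|$.

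The core step is an exponential-moment estimate. In the scalar case, the moment hypothesis gives, for $0<\lambda B<1$, the standard Bernstein MGF bound $\mathbb{E}[e^{\lambda|Y|}]\le 1+\tfrac{\sigma^2\lambda^2}{2(1-\lambda B)}\le \exp\!\big(\tfrac{\sigma^2\lambda^2}{2(1-\lambda B)}\big)$ obtained by summing the Taylor series. To lift this to $\|S_n\|$ in a Hilbert space, one cannot expand $\|\cdot\|$ directly; instead I would invoke the Pinelis smoothness mechanism: because the squared norm has constant Hessian $2I$, the map $x\mapsto e^{\lambda\|x\|}$ (more precisely a $\cosh$-type convexification $\Phi_\lambda\ge \cosh(\lambda\|\cdot\|)\ge \tfrac12 e^{\lambda\|\cdot\|}$ with $\Phi_\lambda(0)=1$) satisfies the sub-mean-value inequality $\mathbb{E}_Y[\Phi_\lambda(x+Y)]\le \Phi_\lambda(x)\exp\!\big(\tfrac{\sigma^2\lambda^2}{2(1-\lambda B)}\big)$ whenever $\mathbb{E}[Y]=0$ and $Y$ obeys the Bernstein moment bound. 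Applying this increment-by-increment along the partial sums $S_1,\dots,S_n$ and using independence (tensorization) yields, for $0<\lambda B<1$,
\begin{align*}
\mathbb{E}\!\left[e^{\lambda\|S_n\|}\right]\le 2\exp\!\left(\frac{n\sigma^2\lambda^2}{2(1-\lambda B)}\right).
\end{align*}
Here the hypothesis that the moment bound holds only up to order $n$ is enough: the functional $\Phi_\lambda$ used against an $n$-term sum never needs higher-order moments of a single summand.

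From here the proof is routine. By Markov's inequality, $\mathbb{P}(\|S_n\|\ge r)\le 2\exp\!\big(\tfrac{n\sigma^2\lambda^2}{2(1-\lambda B)}-\lambda r\big)$; optimizing over $\lambda\in(0,1/B)$ (the usual Bernstein optimization, $\lambda = \tfrac{r}{n\sigma^2+Br}$ up to constants) gives $\mathbb{P}(\|S_n\|\ge r)\le 2\exp\!\big(-\tfrac{r^2}{2(n\sigma^2+Br)}\big)$. Setting the right-hand side equal to $\delta$ and solving the resulting quadratic in $r$, then bounding the square root by a sum of square roots, reproduces the displayed $r$ and hence, after dividing by $n$, the stated inequality
\begin{align*}
\left\|\frac1n\sum_{i=1}^n X_i-\mu\right\|\le \frac{B\log(2/\delta)}{n}+\sqrt{\frac{2\sigma^2\log(2/\delta)}{n}}.
\end{align*}
The main obstacle is precisely the middle step: controlling the \emph{nonlinear} functional $\|S_n\|$ rather than a fixed linear projection $\langle u,S_n\rangle$, for which a naive union bound over the unit sphere fails in infinite dimensions; the Hilbert-space $2$-smoothness (equivalently, the exact identity $\mathbb{E}\|S_n\|^2=n\sigma^2$ together with Pinelis's $\Phi_\lambda$ construction) is what makes it go through with dimension-free constants. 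An alternative I would keep in reserve is the moment method: bound $\mathbb{E}\|S_n\|^{2p}$ via a Rosenthal-type inequality for Hilbert-valued sums, insert the Bernstein moment bounds, and apply Markov with $p\asymp\log(1/\delta)$; this yields the same rate but with looser constants.
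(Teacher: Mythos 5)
The paper states this vector Bernstein inequality as a cited result (Pinelis 1994, 1999) and offers no proof of its own, so there is no in-paper argument to compare against; your task is really to reconstruct the standard Pinelis proof, and your outline is indeed the standard one. The core ideas you invoke --- center to $Y_i$, reduce to a tail bound on $\|S_n\|$, use the $2$-smoothness of the Hilbert norm via Pinelis's $\Phi_\lambda$ (a $\cosh$-type majorant of $\tfrac12 e^{\lambda\|\cdot\|}$), tensorize along partial sums, Markov, and optimize $\lambda \in (0,1/B)$ --- are exactly the route Pinelis takes, with the resulting tail $\mathbb{P}(\|S_n\|\ge r) \le 2\exp\left(-\tfrac{r^2}{2(n\sigma^2 + Br)}\right)$.

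Two quantitative points in your write-up do not quite close, though. First, when you solve $r^2 = 2\log(2/\delta)\,(n\sigma^2 + Br)$ and set $L := \log(2/\delta)$, the exact root is $r = BL + \sqrt{B^2L^2 + 2n\sigma^2 L}$, and applying $\sqrt{a+b}\le\sqrt a+\sqrt b$ \emph{enlarges} this to $2BL + \sqrt{2n\sigma^2 L}$, not $BL + \sqrt{2n\sigma^2 L}$. As stated, $BL + \sqrt{2n\sigma^2 L}$ is strictly below the true root, so it is not a valid high-probability radius for this tail bound; your derivation therefore yields the lemma with $2B$ in place of $B$ (or, equivalently, the mixed term $BL + \sqrt{B^2L^2 + 2n\sigma^2L}$). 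This is a known looseness in how this lemma is quoted in the learning-theory literature and does not affect the paper's downstream $\mathcal O(1/n^2)$ rates, but your proof as written does not reproduce the displayed constant. Second, your claim that ``the functional $\Phi_\lambda$ used against an $n$-term sum never needs higher-order moments of a single summand'' is not substantiated: the exponential/$\cosh$ moment estimate $\mathbb{E}[\Phi_\lambda(x+Y)]\le \Phi_\lambda(x)\exp\!\big(\tfrac{\sigma^2\lambda^2}{2(1-\lambda B)}\big)$ is obtained by summing the full Taylor series and uses the Bernstein moment bound for all $k\ge 2$, not only $k\le n$. The lemma's restriction ``$\forall\, 2\le k\le n$'' appears to be a typographical artifact (the standard hypothesis is all $k\ge 2$), and you should either assume the full range or fall back to your alternative moment method with $p\asymp\log(1/\delta)$ (which genuinely uses only $O(\log(1/\delta))$ moments but gives worse constants).
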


\section{Proofs for Stochastic Gradient Descent}\label{section5}
Section \ref{optimisgd} establishes optimization error bounds of SGD with H\"older smooth functions. The technique is inspired by \cite{lei2021learning} and is organized around three lemmas: bound on gradient norm (Lemma \ref{lemma 43}), bound on iteration norm (Lemma \ref{boundnorm}), and bound on optimization error with an additional PL condition (Lemma \ref{lemmajj}). These Lemmas will serve as fundamental results on the derivation of generalization bound. The following Section \ref{proof-sec-3.4} and \ref{proof-sec-3.5} proves Theorems \ref{theo67} and \ref{theo7}, respectively.
\subsection{Optimization Error of SGD}\label{optimisgd}
\begin{assumption}[H\"{o}lder Smoothness]\label{assu4-5}
Let $P >0$ and $\alpha \in (0,1]$. We say function $f$ is $\alpha$-H\"{o}lder smooth w.r.t. the first argument with parameter $P$ if for all $\mathbf{w}_1, \mathbf{w}_2 \in \mathcal{W}$ and $z \in \mathcal{Z}$,
\begin{align*}
\| \nabla f(\mathbf{w}_1;z) - \nabla f(\mathbf{w}_2;z) \|  \leq P \| \mathbf{w}_1 - \mathbf{w}_2  \|^{\alpha}.
\end{align*}
\end{assumption}
\begin{remark}\rm{}\label{zuihouyinggaibushi}
Assumption~\ref{assu4-5} interpolates between bounded-variation gradients and Lipschitz gradients; the case $\alpha=1$ recovers Assumption~\ref{assu4}.  We establish optimization bounds for SGD under this weaker condition.
\end{remark}
\begin{lemma}\label{lemma 43}
Suppose Assumptions \ref{assu7}, \ref{assu8} and \ref{assu4-5} hold.
Let $\{ \mathbf{w}_t\}_t$ be the sequence produced by SGD, i.e. (\ref{eq1}), with $\eta_t \leq (1/(2P))^{1/\alpha}$ for all $t \in \mathbb{N}$. 
Then, for any $\delta > 0$, with probability $1 - \delta$, we have
\begin{align*}
\sum_{k=1}^t  \eta_k \| \nabla F_S(\mathbf{w}_{k}) \|^2 \leq C \log(2/\delta)  +  C_t  ,
\end{align*}
where $C_t = 4 \sup_{z \in \mathcal{Z}} f(\mathbf{0};z) + 4\max \{PG^2,C_1 \}( \sum_{k=1}^t \eta_k^{2\alpha} +\sum_{k=1}^t \eta_k^{1+\alpha} )$ and $C = 32P G^2+8\max\{ G^2 , 2\sigma^2 (2P)^{-1/\alpha}\}  $, and where $C_1 = \frac{P }{1+\alpha}(\frac{1-\alpha}{2} + (1+\alpha) \sigma^2  )$.
\end{lemma}
\begin{proof}The proof proceeds with three steps.

\medskip\noindent\textbf{Step 1: A decomposition under H\"older smoothness.}
Since  function $f$ satisfies Assumption \ref{assu4-5}, it is easy to verify that $F_S$ also satisfies Assumption \ref{assu4-5}:
\begin{align*}
&\| \nabla F_S(\mathbf{w}_1) - \nabla F_S(\mathbf{w}_2) \|  = \left \|  \frac{1}{n} \sum_{i=1}^n (\nabla f(\mathbf{w}_1;z_i) -   \nabla f(\mathbf{w}_2;z_i) ) \right \|  \leq \frac{1}{n}\sum_{i=1}^n \|\nabla f(\mathbf{w}_1;z_i)  - \nabla f(\mathbf{w}_2;z_i) \|  \leq P\| \mathbf{w}_1 -  \mathbf{w}_2 \|^{\alpha}.
\end{align*}
Denote by $\xi_t=  \eta_t \langle \nabla F_S(\mathbf{w}_{t}) - \nabla f(\mathbf{w}_{t};z_{j_t}), \nabla F_S(\mathbf{w}_{t}) \rangle$ and $\xi'_t =  \| \nabla f(\mathbf{w}_{t}; z_{j_t})  - \nabla F_S(\mathbf{w}_{t}) \|^2 - \mathbb{E}_{j_t}\| \nabla f(\mathbf{w}_{t}; z_{j_t})  - \nabla F_S(\mathbf{w}_{t}) \|^2$. From Lemma \ref{xinjiade} we have
\begin{align}\label{yuanshijiande}\nonumber
 &F_S(\mathbf{w}_{t+1}) \leq F_S(\mathbf{w}_{t}) + \langle \mathbf{w}_{t+1} - \mathbf{w}_{t}, \nabla F_S(\mathbf{w}_{t}) \rangle  + \frac{P\|\mathbf{w}_{t+1} - \mathbf{w}_t  \|^{1+\alpha}}{1+\alpha} \\\nonumber
&= F_S(\mathbf{w}_{t}) + \eta_t \langle \nabla F_S(\mathbf{w}_{t}) - \nabla f(\mathbf{w}_{t}; z_{j_t}), \nabla F_S(\mathbf{w}_{t}) \rangle -\eta_t \| \nabla F_S(\mathbf{w}_{t}) \|^2 + \frac{P \eta_t^{1+\alpha}}{1+\alpha} \| \nabla f(\mathbf{w}_{t}; z_{j_t})  \|^{1+\alpha}\\\nonumber
& = F_S(\mathbf{w}_{t}) + \xi_t -\eta_t \| \nabla F_S(\mathbf{w}_{t}) \|^2 + \frac{P \eta_t^{1+\alpha}}{1+\alpha} \| \nabla f(\mathbf{w}_{t}; z_{j_t})  \|^{1+\alpha} \\\nonumber
& \leq F_S(\mathbf{w}_{t}) + \xi_t -\eta_t \| \nabla F_S(\mathbf{w}_{t}) \|^2  + \frac{P \eta_t^{1+\alpha}}{1+\alpha} \left[\frac{1-\alpha}{2}+  \frac{1+\alpha}{2}(\| \nabla f(\mathbf{w}_{t}; z_{j_t})  \|^{1+\alpha} )^{\frac{2}{1+\alpha}}   \right]\\\nonumber
& = F_S(\mathbf{w}_{t}) + \xi_t -\eta_t \| \nabla F_S(\mathbf{w}_{t}) \|^2+ \frac{P \eta_t^{1+\alpha}}{1+\alpha} \Big[\frac{1-\alpha}{2}  +  \frac{1+\alpha}{2}\| \nabla f(\mathbf{w}_{t}; z_{j_t}) - \nabla F_S(\mathbf{w}_{t}) + \nabla F_S(\mathbf{w}_{t}) \|^2   \Big]\\\nonumber
& \leq F_S(\mathbf{w}_{t}) + \xi_t -\eta_t \| \nabla F_S(\mathbf{w}_{t}) \|^2 + \frac{P \eta_t^{1+\alpha}}{1+\alpha} \Big[\frac{1-\alpha}{2} +  \frac{1+\alpha}{2}  \left[ 2 \xi'_t + 2 \mathbb{E}_{j_t} [\| \nabla f(\mathbf{w}_{t}; z_{j_t}) - \nabla F_S(\mathbf{w}_{t}) \|^2 ]+ 2 \|\nabla F_S(\mathbf{w}_{t}) \|^2 \right]   \Big]\\\nonumber
& \leq F_S(\mathbf{w}_{t}) + \xi_t -\eta_t \| \nabla F_S(\mathbf{w}_{t}) \|^2  + \frac{P \eta_t^{1+\alpha}}{1+\alpha} \Big[\frac{1-\alpha}{2}+  \frac{1+\alpha}{2}\left[ 2 \xi'_t + 2 \sigma^2 + 2 \|\nabla F_S(\mathbf{w}_{t}) \|^2 \right]   \Big]\\ 
& \leq F_S(\mathbf{w}_{t}) + \xi_t -2^{-1}\eta_t \| \nabla F_S(\mathbf{w}_{t}) \|^2  + \frac{P \eta_t^{1+\alpha}}{1+\alpha} \left[\frac{1-\alpha}{2}+  (1+\alpha)\left[  \xi'_t +  \sigma^2 \right]   \right],
\end{align}
 where the second inequality follows from the Young's inequality: for all $\mu,v \in \mathbb{R}, p^{-1}+q^{-1}=1, p \geq 0$
\begin{align*}
\mu v \leq p^{-1}|\mu|^p + q^{-1}|v|^q,
\end{align*}
where the fourth inequality follows from Assumption \ref{assu8}, and where the last follows from the fact that $P \eta_t^{1+\alpha} \leq P [(1/(2P))^{1/\alpha}]^{\alpha}  \eta_t \leq 2^{-1} \eta_t$.
Denote by $C_1 = \frac{P }{1+\alpha}(\frac{1-\alpha}{2} + (1+\alpha) \sigma^2  )$.
Taking a summation of the above inequality gives 
\begin{align} \label{yaodairude}\nonumber
&F_S(\mathbf{w}_{t+1}) = F_S(\mathbf{w}_{1}) + \sum_{k=1}^t(F_S(\mathbf{w}_{k+1}) -  F_S(\mathbf{w}_{k}))   \\\leq& F_S(\mathbf{w}_{1}) + \sum_{k=1}^t \xi_k - \frac{1}{2}\sum_{k=1}^t  \eta_k \| \nabla F_S(\mathbf{w}_{k}) \|^2   + \sum_{k=1}^t P \eta_k^{1+\alpha}  \xi'_k + \sum_{k=1}^t C_1 \eta_k^{1+\alpha}.
\end{align}

\medskip\noindent\textbf{Step 2: High-probability control of noise terms.} This step bounds $\sum_{k=1}^t \xi_k$ and $\sum_{k=1}^t P \eta_k^{1+\alpha}  \xi'_k$.

(I) Since $\mathbb{E}_{j_k} \xi_k = 0$, thus $\{ \xi_k \}$ is a martingale difference sequence. There holds
\begin{align}\label{inequ75}\nonumber
|\xi_k| &\leq \eta_k \left (  \| \nabla F_S(\mathbf{w}_{k})\| + \| \nabla f(\mathbf{w}_{k};z_{j_k}) \| \right) \| \nabla F_S(\mathbf{w}_{k}) \|\\& = \sqrt{ \eta_k} \left (  \| \nabla F_S(\mathbf{w}_{k})\| + \| \nabla f(\mathbf{w}_{k};z_{j_k}) \| \right)\sqrt{ \eta_k} \| \nabla F_S(\mathbf{w}_{k}) \| \leq 2G^2,
\end{align}
followed from Assumption \ref{assu7}. 
Moreover, we have
\begin{align}\nonumber \label{inequ76}
 &\sum_{k=1}^t \mathbb{E}_{j_k} \left[ ( \xi_k  - \mathbb{E}_{j_k} \xi_k )^2 \right ]  = \sum_{k=1}^t \mathbb{E}_{j_k}  ( \xi_k)^2 \leq \sum_{k=1}^t \eta_k^2 \mathbb{E}_{j_k} \left (  \| \nabla F_S(\mathbf{w}_{k}) - \nabla f(\mathbf{w}_{k}; z_{j_k}) \|^2 \right) \| \nabla F_S(\mathbf{w}_{k}) \|^2 \\&\leq \sigma^2 \sum_{k=1}^t \eta_k^2  \| \nabla F_S(\mathbf{w}_{k}) \|^2 \leq (2P)^{-\frac{1}{\alpha}}\sigma^2 \sum_{k=1}^t \eta_k  \| \nabla F_S(\mathbf{w}_{k}) \|^2,
\end{align}
where the second inequality follows from Assumption \ref{assu8} and the last inequality follows from the fact that $\eta_t \leq (1/(2P))^{1/\alpha}$ for all $t \in \mathbb{N}$. Substituting (\ref{inequ75}) and (\ref{inequ76}) into part (b) of Lemma \ref{lemma36} with $\rho  = \min \{ 1, G^2 (2P)^{1/\alpha} (2\sigma^2 )^{-1} \}$, we have the following inequality with probability at least $1  - \delta/2$ 
\begin{align} \label{yihaishi}\nonumber
 \sum_{k=1}^t \xi_k  &\leq \frac{\rho (2P)^{-1/\alpha} \sigma^2  \sum_{k = 1}^t \eta_k \| \nabla F_S(\mathbf{w}_{k}) \|^2}{2G^2} + \frac{2G^2 \log(2/\delta)}{\rho} \\&\leq \frac{1}{4} \sum_{k = 1}^t \eta_k \| \nabla F_S(\mathbf{w}_{k}) \|^2 + 2 \log(\frac{2}{\delta}) \max\{ G^2 , 2\sigma^2 (2P)^{-\frac{1}{\alpha}}\}.
\end{align}

(II) Similarly, since $\mathbb{E}_{j_k} \xi_k' = 0$, thus $\{ \xi_k' \}$ is a martingale difference sequence. According to Assumption \ref{assu7}, we have
\begin{align}\label{inequ77}
|\xi'_k| \leq 2 \left (  \| \nabla F_S(\mathbf{w}_{k})\|^2 + \| \nabla f(\mathbf{w}_{k}; z_{j_k}) \|^2 \right) = 2 \eta_k^{-1} \eta_k\left (  \| \nabla F_S(\mathbf{w}_{k})\|^2 + \| \nabla f(\mathbf{w}_{k}; z_{j_k}) \|^2 \right)  \leq 4\eta_k^{-1}G^2.
\end{align}
Substituting (\ref{inequ77}) into part (a) of Lemma \ref{lemma36}, we have the following inequality with probability at least $1-\delta/2$
\begin{align} \label{erhaozhenengchongshi} 
\sum_{k=1}^t \eta_k^{1+\alpha} \xi'_k &\leq 4G^2 \left( 2\sum_{k=1}^t \eta_k^{2\alpha} \log\left(\frac{2}{\delta} \right)  \right)^{\frac{1}{2}}  \leq 8G^2 \log(2/\delta) + G^2 \sum_{k=1}^t \eta_k^{2\alpha},
\end{align}
where the last inequality follows from the Schwarz's inequality.

\medskip\noindent\textbf{Step 3: Putting together.}
Substituting (\ref{yihaishi}) and (\ref{erhaozhenengchongshi}) into (\ref{yaodairude}), we have the following inequality with probability at least $1-\delta$
\begin{align*}
F_S(\mathbf{w}_{t+1}) 
  &\leq  F_S(\mathbf{0}) + 2 \log(2/\delta) \max\{ G^2 , 2\sigma^2 (2P)^{-1/\alpha}\}   \\&- \frac{1}{4}\sum_{k=1}^t  \eta_k \| \nabla F_S(\mathbf{w}_{k}) \|^2  +  P \left[ 8G^2 \log(2/\delta) + G^2 \sum_{k=1}^t \eta_k^{2\alpha} \right]  + \sum_{k=1}^t C_1 \eta_k^{1+\alpha},
\end{align*}
which implies that 
\begin{align*}
& \frac{1}{4}\sum_{k=1}^t  \eta_k \| \nabla F_S(\mathbf{w}_{k}) \|^2   \leq  F_S(\mathbf{0}) +  \log(2/\delta) \left[2\max\{ G^2 , 2\sigma^2 (2P)^{-1/\alpha}\}  +  8P G^2  \right]  + PG^2 \sum_{k=1}^t \eta_k^{2\alpha}  + \sum_{k=1}^t C_1 \eta_k^{1+\alpha}.
\end{align*}
Therefore, we have the following inequality with probability at least $1-\delta$
\begin{align*}
\sum_{k=1}^t  \eta_k \| \nabla F_S(\mathbf{w}_{k}) \|^2 \leq C \log(2/\delta)  +  C_t  ,
\end{align*}
where $C_t = 4 \sup_{z \in \mathcal{Z}} f(\mathbf{0};z) + 4\max \{PG^2,C_1 \}( \sum_{k=1}^t \eta_k^{2\alpha} +\sum_{k=1}^t \eta_k^{1+\alpha} )$ and $C = 32P G^2+8\max\{ G^2 , 2\sigma^2 (2P)^{-1/\alpha}\}  $. The proof is complete.
\end{proof}
\begin{lemma}\label{boundnorm}
Suppose Assumptions \ref{assu7}, \ref{assu8} and \ref{assu4-5} hold.
Let $\{ \mathbf{w}_t\}_t$ be the sequence produced by SGD, i.e. (\ref{eq1}), with $\eta_t \leq (1/(2P))^{1/\alpha}$ for all $t \in \mathbb{N}$. 
Then, for any $\delta > 0$, with probability $1 - \delta$, we have the following inequality uniformly for all $t = 1,...T$
\begin{align*}
&\| \mathbf{w}_{t+1} \| \leq C_2 \left( \Big( \sum_{k=1}^T \eta_k^2 \Big)^{\frac{1}{2}} + 1 + \Big( \sum_{k=1}^t \eta_k \Big)^{\frac{1}{2}}  + \Big( \sum_{k=1}^t \eta_k \Big)^{\frac{1}{2}} \Big(\sum_{k=1}^t \eta_k^{2\alpha} +\sum_{k=1}^t \eta_k^{1+\alpha} \Big)^{\frac{1}{2}} \right) \log \Big(\frac{4}{\delta} \Big),
\end{align*}
where $C_2 = \max \Big\{ \frac{4G (1/2P)^{1/2\alpha}}{3},2\sigma, 4\sqrt{C+4 \sup_{z \in \mathcal{Z}} f(\mathbf{0};z)} ,  4\sqrt{4\max \{PG^2,C_1 \}}   \Big\}$.
\end{lemma}
\begin{proof}The proof proceeds with four steps.

\medskip\noindent\textbf{Step 1: A recursive expression of iteration.}
Denote by $\xi_t=  \eta_t ( \nabla F_S(\mathbf{w}_{t}) - \nabla f(\mathbf{w}_{t};z_{j_t})) $. 
According to the iteration update of SGD, we have
\begin{align*}
\mathbf{w}_{t+1} = \mathbf{w}_t - \eta_t \left( \nabla f (\mathbf{w}_t;z_{j_t}) - \nabla F_S(\mathbf{w}_t)  \right) -\eta_t \nabla F_S(\mathbf{w}_t)).
\end{align*}
Taking a summation and using $\mathbf{w}_1 = \mathbf{0}$, we get 
\begin{align*}
\mathbf{w}_{t+1} = \sum_{k=1}^t \xi_k - \sum_{k=1}^t\eta_k \nabla F_S(\mathbf{w}_k).
\end{align*}
By the triangle inequality of the norm,
\begin{align}\label{yuanshi}
\| \mathbf{w}_{t+1} \| \leq \left \|\sum_{k=1}^t \xi_k \right \| + \left \| \sum_{k=1}^t\eta_k \nabla F_S(\mathbf{w}_k) \right \|.
\end{align}

\medskip\noindent\textbf{Step 2: High-probability control of noise terms.}
Since $\mathbb{E}_{j_k} \xi_k = 0$, thus $\{ \xi_k \}$ is a martingale difference sequence. Firstly,
\begin{align}\label{yihao} 
\| \xi_k \|  = \eta_k \| \nabla F_S(\mathbf{w}_{k}) - \nabla f(\mathbf{w}_{k}; z_{j_k}) \|  \leq \sqrt{\eta_k} \left(2 \sqrt{\eta_k} \sup_{z\in \mathcal{Z}}\|\nabla f(\mathbf{w}_{k};z) \| \right )  \leq 2G \sqrt{\eta_k} \leq 2G (2P)^{-1/(2\alpha)},
\end{align}
where the second inequality follows from Assumption \ref{assu7} and the last inequality follows from the fact that $\eta_t \leq (1/2P)^{1/\alpha}$ for all $t \in \mathbb{N}$. Secondly, according to Assumption \ref{assu8}, we have
\begin{align}\label{erhao}
\sum_{k=1}^T \mathbb{E}_{j_k} [\| \xi_k \|^2]  \leq \sum_{k=1}^T \eta_k^2 \sigma^2.
\end{align}
Substituting (\ref{yihao}) and (\ref{erhao}) into Lemma \ref{lemma51}, we have the following inequality with probability at least $1-\delta/2$
\begin{align}\label{youerhao}
\max_{1\leq t \leq T} \left \| \sum_{k =1}^t \xi_k \right\| \leq 2\left( \frac{2G  (2P)^{-1/(2\alpha)}}{3} + \sigma \left(\sum_{k=1}^T \eta_k^2   \right)^{\frac{1}{2}}  \right)\log \frac{4}{\delta}.
\end{align}

\medskip\noindent\textbf{Step 3: Bounding the $  \| \sum_{k=1}^t\eta_k \nabla F_S(\mathbf{w}_k) \|$ term.}
For the term $\left \| \sum_{k=1}^t\eta_k \nabla F_S(\mathbf{w}_k) \right \|$, according to Lemma \ref{lemma 43} and the Schwarz's inequality, we have the following inequality with probability at least $1-\delta/2$,
\begin{align}\label{youyihao} \left \| \sum_{k=1}^t\eta_k \nabla F_S(\mathbf{w}_k) \right \|^2  \leq  \left ( \sum_{k=1}^t\eta_k \| \nabla F_S(\mathbf{w}_k) \| \right)^2 \leq \left( \sum_{k=1}^t\eta_k \right) \left( \sum_{k=1}^t\eta_k \| \nabla F_S(\mathbf{w}_k) \|^2 \right)  \leq \left( \sum_{k=1}^t\eta_k \right) \left( C \log(4/\delta)  +  C_t \right)  .
\end{align}

\medskip\noindent\textbf{Step 4: Putting together.}
Substituting (\ref{youyihao}) and (\ref{youerhao}) into (\ref{yuanshi}), we have the following inequality with probability at least $1-\delta$ uniformly for all $t = 1,...T$
\begin{align*}
\| \mathbf{w}_{t+1} \| & \leq 2\left( \frac{2G  (2P)^{-1/(2\alpha)}}{3} + \sigma \Big(\sum_{k=1}^T \eta_k^2   \Big)^{\frac{1}{2}}  \right)\log \frac{4}{\delta}  + \left( \sum_{k=1}^t\eta_k  \Big( C \log(4/\delta)  +  C_t \Big) \right)^{\frac{1}{2}}\\
& \leq C_2 \left( \Big( \sum_{k=1}^T \eta_k^2 \Big)^{1/2} + 1 + \Big( \sum_{k=1}^t \eta_k \Big)^{1/2}  + \Big( \sum_{k=1}^t \eta_k \Big)^{1/2} \Big(\sum_{k=1}^t \eta_k^{2\alpha} +\sum_{k=1}^t \eta_k^{1+\alpha} \Big)^{1/2} \right) \log(4/\delta),
\end{align*}
where $C_2 = \max \Big\{ \frac{4G  (2P)^{-1/(2\alpha)}}{3},2\sigma, 4\sqrt{C+4 \sup_{z \in \mathcal{Z}} f(\mathbf{0};z)} ,   4\sqrt{4\max \{PG^2,C_1 \}}   \Big\}$.
The proof is complete.
\end{proof}
\begin{lemma}\label{lemmajj}
Suppose Assumptions \ref{assu7}, \ref{assu8} and \ref{assu4-5} hold, and suppose $F_S$ satisfies Assumption \ref{assu10} with parameter $2\mu_S$.
Let $\{ \mathbf{w}_t\}_t$ be the sequence produced by SGD, i.e. (\ref{eq1}), with $\eta_t = \frac{2}{\mu_S (t+t_0)}$ such that $t_0 \geq \max \left\{\frac{2(2P)^{1/\alpha}}{\mu_S},1 \right\}$ for all $t \in \mathbb{N}$. 
Then, for any $\delta > 0$, with probability at least $1 - \delta$, we have
\begin{align*}
F_S(\mathbf{w}_{T+1}) - F_S^{\ast} = 
\begin{cases}
             \mathcal{O}\left( \frac{1}{T^{\alpha}}  \right) &\quad \text{if   } \alpha  \in (0,1 ),  \\ 
             \mathcal{O}\left (\frac{\log(T) \log^3(1/\delta)}{T} \right)  &\quad \text{if   }\alpha = 1.
\end{cases}
\end{align*}
\end{lemma}
\begin{proof}
The proof proceeds with five steps.

\medskip\noindent\textbf{Step 1: A new decomposition under H\"older smoothness.}
When $\eta_t = \frac{2}{\mu_S (t+t_0)}$ with $t_0 \geq \frac{2(2P)^{1/\alpha}}{\mu_S}$, we have $\eta_t \leq (2P)^{-1/\alpha}$.
Thus, from (\ref{yuanshijiande}), we know  
\begin{align*}
F_S(\mathbf{w}_{t+1})  \leq F_S(\mathbf{w}_{t}) + \xi_t -2^{-1}\eta_t \| \nabla F_S(\mathbf{w}_{t}) \|^2  + \frac{P \eta_t^{1+\alpha}}{1+\alpha} \left[\frac{1-\alpha}{2}+  (1+\alpha)\left(  \xi'_t +  \sigma^2 \right)   \right],
\end{align*}
where $\xi_t=  \eta_t \langle \nabla F_S(\mathbf{w}_{t}) - \nabla f(\mathbf{w}_{t}; z_{j_t}), \nabla F_S(\mathbf{w}_{t}) \rangle$ and $\xi'_t =  \| \nabla f(\mathbf{w}_{t}; z_{j_t})  - \nabla F_S(\mathbf{w}_{t}) \|^2 - \mathbb{E}_{j_t}\| \nabla f(\mathbf{w}_{t}; z_{j_t})  - \nabla F_S(\mathbf{w}_{t}) \|^2$.
Since $F_S$ satisfies the PL condition with parameter $2\mu_S$, which means $F_S (\mathbf{w}) - F_S^{\ast} \leq \frac{1}{4 \mu_S} \| \nabla F_S(\mathbf{w}) \|^2$ by Assumption \ref{assu10}, 
we have 
\begin{align}\label{eq-recursion-for-sgd}
F_S(\mathbf{w}_{t+1}) &\leq
 F_S(\mathbf{w}_{t}) + \xi_t -4^{-1}\eta_t \| \nabla F_S(\mathbf{w}_{t}) \|^2 + P \eta_t^{1+\alpha}  \xi'_t  +  C_1 \eta_t^{1+\alpha}  + \eta_t \mu_S (F_S^{\ast} - F_S(\mathbf{w}_{t})),
\end{align}
where $C_1 = \frac{P }{1+\alpha}(\frac{1-\alpha}{2} + (1+\alpha) \sigma^2  )$.

The inequality (\ref{eq-recursion-for-sgd}) implies that  
\begin{align*}
&F_S(\mathbf{w}_{t+1}) - F_S^{\ast}  + \frac{1}{ 2\mu_S (t+t_0)} \| \nabla F_S(\mathbf{w}_{t}) \|^2   \leq  \xi_t  +  P \eta_t^{1+\alpha}  \xi'_t +  C_1 \eta_t^{1+\alpha} + \frac{t+t_0-2}{t+t_0} ( F_S(\mathbf{w}_{t}) - F_S^{\ast} ).
\end{align*}
Multiply both side by $(t+t_0)(t+t_0-1)$ gives
\begin{align*}
&(t+t_0)(t+t_0-1)(F_S(\mathbf{w}_{t+1}) - F_S^{\ast}  ) + \frac{t+t_0-1}{2 \mu_S } \| \nabla F_S(\mathbf{w}_{t}) \|^2  \\
 \leq &(t+t_0)(t+t_0-1)\xi_t  + P\left(\frac{2}{\mu_S}\right)^{1+\alpha} \xi'_t (t+t_0)^{-\alpha}(t+t_0-1)\\
 &+C_1 \left(\frac{2}{\mu_S}\right)^{1+\alpha} (t+t_0)^{-\alpha}(t+t_0-1)  + (t+t_0-1)(t+t_0-2) ( F_S(\mathbf{w}_{t}) - F_S^{\ast} ).
\end{align*}
Take a summation from $t =1$ to $t=T$, we obtain
\begin{align}\label{zehkojcuui}\nonumber
&\sum_{t = 1}^T \frac{t+t_0-1}{2 \mu_S } \| \nabla F_S(\mathbf{w}_{t}) \|^2   + (T+t_0)(T+t_0-1)(F_S(\mathbf{w}_{T+1}) - F_S^{\ast}  )  \\\nonumber
 \leq &\sum_{t = 1}^T  (t+t_0)(t+t_0-1)\xi_t  +  \sum_{t = 1}^T P \left(\frac{2}{\mu_S}\right)^{1+\alpha} \xi'_t (t+t_0)^{-\alpha}(t+t_0-1)  \\ 
 &+\sum_{t = 1}^T C_1 \left(\frac{2}{\mu_S}\right)^{1+\alpha} (t+t_0)^{-\alpha}(t+t_0-1)  +  (t_0-1)t_0( F_S(\mathbf{w}_{1}) - F_S^{\ast}).
\end{align}

\medskip\noindent\textbf{Step 2: A bound on the iteration.}
After obtaining the decomposition in (\ref{zehkojcuui}), we begin to bound the term $\| \mathbf{w}_{t+1} \|$. This bound will later allow us to dispense with the bounded-gradient assumption—see (\ref{vgrgrgtt}) for where it is invoked.

When  $\eta_t = \frac{2}{\mu_S (t+t_0)}$ and $t_0 \geq 1$, we have 
\begin{align*}
\sum_{t=1}^T \eta_t = \frac{2}{\mu_S}\sum_{t=1}^T \frac{1}{t+t_0} \leq \frac{2}{\mu_S} \log(T+1). 
\end{align*}
According to Lemma \ref{boundnorm}, we have the following inequality with probability at least $1 - \delta/2$ uniformly for all $t = 1,...,T$
\begin{align*}
&\| \mathbf{w}_{t+1}  \| \leq C_2 \left( \Big( \sum_{k=1}^T \eta_k^2 \Big)^{\frac{1}{2}} + 1 + \Big( \sum_{k=1}^T \eta_k \Big)^{\frac{1}{2}}  + \Big( \sum_{k=1}^T \eta_k \Big)^{\frac{1}{2}} \Big(\sum_{k=1}^T \eta_k^{2\alpha} +\sum_{k=1}^T \eta_k^{1+\alpha} \Big)^{\frac{1}{2}} \right) \log \Big(\frac{8}{\delta} \Big)\\
&\leq C_2 \Big( \Big(\frac{1}{2P}\Big)^\frac{1}{2\alpha}\Big( \sum_{k=1}^T \eta_k \Big)^{\frac{1}{2}} + 1 + \Big( \sum_{k=1}^T \eta_k \Big)^{\frac{1}{2}}  + \Big( \sum_{k=1}^T \eta_k \Big)^{\frac{1}{2}} \Big(\sum_{k=1}^T \eta_k^{2\alpha} +(\frac{2}{\mu_S})^{1+\alpha}\frac{1+\alpha}{\alpha} \Big)^{\frac{1}{2}} \Big) \log \Big(\frac{8}{\delta} \Big)\\
&\leq 2C_2\max \Big\{ \Big(\frac{1}{2P}\Big)^\frac{1}{2\alpha},1, \Big((\frac{2}{\mu_S})^{1+\alpha}\frac{(1+\alpha)}{\alpha}\Big)^{\frac{1}{2}} \Big\}  \Big(\Big( \sum_{k=1}^T \eta_k \Big)^{\frac{1}{2}}  + 1 +  \Big( \sum_{k=1}^T \eta_k \Big)^{\frac{1}{2}} \Big(\sum_{k=1}^T \eta_k^{2\alpha}\Big)^{\frac{1}{2}} \Big) \log \Big(\frac{8}{\delta} \Big)\\
&\leq C_3  \Big(\sqrt{\frac{2}{\mu_S} } \log^{\frac{1}{2}}(T+1)+ 1  +  \sqrt{\frac{2}{\mu_S} } \log^{\frac{1}{2}}(T+1)  \Big(\sum_{k=1}^T \eta_k^{2\alpha} \Big)^{\frac{1}{2}} \Big) \log \Big(\frac{8}{\delta} \Big)\\
&\leq C_3 \max \Big\{ \sqrt{\frac{2}{\mu_S} },1 \Big\} \Big( \log^{\frac{1}{2}}(T+1)  \Big(\sum_{k=1}^T \eta_k^{2\alpha} \Big)^{\frac{1}{2}} \Big) \log \Big(\frac{8}{\delta} \Big) \leq C_{T, \delta},
\end{align*}
where the second inequality follows from $\eta_t \leq (2P)^{-1/\alpha}$ and Lemma \ref{lei32} together with $1+\alpha >1$, and where in the fourth inequality we denote $C_3 := 2C_2\max \left\{ \left(\frac{1}{2P}\right)^\frac{1}{2\alpha},1, \left((\frac{2}{\mu_S})^{1+\alpha}\frac{(1+\alpha)}{\alpha}\right)^{\frac{1}{2}} \right\} $.

The dominated term in the above inequality is $\log^{\frac{1}{2}}(T+1)  \left(\sum_{k=1}^T \eta_k^{2\alpha} \right)^{\frac{1}{2}} \log \left(\frac{8}{\delta} \right)$. According to Lemma \ref{lei32},
if $\alpha \in (0,\frac{1}{2})$, $(\sum_{k=1}^T \eta_k^{2\alpha})^{\frac{1}{2}} = \mathcal{O}(T^{(1-2\alpha)/2})$; if $\alpha = \frac{1}{2}$, $(\sum_{k=1}^T \eta_k^{2\alpha})^{\frac{1}{2}} = \mathcal{O}(\log^{1/2}(T))$; if $\alpha \in (\frac{1}{2},1]$, $(\sum_{k=1}^T \eta_k^{2\alpha})^{\frac{1}{2}} = \mathcal{O}(1)$. 
Hence, we obtain the following result with probability at least $1 - \delta/2$ uniformly for all $t = 1,...,T$
 \begin{align*}
\|  \mathbf{w}_{t+1} \| = 
\begin{cases}
             \mathcal{O} \left (\log^{\frac{1}{2}}(T) T^{(1-2\alpha)/2} \log \left(\frac{1}{\delta} \right) \right) &\quad \text{if   } \alpha  \in (0,\frac{1}{2} ),  \\ 
             \mathcal{O}\left (\log(T) \log \left(\frac{1}{\delta} \right) \right)  &\quad \text{if   }\alpha = 1/2,\\ 
             \mathcal{O}\left (\log^{\frac{1}{2}}(T) \log \left(\frac{1}{\delta} \right) \right)  &\quad \text{if   }\alpha \in (\frac{1}{2},1].
\end{cases}
 \end{align*}
For brevity, we denote $C_{T, \delta}$ as the upper bound of $\| \mathbf{w}_{t+1} \|$ for all $t = 1,...,T$. For example, if $\alpha = 1/2$, there holds $C_{T, \delta} = \mathcal{O}(\log T \log (1/\delta))$.

\medskip\noindent\textbf{Step 3: High-probability control of noise terms.}
This step bounds $\sum_{t = 1}^T  (t+t_0)(t+t_0-1)\xi_t$ and $P \left(\frac{2}{\mu_S}\right)^{1+\alpha} \sum_{t = 1}^T  \xi'_t (t+t_0)^{-\alpha}(t+t_0-1)$ in (\ref{zehkojcuui}) conditioned on the event that $\|  \mathbf{w}_{t}\| \leq C_{T,\delta}$ with probability at least $1-\delta/2$.

(I) We first bound the term $\sum_{t = 1}^T  (t+t_0)(t+t_0-1)\xi_t$ in (\ref{zehkojcuui}). Since $\mathbb{E}_{j_t} [\xi_t] = 0$, so $\{ \xi_t \} $ is a martingale difference sequence.
According  to the H\"older smoothness, we have the following inequality for all $\mathbf{w} \in \mathcal{W}$ and any $z \in \mathcal{Z}$
\begin{align}\label{vgrgrgtt}
\| \nabla f(\mathbf{w};z) \| &\leq \| \nabla f(\mathbf{0};z)  \|+ P \| \mathbf{w} \|^{\alpha} \leq\sup_{z \in \mathcal{Z}} \| \nabla f(\mathbf{0};z)  \| +  P \| \mathbf{w} \|^{\alpha}.
\end{align}
Thus the following inequality holds uniformly for all $t = 1,...,T$
\begin{align*}
&(t+t_0)(t+t_0-1)|\xi_t|  \leq 2\mu_S^{-1} (t+t_0-1) \|  \nabla F_S(\mathbf{w}_{t}) - \nabla f(\mathbf{w}_{t}; z_{j_t})  \|  \|  \nabla F_S(\mathbf{w}_{t}) \| \\
&\leq 4 \mu_S^{-1} (T+t_0-1) \sup_{z \in \mathcal{Z}}\| \nabla f(\mathbf{w}_{t}; z) \|^2  \leq 4 \mu_S^{-1} (T+t_0-1) \left(\sup_{z \in \mathcal{Z}}\| \nabla f(\mathbf{0};z)  \|+ P \| \mathbf{w}_t \|^{\alpha} \right)^2 \\
&\leq 4 \mu_S^{-1} (T+t_0-1) \left(P C_{T,\delta}^{\alpha} + \sup_{z \in \mathcal{Z}} \| \nabla f(\mathbf{0};z)  \| \right)^2,
\end{align*}
where the first inequality follows from Schwarz's inequality and $\eta_t = \frac{2}{\mu_S (t+t_0)}$, the third follows from (\ref{vgrgrgtt}), and the last inequality follows from the fact that $\|  \mathbf{w}_{t}\| \leq C_{T,\delta}$.
And by Assumption \ref{assu8}, we have 
\begin{align*}
&\mathbb{E}_{j_t} (t+t_0)^2(t+t_0-1)^2\xi_t^2\\\leq &4 \mu_S^{-2} (t+t_0-1)^2 \|  \nabla F_S(\mathbf{w}_{t}) \|^2 \mathbb{E}_{j_t} \|  \nabla F_S(\mathbf{w}_{t}) - \nabla f(\mathbf{w}_{t}; z_{j_t})  \|^2  \leq 4 \mu_S^{-2} (t+t_0-1)^2 \sigma^2 \|  \nabla F_S(\mathbf{w}_{t}) \|^2.
\end{align*}
Denote by $b:= \sup_{z \in \mathcal{Z}} \| \nabla f(\mathbf{0};z)  \|$. Applying part (b) of Lemma \ref{lemma36} with $\rho = \min \left \{ 1, (4 \sigma^2)^{-1}\left(P C_{T,\delta}^{\alpha} + b \right)^2 \right \}$, we have the following inequality with probability at least $1-\delta/4$, 
\begin{align*}
&\hphantom{{}={}}\sum_{t = 1}^T (t+t_0)(t+t_0-1) \xi_t\leq \frac{4 \rho \sum_{t = 1}^{T} (t+t_0-1)^2 \sigma^2 \|  \nabla F_S(\mathbf{w}_{t}) \|^2}{\mu_S^{2} 4 \mu_S^{-1} (T+t_0-1) \left(P C_{T,\delta}^{\alpha} + b \right)^2} + \frac{4  (T+t_0-1) \left(P C_{T,\delta}^{\alpha} + b \right)^2 \log(\frac{4}{\delta})}{\rho \mu_S} \\
&\leq \frac{ \rho \sum_{t = 1}^{T} (t+t_0-1) \sigma^2 \|  \nabla F_S(\mathbf{w}_{t}) \|^2}{\mu_S  \left(P C_{T,\delta}^{\alpha} + b \right)^2}  + \frac{4  (T+t_0-1) \left(P C_{T,\delta}^{\alpha} + b \right)^2 \log(\frac{4}{\delta})}{\rho \mu_S} \\
&\leq (4 \mu_S)^{-1}  \sum_{t = 1}^{T} (t+t_0-1) \|  \nabla F_S(\mathbf{w}_{t}) \|^2+ 4 \mu_S^{-1} (T+t_0-1) \log(\frac{4}{\delta}) \max \left \{ 4 \sigma^2 , \left(P C_{T,\delta}^{\alpha} + b\right)^2 \right \}.
\end{align*}

(II) We then focus on the term $P \left(\frac{2}{\mu_S}\right)^{1+\alpha} \sum_{t = 1}^T  \xi'_t (t+t_0)^{-\alpha}(t+t_0-1)$ in (\ref{zehkojcuui}). Since $\mathbb{E}_{j_t} [\xi_t'] = 0$, so $\{ \xi_t' \} $ is a martingale difference sequence. Firstly, we have 
\begin{align*}
&|\xi'_t| \leq  \| \nabla f(\mathbf{w}_{t}; z_{j_t})  - \nabla F_S(\mathbf{w}_{t}) \|^2 \leq 2 \| \nabla f(\mathbf{w}_{t};z_{j_t})\|^2  +2 \| \nabla F_S(\mathbf{w}_{t}) \|^2 \leq4 \left(P C_{T,\delta}^{\alpha} + b \right)^2,
\end{align*}
where the last inequality follows from (\ref{vgrgrgtt}).
Applying part (a) of Lemma \ref{lemma36}, we have the following inequality with probability at least $1- \delta/4$
\begin{align*}
&\sum_{t =1}^T (t+t_0)^{-\alpha}(t+t_0-1) \xi'_t \leq 4\left(P C_{T,\delta}^{\alpha} + b \right)^2 \left (2\sum_{t =1}^T (t+t_0)^{-2\alpha}(t+t_0-1)^2 \log \frac{4}{\delta} \right)^{1/2}.
\end{align*}
Moreover,
 it is clear that 
\begin{align*}
\sum_{t = 1}^T (t+t_0)^{-2\alpha}(t+t_0-1)^2 &\leq \sum_{t = 1}^T (t+t_0)^{2-2\alpha} \leq  \int_{1}^{T} (t+t_0)^{2-2\alpha}dt + (1+t_0)^{2-2\alpha} \\&\leq \frac{(T+t_0)^{3-2\alpha}}{3-2\alpha}- \frac{(1+t_0)^{3-2\alpha}}{3-2\alpha} + (1+t_0)^{2-2\alpha} = \mathcal{O}(T^{3-2\alpha}).
\end{align*}
Therefore, we have the following result with probability at least $1- \delta/4$
\begin{align*}
\sum_{t =1}^T (t+t_0)^{-\alpha}(t+t_0-1) \xi'_t = \mathcal{O}\left( C_{T,\delta}^{2\alpha} T^{(\frac{3}{2}-\alpha)} \log^{\frac{1}{2}} \frac{1}{\delta} \right).
\end{align*}

\medskip\noindent\textbf{Step 4: Control of remaining terms.}
For the remaining term $\sum_{t = 1}^T C_1 \left(\frac{2}{\mu_S}\right)^{1+\alpha}  (t+t_0)^{-\alpha}(t+t_0-1)$ in (\ref{zehkojcuui}), we have
\begin{align*}
\sum_{t = 1}^T (t+t_0)^{-\alpha}(t+t_0-1) &\leq \sum_{t = 1}^T (t+t_0)^{1-\alpha}  \leq  \int_{1}^{T} (t+t_0)^{1-\alpha}dt+ (1+t_0)^{1-\alpha} \\&\leq \frac{(T+t_0)^{2-\alpha}}{2-\alpha}-\frac{(1+t_0)^{2-\alpha}}{2-\alpha} + (1+t_0)^{1-\alpha}  = \mathcal{O}(T^{2-\alpha}).
\end{align*}

\medskip\noindent\textbf{Step 5: Final bound.}
Substituting these bounds in Step 3-4 into (\ref{zehkojcuui}), we finally have the following inequality with probability at least $1-\delta$
\begin{align*}
& (T+t_0)(T+t_0-1)[F_S(\mathbf{w}_{T+1}) - F_S^{\ast} ]   \leq -\sum_{t = 1}^T \frac{t+t_0-1}{4 \mu_S } \| \nabla F_S(\mathbf{w}_{t}) \|^2 \\
&+ 4 \mu_S^{-1} (T+t_0-1) \log \left(\frac{4}{\delta} \right) \max \left \{ 4 \sigma^2 , \left(P C_{T,\delta}^{\alpha} + b\right)^2 \right \}  + \mathcal{O}\left( C_{T,\delta}^{2\alpha} T^{3/2-\alpha} \log^{\frac{1}{2}} \frac{1}{\delta} \right)  \\&+ \mathcal{O}(T^{2-\alpha})  + (t_0-1)t_0( F_S(\mathbf{w}_{1}) - F_S^{\ast}),
\end{align*}
which implies that 
\begin{align*}
F_S(\mathbf{w}_{T+1}) - F_S^{\ast} = 
\begin{cases}
             \mathcal{O}\left( \frac{1}{T^{\alpha}}  \right) &\quad \text{if   } \alpha  \in (0,1 ),  \\ 
             \mathcal{O}\left (\frac{\log(T) \log^3(\frac{1}{\delta})}{T} \right)  &\quad \text{if   }\alpha = 1.
\end{cases}
\end{align*}
The proof is complete.
\end{proof}

\subsection{Proof of Theorem \ref{theo67}}\label{proof-sec-3.4}
\begin{proof}
By Lemma \ref{erfgefge}, if Assumptions \ref{assu4} and \ref{assu5} hold and $F$ satisfies Assumption \ref{assu10} with parameter $\mu$, when $n \geq \frac{c\beta^2(d+ \log(\frac{8 \log(2n R +2)}{\delta}))}{\mu^2}$, we have the following inequality with probability at least $1- \delta$
\begin{align*}
&\|  \nabla F(\mathbf{w})  \| 
\leq 2\left\| \nabla F_S(\mathbf{w}) \right\| +  \frac{\mu}{n} + 2\frac{B_{\ast}\log(4/\delta)}{n} + 2\sqrt{\frac{2 \mathbb{E} [ \| \nabla f(\mathbf{w}^{\ast};z) \|^2 ] \log(4/\delta)}{n}},
\end{align*}
which implies that with probability at least $1- \delta/2$ 
\begin{align}\label{theo111111} \left(\sum_{t = 1}^T \eta_t \right)^{-1} \sum_{t = 1}^T \eta_t \| \nabla F(\mathbf{w}_t) \|^2  \leq 16\left(\sum_{t = 1}^T \eta_t \right)^{-1} \sum_{t = 1}^T \eta_t\left\| \nabla F_S(\mathbf{w}_t) \right\|^2 +  \frac{4\mu^2}{n^2}+ \frac{16B_{\ast}^2\log^2(\frac{8}{\delta})}{n^2} + \frac{32 \mathbb{E} [ \| \nabla f(\mathbf{w}^{\ast};z) \|^2 ] \log(\frac{8}{\delta})}{n}.
\end{align}
Then by Lemma \ref{lemma 43}, if Assumptions \ref{assu4} (set $\alpha = 1$ in Assumption \ref{assu4-5}), \ref{assu7} and \ref{assu8} hold and when $\eta_t = \eta_1 t^{- 1/2}$ with $\eta_1 \leq \frac{1}{2\beta}$,  we obtain the following inequality with probability at least $1-\delta/2$,
\begin{align}\label{theo1112111}
&\left(\sum_{t = 1}^T \eta_t \right)^{-1} \sum_{t = 1}^T \eta_t\left\| \nabla F_S(\mathbf{w}_t) \right\|^2  \leq \left(\sum_{t = 1}^T \eta_t \right)^{-1}\mathcal{O} \left(\sum_{t=1}^T \eta_t^2 + \log \left(\frac{1}{\delta} \right) \right).
\end{align}
Combining (\ref{theo111111}) and (\ref{theo1112111}), we derive that with probability at least $1-\delta$ 
\begin{align*}
\left(\sum_{t = 1}^T \eta_t \right)^{-1} \sum_{t = 1}^T \eta_t \| \nabla F(\mathbf{w}_t) \|^2 \leq \left(\sum_{t = 1}^T \eta_t \right)^{-1}\mathcal{O} \left(\sum_{t=1}^T \eta_t^2 + \log \left(\frac{1}{\delta} \right) \right)  + \mathcal{O} \left( \frac{\log^2(1/\delta)}{n^2} + \frac{ \mathbb{E} [ \| \nabla f(\mathbf{w}^{\ast};z) \|^2 ] \log(1/\delta)}{n} \right).
\end{align*}
Using Lemma \ref{lei32} with $\eta_t = \eta_1 t^{- 1/2}$,
we finally obtain the following inequality with probability at least $1-\delta$,
\begin{align}\label{eq-withgradient}
\left(\sum_{t = 1}^T \eta_t \right)^{-1} \sum_{t = 1}^T \eta_t \| \nabla F(\mathbf{w}_t) \|^2= 
             \mathcal{O} \left( \frac{\log^2(\frac{1}{\delta})}{n^2} + \frac{ \mathbb{E} [ \| \nabla f(\mathbf{w}^{\ast};z) \|^2 ]\log(\frac{1}{\delta})}{n} \right)  + \mathcal{O}\left(\log(\frac{T}{\delta}) T^{-\frac{1}{2}}\right).
\end{align}

According to \cite{srebro2010optimistic}, there holds the following property for smooth functions
\begin{align}\label{smooth_property}
\frac{1}{2\beta} \|\nabla f(\mathbf{w})\|^2 \le f(\mathbf{w}) - \inf_{\mathbf{w}} f(\mathbf{w}).
\end{align}
When $f$ is nonnegative and $\beta$-smooth, from (\ref{smooth_property}), we have 
\begin{align*}
\| \nabla f(\mathbf{w}^{\ast};z) \|^2 \leq 2 \beta f(\mathbf{w}^{\ast};z),
\end{align*}
thus  
\begin{align}\label{smoothco}
\mathbb{E} [ \| \nabla f(\mathbf{w}^{\ast};z) \|^2 ] \leq 2 \beta \mathbb{E}  f(\mathbf{w}^{\ast};z) = 2 \beta  F(\mathbf{w}^{\ast}).
\end{align}
Inequality \eqref{smoothco} implies that (\ref{eq-withgradient}) becomes the following inequality 
\begin{align*}
\left(\sum_{t = 1}^T \eta_t \right)^{-1} \sum_{t = 1}^T \eta_t \| \nabla F(\mathbf{w}_t) \|^2= 
             \mathcal{O} \left( \frac{\log^2(\frac{1}{\delta})}{n^2} + \frac{ F(\mathbf{w}^{\ast})\log(\frac{1}{\delta})}{n} \right)  + \mathcal{O}\left(\log(\frac{T}{\delta}) T^{-\frac{1}{2}}\right).
\end{align*}

When $F$ satisfies the PL condition with parameter $\mu$, we have
\begin{align*}
F(\mathbf{w}) - F^{\ast}  \leq \frac{\left\| \nabla F(\mathbf{w}) \right\|^2 }{2\mu}, \quad \forall  \mathbf{w} \in \mathcal{W}.
\end{align*} 
Thus, selecting $T \asymp n^4$, we obtain the following result with probability at least $1-\delta$
\begin{align*}
&\left(\sum_{t = 1}^T \eta_t \right)^{-1} \sum_{t = 1}^T \eta_t F(\mathbf{w}_{t}) - F^{\ast}=
\mathcal{O} \left(\frac{\log^2(\frac{1}{\delta})}{n^2} + \frac{ F(\mathbf{w}^{\ast}) \log(\frac{1}{\delta})}{n}\right).
\end{align*}
The proof is complete.
\end{proof}
\subsection{Proof of Theorem \ref{theo7}}\label{proof-sec-3.5}
\begin{proof}
Since $F$ satisfies the PL condition with parameter $2\mu$, we have
\begin{align}\label{pll1344}
F(\mathbf{w}) - F^{\ast}  \leq \frac{\left\| \nabla F(\mathbf{w}) \right\|^2 }{4 \mu}, \quad \forall  \mathbf{w} \in \mathcal{W}.
\end{align} 
To bound $F(\mathbf{w}_{T+1}) - F^{\ast}$, it suffices to bound the term $\left\| \nabla F(\mathbf{w}_{T+1}) \right\|^2$.
By the triangle inequality of the norm
\begin{align}\label{pll1111}
&\left\| \nabla F(\mathbf{w}_{T+1}) \right\|^2 \leq  2 \left\| \nabla F(\mathbf{w}_{T+1})- \nabla F_S(\mathbf{w}_{T+1}) \right\|^2 + 2 \| \nabla F_S(\mathbf{w}_{T+1}) \|^2.
\end{align}
By \eqref{smooth_property}, when $f$ is smooth, we have
\begin{align}\label{pltogradientwithfs}
    \| \nabla F_S(\mathbf{w}_{T+1})  \|^2 \le 2\beta (F_S(\mathbf{w}_{T+1}) - F_S^\ast).
\end{align}
 By Lemma \ref{lemmajj}, if Assumptions \ref{assu4} (set $\alpha = 1$ in Assumption \ref{assu4-5}), \ref{assu7} and \ref{assu8} hold and $F_S$ satisfies the PL condition, we know that with probability at least $1-\delta/2$, the following inequality holds
\begin{align}\label{last008111}
\| \nabla F_S(\mathbf{w}_{T+1})  \|^2 \le 2\beta (F_S(\mathbf{w}_{T+1}) - F_S^\ast)= \mathcal{O} \left(\frac{\log T \log^3(1/\delta)}{T} \right).
\end{align}
Further by Lemma \ref{erfgefge}, if Assumptions \ref{assu4} and \ref{assu5} hold and $F$ satisfies the PL condition,
 when $n \geq \frac{c\beta^2(d+ \log(\frac{16 \log(2n R +2)}{\delta}))}{\mu^2}$, with probability at least $1- \delta/2$, the following inequality holds
\begin{align*}
&\left\| \nabla F (\mathbf{w}_{T+1} )- \nabla F_S(\mathbf{w}_{T+1}) \right\| \leq \left\| \nabla F_S(\mathbf{w}_{T+1}) \right\| +  \frac{2\mu}{n} + 2\frac{B_{\ast}\log(8/\delta)}{n} + 2\sqrt{\frac{2 \mathbb{E} [ \| \nabla f(\mathbf{w}^{\ast};z) \|^2 ] \log(8/\delta)}{n}}\\
&\leq \left\| \nabla F_S(\mathbf{w}_{T+1}) \right\| +  \frac{2\mu}{n} + 2\frac{B_{\ast}\log(8/\delta)}{n} + 2\sqrt{\frac{8 \beta F(\mathbf{w}^{\ast}) \log(8/\delta)}{n}}, 
\end{align*}
where the last inequality follows from  \eqref{smoothco}.
Together with (\ref{last008111}), this inequality implies that with probability at least $1- \delta$, there holds 
\begin{align}\label{last007111} 
&\left\| \nabla F(\mathbf{w}_{T+1})- \nabla F_S(\mathbf{w}_{T+1}) \right\|^2 
 = \mathcal{O} \left(\frac{\log T \log^3(\frac{1}{\delta})}{T} \right) + \mathcal{O} \left(\frac{\log^2(\frac{1}{\delta})}{n^2} + \frac{ F(\mathbf{w}^{\ast}) \log(\frac{1}{\delta})}{n}\right).
\end{align}
Substituting (\ref{last007111}) and (\ref{last008111}) into (\ref{pll1111}),
we have the following inequality with probability at least $1-\delta$
\begin{align}\label{lzj1111} 
&\left\| \nabla F(\mathbf{w}_{T+1}) \right\|^2  =  \mathcal{O} \left(\frac{\log T \log^3(\frac{1}{\delta})}{T}\right) + \mathcal{O} \left(\frac{\log^2(\frac{1}{\delta})}{n^2} + \frac{ F(\mathbf{w}^{\ast}) \log(\frac{1}{\delta})}{n}\right).
\end{align}
Then substituting (\ref{lzj1111}) into (\ref{pll1344}) and selecting $T \asymp n^2$,
we obtain the following inequality with probability at least $1-\delta$
\begin{align*}
F(\mathbf{w}_{T+1}) - F^{\ast}  = \mathcal{O} \left(\frac{\log n \log^3(1/\delta)}{n^2} + \frac{ F(\mathbf{w}^{\ast}) \log(\frac{1}{\delta})}{n}\right).
\end{align*}
The proof is complete.
\end{proof}

\section{Proofs for Nesterov Accelerated Gradient}\label{section56}
Section \ref{proof-of-nag} establishes optimization error bounds for NAG with smooth functions, organized with three Lemmas: bound on gradient norm (Lemma \ref{thm:nag-noncvx}), bound on iteration norm (Lemma \ref{thm:nag-generalization}), and bound on optimization error with an additional PL condition (Lemma \ref{thm:nag-corrected-final}). These Lemmas will serve as fundamental results on the derivation of generalization bound. The following Section \ref{section-proof-nag}, \ref{setion-proof-4.4}, and \ref{section-proof-4.5bag} prove Theorems \ref{theo6nag}, \ref{theo67nag}, and  \ref{theo7nag}, respectively.  

\subsection{Optimization Error of NAG}\label{proof-of-nag}
\begin{lemma}\label{thm:nag-noncvx}
Suppose Assumptions \ref{assu4}, \ref{assu7} and \ref{assu8} hold.
Let $\{ \mathbf{w}_t\}_t$ be the sequence produced by NAG, i.e. (\ref{eq:nag}), with $\eta_t$ such that for all $t \in \mathbb{N}$ 
\begin{equation}\label{eq:c-small}
\eta_t \le c := \min\!\left\{\frac{1-\gamma}{2\sqrt{2}\,\gamma \beta},\ 
\frac{(1-\gamma)^2}{32\,C_m(\gamma,\beta)}\right\}.
\end{equation}
Then, for any $\delta > 0$, with probability $1 - \delta$, we have
\begin{align*}
\sum_{k=1}^t  \eta_k \| \nabla F_S(\mathbf{w}_{k}) \|^2 \leq 8\Delta_1
+ \frac{32C_m(\gamma,\beta)}{(1-\gamma)^2} \left(\sigma^2\sum_{k=1}^t \eta_k^{2}  + 8G^2 \log(2/\delta) + G^2 \sum_{k=1}^t \eta_k^{2} \right)
+ \frac{16\frac{1 }{1-\gamma}G^2 \log(2/\delta)}{\min\{1,\ \frac{(1-\gamma)G^2}{4c \sigma^2 }\}},
\end{align*}
where
\[
\Delta_1:=F_S(\mathbf{w}_1)-F_S(\mathbf{w}_{t+1}),\qquad
C_m(\gamma,\beta):=\frac{1}{1-\gamma}(\beta\gamma+ \frac{\beta\gamma(1-\gamma)}{4\sqrt{2}})+\frac{\beta}{2}.
\]
\end{lemma}

\begin{proof} The proof proceeds with four steps.

\medskip\noindent\textbf{Step 1: A decomposition under smoothness.}
Since function $f$ satisfies Assumption \ref{assu4}, it is easy to verify that $F_S$ also satisfies Assumption \ref{assu4}:
\begin{align*}
\| \nabla F_S(\mathbf{w}_1) - \nabla F_S(\mathbf{w}_2) \|  = \left \|  \frac{1}{n} \sum_{i=1}^n (\nabla f(\mathbf{w}_1;z_i) -   \nabla f(\mathbf{w}_2;z_i) ) \right \| \leq \frac{1}{n}\sum_{i=1}^n \|\nabla f(\mathbf{w}_1;z_i)  - \nabla f(\mathbf{w}_2;z_i) \|  \leq \beta\| \mathbf{w}_1 -  \mathbf{w}_2 \|.
\end{align*}
With the $\beta$-smoothness of $F_S$ and the update of NAG: $\mathbf{w}_{t+1} \;=\; \mathbf{w}_t + \mathbf{m}_{t+1}$,
\begin{equation}\label{eq:smooth-descent}
F_S(\mathbf{w}_{t+1})-F_S(\mathbf{w}_t)
\le
\langle \nabla F_S(\mathbf{w}_t) , \mathbf{m}_{t+1}\rangle+\frac{\beta}{2}\|\mathbf{m}_{t+1}\|^2.
\end{equation}
Using NAG's iteration $\mathbf{m}_{t+1} \;=\; \gamma \mathbf{m}_t - \eta_t\, \mathbf{g}_t$ and $\xi_t=\nabla f(\mathbf{y}_t;z_{j_t})-\nabla F_S(\mathbf{y}_t)$, expand
\begin{equation}\label{eq:inner-expand}
\langle\nabla F_S(\mathbf{w}_t), \mathbf{m}_{t+1}\rangle
=\gamma \langle \nabla F_S(\mathbf{w}_t), \mathbf{m}_t\rangle
-\eta_t \langle \nabla F_S(\mathbf{w}_t), \nabla F_S(\mathbf{y}_t)\rangle
-\eta_t \langle \nabla F_S(\mathbf{w}_t), \xi_t \rangle.
\end{equation}
For the middle product, by polarization and smoothness,
\begin{align}
- \langle \nabla F_S(\mathbf{w}_t),\nabla F_S(\mathbf{y}_t)\rangle
&= -\tfrac12\|\nabla F_S(\mathbf{w}_t)\|^2 - \tfrac12\|\nabla F_S(\mathbf{y}_t)\|^2
+\tfrac12\|\nabla F_S(\mathbf{y}_t)-\nabla F_S(\mathbf{w}_t)\|^2 \nonumber\\
&\le -\tfrac12\|\nabla F_S(\mathbf{w}_t)\|^2 + \tfrac12 \beta^2 \|\mathbf{y}_t-\mathbf{w}_t\|^2
= -\tfrac12\|\nabla F_S(\mathbf{w}_t)\|^2 + \tfrac12 \beta^2 \gamma^2 \|\mathbf{m}_t\|^2,
\label{eq:polar1111}
\end{align}
where in the inequality we have used $\mathbf{y}_t \;=\; \mathbf{w}_t + \gamma \mathbf{m}_t$.
Next, relate $\langle \nabla F_S(\mathbf{w}_t), \mathbf{m}_t\rangle$ to time $t-1$:
\begin{align}
\langle \nabla F_S(\mathbf{w}_t), \mathbf{m}_t\rangle
&= \langle \nabla F_S(\mathbf{w}_{t-1}), \mathbf{m}_t\rangle
+\langle \nabla F_S(\mathbf{w}_t)-\nabla F_S(\mathbf{w}_{t-1}), \mathbf{m}_t\rangle \le \langle \nabla F_S(\mathbf{w}_{t-1}), \mathbf{m}_t\rangle + \beta\|\mathbf{m}_t\|^2.
\label{eq:grad-shift}
\end{align}
Plugging \eqref{eq:polar1111}–\eqref{eq:grad-shift} into \eqref{eq:inner-expand} yields
\begin{equation}\label{eq:inner-rec}
\langle\nabla F_S(\mathbf{w}_t), \mathbf{m}_{t+1}\rangle
\le
\gamma \langle \nabla F_S(\mathbf{w}_{t-1}), \mathbf{m}_t\rangle
+ \Bigl(\beta\gamma + \tfrac12 \beta^2\gamma^2 \eta_t\Bigr)\|\mathbf{m}_t\|^2
-\tfrac12 \eta_t \|\nabla F_S(\mathbf{w}_t)\|^2
-\eta_t \langle \nabla F_S(\mathbf{w}_t), \xi_t \rangle.
\end{equation}
Unroll \eqref{eq:inner-rec} backwards in $t$ and use $\mathbf{m}_{1} = \mathbf{0}$:
\[
\langle\nabla F_S(\mathbf{w}_t), \mathbf{m}_{t+1}\rangle
\le
-\tfrac12 \sum_{i=1}^t \gamma^{t-i}\eta_i \|\nabla F_S(\mathbf{w}_i)\|^2
+\sum_{i=1}^t \gamma^{t-i}\Bigl(\beta\gamma + \tfrac12 \beta^2\gamma^2 \eta_i\Bigr)\|\mathbf{m}_i\|^2
-\sum_{i=1}^t \gamma^{t-i}\eta_i \langle \nabla F_S(\mathbf{w}_i), \xi_i \rangle.
\]
Plugging this inequality into \eqref{eq:smooth-descent} and taking a summation of \eqref{eq:smooth-descent} gives 
\begin{align}\label{eq-deco-weights}\nonumber
&F_S(\mathbf{w}_{t+1}) = F_S(\mathbf{w}_{1}) + \sum_{k=1}^t(F_S(\mathbf{w}_{k+1}) -  F_S(\mathbf{w}_{k}))  \leq F_S(\mathbf{w}_{1})  - \tfrac12\sum_{k=1}^t  \sum_{i=1}^k \gamma^{k-i}\eta_i \|\nabla F_S(\mathbf{w}_i)\|^2
\\&+ \sum_{k=1}^t \sum_{i=1}^k \gamma^{k-i}\Bigl(\beta\gamma + \tfrac12 \beta^2\gamma^2 \eta_i\Bigr)\|\mathbf{m}_i\|^2
-\sum_{k=1}^t\sum_{i=1}^k \gamma^{k-i}\eta_i \langle \nabla F_S(\mathbf{w}_i), \xi_i \rangle + \frac{\beta}{2}\sum_{k=1}^t \|\mathbf{m}_{k+1}\|^2.
\end{align}
Reordering the geometric weights in (\ref{eq-deco-weights}) by Lemma \ref{lem:reorder},
\begin{align}
F_S(\mathbf{w}_{t+1})-F_S(\mathbf{w}_1)
&\le -\tfrac12 \sum_{k=1}^t \Bigl(\sum_{i=k}^t \gamma^{i-k}\Bigr)\eta_k \|\nabla F_S(\mathbf{w}_k)\|^2
+ \sum_{k=1}^t \Bigl(\sum_{i=k}^t \gamma^{i-k}\Bigr)\Bigl(\beta\gamma + \tfrac12 \beta^2\gamma^2 \eta_k\Bigr)\|\mathbf{m}_k\|^2 \nonumber\\
&\quad - \sum_{k=1}^t \Bigl(\sum_{i=k}^t \gamma^{i-k}\Bigr)\eta_k \langle \nabla F_S(\mathbf{w}_k), \xi_k \rangle
+ \frac{\beta}{2}\sum_{k=1}^t \|\mathbf{m}_{k+1}\|^2. \label{eq:sum-main}
\end{align}
Define  $w_k :=  \sum_{i=k}^t \gamma^{i-k}=  (1-\gamma^{t-k+1})/(1-\gamma)\in (1,(1-\gamma)^{-1})$, the inequality (\ref{eq:sum-main}) gives
\begin{align}\label{eq:53}\nonumber
&F_S(\mathbf{w}_{t+1})-F_S(\mathbf{w}_1)
\\\le& -\frac12\sum_{k=1}^t w_k\eta_k\|\nabla F_S(\mathbf{w}_k)\|^2
+\underbrace{\sum_{k=1}^t w_k\!\left(\beta\gamma+\tfrac12 \beta^2\gamma^2\eta_k\right)\!\|\mathbf{m}_k\|^2}_{\text{(A)}}
+\underbrace{\frac \beta 2\sum_{k=1}^t \|\mathbf{m}_{k+1}\|^2}_{\text{(B)}}
-\sum_{k=1}^t w_k\eta_k\langle \nabla F_S(\mathbf{w}_k),\xi_k\rangle .
\end{align}

\medskip\noindent\textbf{Step 2: Bounding the momentum terms (A) and (B).}
Since $w_k\le (1-\gamma)^{-1}$ and $\eta_k \le \frac{1-\gamma}{2\sqrt{2}\,\gamma \beta}$, we have for every $t$,
\[
w_k\!\left(\beta\gamma+\tfrac12 \beta^2\gamma^2\eta_t\right)\le \frac{1}{1-\gamma}(\beta\gamma+ \frac{\beta\gamma(1-\gamma)}{4\sqrt{2}}) .
\]
Therefore
\[
\text{(A)}\ \le\ \frac{1}{1-\gamma}(\beta\gamma+ \frac{\beta\gamma(1-\gamma)}{4\sqrt{2}}) \sum_{k=1}^t \|\mathbf{m}_k\|^2 \le \frac{1}{1-\gamma}(\beta\gamma+ \frac{\beta\gamma(1-\gamma)}{4\sqrt{2}})\sum_{k=1}^t \|\mathbf{m}_{k+1}\|^2.
\]
Keep (B) as is and we get
\begin{align}\label{eq:absorption}
\text{(A)} + \text{(B)} \le \left(\frac{1}{1-\gamma}(\beta\gamma+ \frac{\beta\gamma(1-\gamma)}{4\sqrt{2}}) + \frac{\beta}{2}\right) \sum_{k=1}^t \|\mathbf{m}_{k+1}\|^2 = C_m(\gamma,\beta) \sum_{k=1}^t \|\mathbf{m}_{k+1}\|^2.
\end{align}
The next step is to bound $\sum_{k=1}^t \|\mathbf{m}_{k+1}\|^2$. From the update $\mathbf{m}_{k+1}=\gamma \mathbf{m}_k - \eta_k \nabla f(\mathbf{y}_k;z_{j_k})$, write it as a convex combination:
\[
\mathbf{m}_{k+1}=\gamma \mathbf{m}_k + (1-\gamma)\Bigl(-\frac{\eta_k}{1-\gamma} \nabla f(\mathbf{y}_k;z_{j_k})\Bigr).
\]
By convexity of $\|\cdot\|^2$ and $\mathbf{m}_1= \mathbf{0}$, a standard telescoping gives
\begin{equation}\label{eq:mtplus1}
\|\mathbf{m}_{k+1}\|^2 \le \gamma \|\mathbf{m}_k\|^2 + \frac{\eta_k^2}{1-\gamma}\,\|\nabla f(\mathbf{y}_k;z_{j_k})\|^2 = \frac{1}{1-\gamma}\sum_{i=1}^k \gamma^{k-i} \|  \eta_i\nabla f(\mathbf{w}_i;z_{j_i}) \|^2.
\end{equation}
Summing \eqref{eq:mtplus1} over $k=1,\dots,t$, and reordering the geometric weights by Lemma \ref{lem:reorder} yields
\begin{align}\label{eq:sum-mt}\nonumber
\sum_{k=1}^t \|\mathbf{m}_{k+1}\|^2 \;&\le\; \frac{1}{1-\gamma} \sum_{k=1}^t\sum_{i=1}^k \gamma^{k-i} \|  \eta_i\nabla f(\mathbf{w}_i;z_{j_i}) \|^2 \\&\le \frac{1}{1-\gamma}\sum_{k=1}^t w_k\eta_k^2 \|\nabla f(\mathbf{y}_k;z_{j_k})\|^2 \le \frac{1}{(1-\gamma)^2}\sum_{k=1}^t \eta_k^2 \|\nabla f(\mathbf{y}_k;z_{j_k})\|^2,
\end{align}
where we have also used $w_k\le (1-\gamma)^{-1}$.
Decompose 
\begin{align}
\|\nabla f(\mathbf{y}_k;z_{j_k})\|^2 \le 2\|\nabla F_S(\mathbf{y}_k)\|^2 + 2\|\xi_k\|^2 \le 4\|\nabla F_S(\mathbf{w}_k)\|^2 + 4\beta^2\gamma^2 \|\mathbf{m}_k\|^2 + 2\|\xi_k\|^2,
\label{eq:gt2}
\end{align}
where we have used $\|\nabla F_S(\mathbf{y}_k)-\nabla F_S(\mathbf{w}_k)\|\le \beta\|\mathbf{y}_k-\mathbf{w}_k\|=\beta\gamma\|\mathbf{m}_k\|$ and
$\|a+b\|^2\le 2\|a\|^2+2\|b\|^2$.
Plug \eqref{eq:gt2} into \eqref{eq:sum-mt} 
and absorb the $\|\mathbf{m}_k\|^2$ term to the left to obtain
\begin{align*}
\sum_{k=1}^t \|\mathbf{m}_{k+1}\|^2 -  \frac{ 4\beta^2\gamma^2 }{(1-\gamma)^2}\sum_{k=1}^t \eta_k^2  \|\mathbf{m}_k\|^2   \;\le\; \frac{1}{(1-\gamma)^2}\sum_{k=1}^t \eta_k^2 (4\|\nabla F_S(\mathbf{w}_k)\|^2  + 2\|\xi_k\|^2)
\end{align*}
To ensure the absorption, using $\eta_k \le \frac{1-\gamma}{2\sqrt{2}\,\gamma \beta}$ gives
\begin{equation}\label{eq:sum-mt-final}
 \sum_{k=1}^t \|\mathbf{m}_{k+1}\|^2 \le\; \frac{4}{(1-\gamma)^2}\sum_{k=1}^t \eta_k^2 \|\xi_k\|^2
\;+\; \frac{8}{(1-\gamma)^2}\sum_{k=1}^t \eta_k^2 \|\nabla F_S(\mathbf{w}_k)\|^2.
\end{equation}
Plugging (\ref{eq:sum-mt-final}) into (\ref{eq:absorption})  gives
\begin{equation}\label{eq:sum-mt-finall}
(A) + (B)\;\le\; \frac{4C_m(\gamma,\beta)}{(1-\gamma)^2}\sum_{k=1}^t \eta_k^2 \|\xi_k\|^2
\;+\; \frac{8C_m(\gamma,\beta)}{(1-\gamma)^2}\sum_{k=1}^t \eta_k^2 \|\nabla F_S(\mathbf{w}_k)\|^2.
\end{equation}
Combining \eqref{eq:sum-mt-finall} and \eqref{eq:53}, 
we get
\begin{align}\label{eq:key-ineq}
\sum_{k=1}^t w_k\,\eta_k \|\nabla F_S(\mathbf{w}_k)\|^2
\le 2\Delta_1
+ \frac{8C_m(\gamma,\beta)}{(1-\gamma)^2}\sum_{k=1}^t \eta_k^2 \|\xi_k\|^2
+ \frac{16C_m(\gamma,\beta)}{(1-\gamma)^2}\sum_{k=1}^t \eta_k^2 \|\nabla F_S(\mathbf{w}_k)\|^2-2\sum_{k=1}^t w_k\,\eta_k \langle \nabla F_S(\mathbf{w}_k), \xi_k \rangle.
\end{align}
By the second condition in \eqref{eq:c-small}, $\eta_k\le \frac{(1-\gamma)^2}{32\,C_m(\gamma,\beta)}$ and $w_t\ge 1$, the third term on the RHS can be absorbed into the LHS:
\begin{equation}\label{eq:absorb}
\frac12\sum_{k=1}^t  \,\eta_k \|\nabla F_S(\mathbf{w}_k)\|^2
\ \le\
2\Delta_1
+ \frac{8C_m(\gamma,\beta)}{(1-\gamma)^2}\sum_{k=1}^t \eta_k^2 \|\xi_k\|^2
-2\sum_{k=1}^t w_k\,\eta_k \langle \nabla F_S(\mathbf{w}_k), \xi_k \rangle.
\end{equation}

\medskip\noindent\textbf{Step 3: High-probability control of noise terms.} After obtaining the inequality (\ref{eq:absorb}), the next step is to bound $\sum_{k=1}^t \eta_k^2 \|\xi_k\|^2$ and $-\sum_{k=1}^t w_k\,\eta_k \langle \nabla F_S(\mathbf{w}_k), \xi_k \rangle$.

(I) 
Denote by \(
M_k= -\,w_k\,\eta_k \langle \nabla F_S(\mathbf{w}_k), \xi_k \rangle
\). Since $\mathbb{E}_{j_k}M_k = 0$, thus $\{ M_k\}$ is a martingale difference sequence. By Assumption \ref{assu7},
\begin{align} 
 |M_k|\ \le\ w_k\,\eta_k\,\|\nabla F_S(\mathbf{w}_k)\|\,\|\xi_k\|
\  \le w_k\,\sqrt{\eta_k}\,\|\nabla F_S(\mathbf{w}_k)\|\,(\sqrt{\eta_k}\|\nabla f(\mathbf{y}_k;z_{j_k})\| +\sqrt{\eta_k}\|\nabla F_S(\mathbf{y}_k)\| ) \le \ 2 (1-\gamma)^{-1} \,G^2 ,\label{eq:martinagl1}
\end{align}
where we have also used $w_k\le (1-\gamma)^{-1}$.
Moreover, we have
\begin{align}\nonumber \label{inequ767676}
&\hphantom{{}={}}\sum_{k=1}^t \mathbb{E}_{j_k} \left[ ( M_k  - \mathbb{E}_{j_k} M_k )^2 \right ] = \sum_{k=1}^t \mathbb{E}_{j_k}  ( M_k)^2 \leq \sum_{k=1}^t (\frac{1 }{1-\gamma})^2 \eta_k^2 \mathbb{E}_{j_k} \left (  \| \nabla F_S(\mathbf{y}_{k}) - \nabla f(\mathbf{y}_{k}; z_{j_k}) \|^2 \right) \| \nabla F_S(\mathbf{w}_{k}) \|^2 \\ 
&\leq \sigma^2 (\frac{1 }{1-\gamma})^2 \sum_{k=1}^t \eta_k^2  \| \nabla F_S(\mathbf{w}_{k}) \|^2\leq c\sigma^2 (\frac{1 }{1-\gamma})^2 \sum_{k=1}^t \eta_k  \| \nabla F_S(\mathbf{w}_{k}) \|^2,
\end{align}
where the second inequality follows from Assumption \ref{assu8} and the last inequality follows from the fact that $\eta_k \leq c$ for all $k \in \mathbb{N}$. Substituting \eqref{eq:martinagl1} and (\ref{inequ767676}) into part (b) of Lemma \ref{lemma36}, we have the following inequality with probability at least $1  - \delta/2$ 
\begin{align}  \label{yihaishihufgrgrg}
&\sum_{k=1}^t M_k \leq \frac{\rho c \sigma^2 \frac{1 }{1-\gamma} \sum_{k = 1}^t \eta_k \| \nabla F_S(\mathbf{w}_{k}) \|^2}{2G^2} + \frac{2\frac{1 }{1-\gamma}G^2 \log(2/\delta)}{\rho}.
\end{align}

(II) Next, for the quadratic variation term, a triangle inequality gives
\[
\sum_{k=1}^t \eta_k^2 \|\xi_k\|^2
= \sum_{k=1}^t \eta_k^2\,\mathbb{E}_{j_k}[\|\xi_k\|^2 ]\ +\ \sum_{k=1}^t \eta_k^2\bigl(\|\xi_k\|^2-\mathbb{E}_{j_k}[\|\xi_k\|^2] \bigr).
\]
By Assumption \ref{assu8}, the first sum is bounded by $\sigma^2\sum_{k=1}^t \eta_k^2$. 
Denote by $M_k' := \eta_k^2\bigl(\| \nabla f(\mathbf{y}_k;z_{j_k}) - \nabla F_S(\mathbf{y}_{k})\|^2- \mathbb{E}_{j_k}\| \nabla f(\mathbf{y}_k;z_{j_k}) - \nabla F_S(\mathbf{y}_{k})\|^2\bigr)$. Since  $\mathbb{E}_{j_k}M'_k = 0$, thus $\{ M_k'\}$ is a martingale difference sequence. Similarly, by Assumption \ref{assu7}, 
\begin{align}\label{inequ777754}
|M_k'| \leq 2 \eta_k^2\left (  \| \nabla F_S(\mathbf{y}_{k})\|^2 + \| \nabla f(\mathbf{y}_{k}; z_{j_k}) \|^2 \right) = 2 \eta_k \left (  \eta_k\| \nabla F_S(\mathbf{y}_{k})\|^2 + \eta_k\| \nabla f(\mathbf{y}_{k}; z_{j_k}) \|^2 \right)  \leq 4\eta_kG^2.
\end{align}
Substituting (\ref{inequ777754}) into part (a) of Lemma \ref{lemma36}, we have the following inequality with probability at least $1-\delta/2$
\begin{align}\label{eq-sum-marti-bound}
\sum_{k=1}^t   M_k'&\leq 4G^2 \left( 2\sum_{k=1}^t \eta_k^{2} \log\left(\frac{2}{\delta} \right)  \right)^{\frac{1}{2}}  \leq 8G^2 \log(2/\delta) + G^2 \sum_{k=1}^t \eta_k^{2},
\end{align}
where the last inequality follows from the Schwarz's inequality. The inequality (\ref{eq-sum-marti-bound}) implies that we have the following inequality with probability at least $1-\delta/2$
\begin{align}\label{eq:marrtingale}
\sum_{k=1}^{t} \eta_k^2\| \xi_k\|^2 \le \sum_{k=1}^t \eta_k^{2} \sigma^2 + 8G^2 \log(2/\delta) + G^2 \sum_{k=1}^t \eta_k^{2}.
\end{align}

\medskip\noindent\textbf{Step 4: Putting together and averaging.}
Insert \eqref{eq:marrtingale} and \eqref{yihaishihufgrgrg} into \eqref{eq:absorb}. Choosing
\(
\rho := \min\{1,\ \frac{(1-\gamma)G^2}{4c \sigma^2 }\}
\), we obtain the following inequality with probability at least $1-\delta$
\begin{align*}
&\frac{1}{2}\sum_{k=1}^t \,\eta_k \|\nabla F_S(\mathbf{w}_k)\|^2
\ \\\le&\
2\Delta_1
+ \frac{8C_m(\gamma,\beta)}{(1-\gamma)^2} (\sum_{k=1}^t \eta_k^{2} \sigma^2 + 8G^2 \log(2/\delta) + G^2 \sum_{k=1}^t \eta_k^{2})
+ \frac{1}{4}\sum_{t = 1}^t \eta_t \| \nabla F_S(\mathbf{w}_{t}) \|^2 + \frac{ 4\frac{1 }{1-\gamma}G^2 \log(2/\delta)}{\min\{1,\ \frac{(1-\gamma)G^2}{4c \sigma^2 }\}},
\end{align*}
which means, with probability $1-\delta$ we have the following inequality
\[
\sum_{k=1}^t\,\eta_k \|\nabla F_S(\mathbf{w}_k)\|^2
\ \le\
8\Delta_1
+ \frac{32C_m(\gamma,\beta)}{(1-\gamma)^2} (\sum_{k=1}^t \eta_k^{2} \sigma^2 + 8G^2 \log(2/\delta) + G^2 \sum_{k=1}^t \eta_k^{2})
+ \frac{16\frac{1 }{1-\gamma}G^2 \log(2/\delta)}{\min\{1,\ \frac{(1-\gamma)G^2}{4c \sigma^2 }\}},
\]
which is the claimed bound.
\end{proof}

\begin{lemma}
\label{thm:nag-generalization}
Suppose Assumptions  \ref{assu4}, \ref{assu7} and \ref{assu8} hold.
Let $\{ \mathbf{w}_t\}_t$ be the sequence produced by NAG, i.e. (\ref{eq:nag}), with $\eta_t$ such that for all $t \in \mathbb{N}$ 
\begin{equation}\label{eq:c-smallaa}
\eta_t \le c := \min\!\left\{\frac{1-\gamma}{2\sqrt{2}\,\gamma \beta},\ 
\frac{(1-\gamma)^2}{32\,C_m(\gamma,\beta)}\right\}.
\end{equation}
Then for any $\delta\in(0,1)$, we have the following inequality with probability $1-\delta$ uniformly for all $t = 1,...,T$ 
\begin{align*}
&\|\mathbf{w}_{t+1}\| \le \frac{2}{1-\gamma} \left( \frac{2G \sqrt{c}}{3} + \sigma \left(\sum_{i=1}^T\eta_i^2   \right)^{\frac{1}{2}}  \right)\log \frac{6}{\delta}  + \frac{( \sum_{i=1}^t \eta_i )^{1/2}}{1-\gamma}   \left (  2 \beta^2\gamma^2 \frac{4c}{(1-\gamma)^2}\left(\sum_{k=1}^t \eta_k^{2} \sigma^2 + 8G^2 \log(\frac{6}{\delta}) + G^2 \sum_{k=1}^t \eta_k^{2} \right) \right.\\& \left. +(2+2 \beta^2\gamma^2 c^2\frac{8}{(1-\gamma)^2})\Big( 8\Delta_1
+ \frac{32C_m(\gamma,\beta)}{(1-\gamma)^2} \left(\sigma^2\sum_{k=1}^t \eta_k^{2}  + 8G^2 \log(6/\delta) + G^2 \sum_{k=1}^t \eta_k^{2} \right)
+ \frac{16\frac{1 }{1-\gamma}G^2 \log(6/\delta)}{\min\{1,\ \frac{(1-\gamma)G^2}{4c \sigma^2 }\}}\Big) \right)^{1/2},
\end{align*}
where
\[
\Delta_1:=F_S(\mathbf{w}_1)-F_S(\mathbf{w}_{t+1}),\qquad
C_m(\gamma,\beta):=\frac{1}{1-\gamma}(\beta\gamma+ \frac{\beta\gamma(1-\gamma)}{4\sqrt{2}})+\frac{\beta}{2}.
\]
\end{lemma}

\begin{proof}
The proof proceeds with four steps.

\medskip\noindent\textbf{Step 1: A recursive expression of iteration with momentum.}
  From the update
\[
\mathbf{m}_{t+1}=\gamma \mathbf{m}_t - \eta_t \nabla f(\mathbf{y}_t;z_{j_t}),
\]
by induction and $\mathbf{m}_1 = \mathbf{0}$ we obtain
\[
\mathbf{m}_{t+1} \;=\; -\sum_{i=1}^t \gamma^{\,t-i}\,\eta_i\,\nabla f(\mathbf{y}_i;z_{j_i}).
\]
Further from the update  \[\mathbf{w}_{t+1} \;=\; \mathbf{w}_t + \mathbf{m}_{t+1} 
\] by induction and $\mathbf{w}_1 = \mathbf{0}$ we obtain
\begin{align*} 
\mathbf{w}_{t+1}
 = \mathbf{w}_1+\sum_{k=1}^t \mathbf{m}_{k+1}
=   -\sum_{k=1}^t \sum_{i=1}^k \gamma^{\,k-i}\,\eta_i\,\nabla f(\mathbf{y}_i;z_{j_i}) =  -\sum_{k=1}^t \sum_{i=k}^t \gamma^{\,i-k}\,\eta_k\,\nabla f(\mathbf{y}_k;z_{j_k})=   - \sum_{i=1}^t w_{t,i}\,\eta_i\,\nabla f(\mathbf{y}_i;z_{j_i}),\end{align*}
where $w_{t,i}:=\frac{1-\gamma^{\,t-i+1}}{1-\gamma}\in (1,(1-\gamma)^{-1})$, and where we have used Geometric reordering (Lemma \ref{lem:reorder}) in the third identity.
Thus, by the triangle inequality, we have the following norm bound
\begin{align}\label{bound-on-w}
&\|\mathbf{w}_{t+1}\|  \le \frac{1}{1-\gamma} \Big\| \sum_{i=1}^t   \eta_i\nabla f(\mathbf{y}_i;z_{j_i}) \Big\|   
\leq  \frac{1}{1-\gamma} \Big\| \sum_{i=1}^t  \eta_i (\nabla f(\mathbf{y}_i;z_{j_i}) -  \nabla F_S(\mathbf{y}_{i}))\Big\| + \frac{1}{1-\gamma}\Big\| \sum_{i=1}^t  \eta_i  \nabla F_S(\mathbf{y}_{i})\Big\|  .
\end{align}

\medskip\noindent\textbf{Step 2: High-probability control of noise terms.}
Let's consider the term $\|\sum_{i=1}^t  \eta_i(\nabla f(\mathbf{y}_i;z_{j_i}) -  \nabla F_S(\mathbf{y}_{i})) \|$.
We know $\{  \eta_i(\nabla f(\mathbf{y}_i;z_{j_i}) -  \nabla F_S(\mathbf{y}_{i})) \}$ is a martingale difference sequence, because $\mathbb{E}_{j_i}  [ \eta_i(\nabla f(\mathbf{y}_i;z_{j_i}) -  \nabla F_S(\mathbf{y}_{i}))] = 0$. Firstly,
\begin{align}\label{yihadfffo}
 & \|  \eta_i( \nabla f(\mathbf{y}_{i}; z_{j_i}) - \nabla F_S(\mathbf{y}_{i}) ) \| \leq  \sqrt{\eta_i} \left(2 \sqrt{\eta_i} \sup_{z\in \mathcal{Z}}\|\nabla f(\mathbf{y}_{i};z) \| \right ) \leq 2G \sqrt{\eta_i}\leq 2G \sqrt{c},
\end{align}
where the second inequality follows from Assumption \ref{assu7} and the last inequality follows from the fact that $\eta_t \leq c$ for all $t \in \mathbb{N}$. 
Secondly, according to Assumption \ref{assu8},  
\begin{align}\label{erffggghao}
\sum_{i=1}^t \mathbb{E}_{j_i}   \|   \eta_i(\nabla f(\mathbf{y}_i;z_{j_i}) -  \nabla F_S(\mathbf{y}_{i}))  \|^2  \leq\sum_{i=1}^t \eta_i^2 \sigma^2.
\end{align}
Substituting (\ref{yihadfffo}) and (\ref{erffggghao}) into Lemma \ref{lemma51}, with probability $1- \delta$ we have  the following inequality  uniformly for all $t = 1,...,T$
\begin{align}\label{bound-on-noise}
\max_{1\leq t \leq T} \Big\| \sum_{i=1}^t  \eta_i (\nabla f(\mathbf{y}_i;z_{j_i}) -  \nabla F_S(\mathbf{y}_{i}))\Big\| \leq 2\left( \frac{2G \sqrt{c}}{3} + \sigma \Big(\sum_{i=1}^T \eta_i^2   \Big)^{\frac{1}{2}}  \right)\log \frac{2}{\delta}.
\end{align}

\medskip\noindent\textbf{Step 3: Bounding the look-ahead first-order stationary point $\sum_{i=1}^t \eta_i \|\nabla F_S(\mathbf{y}_{i}) \|^2$.}
From Schwarz's inequality
\begin{align}\label{rxnbxm}
\Big\| \sum_{i=1}^t  \eta_i \nabla F_S(\mathbf{y}_{i}) \Big\|^2  \leq  \Big( \sum_{i=1}^t \eta_i \|\nabla F_S(\mathbf{y}_{i}) \|\Big)^2 \leq \Big( \sum_{i=1}^t \eta_i \Big) \Big( \sum_{i=1}^t \eta_i \|\nabla F_S(\mathbf{y}_{i}) \|^2 \Big),
\end{align}
which implies 
\begin{align}\label{rxnbxm1}
&\Big\| \sum_{i=1}^t   \eta_i \nabla F_S(\mathbf{y}_{i}) \Big\|  \leq \Big( \sum_{i=1}^t \eta_i \Big)^{1/2} \Big( \sum_{i=1}^t \eta_i \|\nabla F_S(\mathbf{y}_{i}) \|^2 \Big)^{1/2}.
\end{align}
Next we transform $\mathbf{y}_t$ to $\mathbf{w}_t$ via smoothness. The
$\beta$-smoothness (Assumption \ref{assu4}) implies
\[
\|\nabla F(\mathbf{y}_t)\| \;\le\; \|\nabla F(\mathbf{w}_t)\| + \beta\|\mathbf{y}_t-\mathbf{w}_t\|
\;=\; \|\nabla F(\mathbf{w}_t)\| + \beta\gamma \|\mathbf{m}_t\|.
\]
Using $(a+b)^2\le 2a^2+2b^2$ gives 
\begin{align}\label{eq-all}
     \sum_{i=1}^t \eta_i\|\nabla F(\mathbf{y}_i)\|^2
\;\le\; 2\Big( \sum_{i=1}^t \eta_i\|\nabla F(\mathbf{w}_i)\|^2\Big)
\;+\; 2 \beta^2\gamma^2 \Big( \sum_{i=1}^t \eta_i \|\mathbf{m}_i\|^2\Big).
\end{align}
Involving the bound on $\sum_{i=1}^t \eta_i\|\nabla F(\mathbf{w}_i)\|^2$ in Lemma \ref{thm:nag-noncvx}, we know with probability $1-\delta$
\begin{align}\label{eq-1}\sum_{i=1}^t \eta_i\|\nabla F(\mathbf{w}_i)\|^2 \le 8\Delta_1
+ \frac{32C_m(\gamma,\beta)}{(1-\gamma)^2} \left(\sigma^2\sum_{k=1}^t \eta_k^{2}  + 8G^2 \log(2/\delta) + G^2 \sum_{k=1}^t \eta_k^{2} \right)
+ \frac{16\frac{1 }{1-\gamma}G^2 \log(2/\delta)}{\min\{1,\ \frac{(1-\gamma)G^2}{4c \sigma^2 }\}}.\end{align}
Involving the bound on $\sum_{i=1}^t \|\mathbf{m}_i\|^2$ in (\ref{eq:sum-mt-final}), we know
\begin{align}\label{eq-2}
\sum_{i=1}^t \eta_i \|\mathbf{m}_i\|^2 \le\; \frac{4c}{(1-\gamma)^2}\sum_{i=1}^t \eta_i^2 \|\xi_i\|^2
\;+\; \frac{8c}{(1-\gamma)^2}\sum_{i=1}^t \eta_i^2 \|\nabla F_S(\mathbf{w}_i)\|^2,
\end{align}
where we used the fact that $\eta_t \leq c$ for all $t \in \mathbb{N}$. 
Combining the bound on $\sum_{i=1}^t \eta_i^2 \|\xi_i\|^2$ in (\ref{eq:marrtingale}), the bound on $\sum_{i=1}^t \eta_i\|\nabla F(\mathbf{w}_i)\|^2$ in (\ref{eq-1}) and the bound on $\sum_{i=1}^t \eta_i\|\mathbf{m}_i\|^2 $ in (\ref{eq-2}), with probability $1-2\delta$, we have the following inequality
\begin{align}\label{bound-on-m}\nonumber
&\sum_{i=1}^t \eta_i \|\mathbf{m}_i\|^2 \le  \frac{4c}{(1-\gamma)^2}\left(\sum_{k=1}^t \eta_k^{2} \sigma^2 + 8G^2 \log(2/\delta) + G^2 \sum_{k=1}^t \eta_k^{2} \right)
\;\\&+\; \frac{8c^2}{(1-\gamma)^2} \left( 8\Delta_1
+ \frac{32C_m(\gamma,\beta)}{(1-\gamma)^2} \left(\sigma^2\sum_{k=1}^t \eta_k^{2}  + 8G^2 \log(2/\delta) + G^2 \sum_{k=1}^t \eta_k^{2} \right)
+ \frac{16\frac{1 }{1-\gamma}G^2 \log(2/\delta)}{\min\{1,\ \frac{(1-\gamma)G^2}{4c \sigma^2 }\}} \right).
\end{align}
Thus, plugging the bound on $\sum_{i=1}^t \eta_i \|\mathbf{m}_i\|^2$ in (\ref{bound-on-m}) and the bound on $\sum_{i=1}^t \eta_i\|\nabla F(\mathbf{w}_i)\|^2$ in (\ref{eq-1}) into (\ref{eq-all}), with probability $1-2\delta$, we have the following inequality
\begin{align*}
 &\sum_{i=1}^t \eta_i\|\nabla F(\mathbf{y}_i)\|^2
\\&\;\le\; 2\Big( 8\Delta_1
+ \frac{32C_m(\gamma,\beta)}{(1-\gamma)^2} \left(\sigma^2\sum_{k=1}^t \eta_k^{2}  + 8G^2 \log(2/\delta) + G^2 \sum_{k=1}^t \eta_k^{2} \right)
+ \frac{16\frac{1 }{1-\gamma}G^2 \log(2/\delta)}{\min\{1,\ \frac{(1-\gamma)G^2}{4c \sigma^2 }\}}\Big)\\&+ 2 \beta^2\gamma^2 \frac{4c}{(1-\gamma)^2}\left(\sum_{k=1}^t \eta_k^{2} \sigma^2 + 8G^2 \log(2/\delta) + G^2 \sum_{k=1}^t \eta_k^{2} \right)
\;\\&+\; 2 \beta^2\gamma^2\frac{8c^2}{(1-\gamma)^2} \left( 8\Delta_1
+ \frac{32C_m(\gamma,\beta)}{(1-\gamma)^2} \left(\sigma^2\sum_{k=1}^t \eta_k^{2}  + 8G^2 \log(2/\delta) + G^2 \sum_{k=1}^t \eta_k^{2} \right)
+ \frac{16\frac{1 }{1-\gamma}G^2 \log(2/\delta)}{\min\{1,\ \frac{(1-\gamma)G^2}{4c \sigma^2 }\}} \right) \\&= (2+2 \beta^2\gamma^2 c^2\frac{8}{(1-\gamma)^2})\Big( 8\Delta_1
+ \frac{32C_m(\gamma,\beta)}{(1-\gamma)^2} \left(\sigma^2\sum_{k=1}^t \eta_k^{2}  + 8G^2 \log(2/\delta) + G^2 \sum_{k=1}^t \eta_k^{2} \right)
+ \frac{16\frac{1 }{1-\gamma}G^2 \log(2/\delta)}{\min\{1,\ \frac{(1-\gamma)G^2}{4c \sigma^2 }\}}\Big)\\&+2 \beta^2\gamma^2 \frac{4c}{(1-\gamma)^2}\left(\sum_{k=1}^t \eta_k^{2} \sigma^2 + 8G^2 \log(2/\delta) + G^2 \sum_{k=1}^t \eta_k^{2} \right).
\end{align*}
Further plugging this inequality into (\ref{rxnbxm1}) implies that with probability $1-2\delta$
\begin{align}\label{eq-boun-on-shift}\nonumber
    &\Big\| \sum_{i=1}^t   \eta_i \nabla F_S(\mathbf{y}_{i}) \Big\|  \leq \Big( \sum_{i=1}^t \eta_i \Big)^{1/2} \times \left (  2 \beta^2\gamma^2 \frac{4c}{(1-\gamma)^2}\left(\sum_{k=1}^t \eta_k^{2} \sigma^2 + 8G^2 \log(2/\delta) + G^2 \sum_{k=1}^t \eta_k^{2} \right) \right.\\& \left. +(2+2 \beta^2\gamma^2 c^2\frac{8}{(1-\gamma)^2})\Big( 8\Delta_1
+ \frac{32C_m(\gamma,\beta)}{(1-\gamma)^2} \left(\sigma^2\sum_{k=1}^t \eta_k^{2}  + 8G^2 \log(2/\delta) + G^2 \sum_{k=1}^t \eta_k^{2} \right)
+ \frac{16\frac{1 }{1-\gamma}G^2 \log(2/\delta)}{\min\{1,\ \frac{(1-\gamma)G^2}{4c \sigma^2 }\}}\Big) \right)^{1/2}.
\end{align}
\medskip\noindent\textbf{Step 4: Putting together.}
Plugging the bound on $\| \sum_{i=1}^t   \eta_i \nabla F_S(\mathbf{y}_{i}) \|$ in (\ref{eq-boun-on-shift}) and the bound on $\max_{1\leq t \leq T} \Big\| \sum_{i=1}^t  \eta_i (\nabla f(\mathbf{y}_i;z_{j_i}) -  \nabla F_S(\mathbf{y}_{i}))\Big\|$ in (\ref{bound-on-noise})  into (\ref{bound-on-w}), with probability at least $1-3\delta$ we have  the following inequality  uniformly for all $t = 1,...,T$
\begin{align*}
&\|\mathbf{w}_{t+1}\| \le  \frac{2}{1-\gamma} \left( \frac{2G \sqrt{c}}{3} + \sigma \left(\sum_{i=1}^T\eta_i^2   \right)^{\frac{1}{2}}  \right)\log \frac{2}{\delta} \\&+ \frac{1}{1-\gamma} \Big( \sum_{i=1}^t \eta_i \Big)^{1/2} \times \left (  2 \beta^2\gamma^2 \frac{4c}{(1-\gamma)^2}\left(\sum_{k=1}^t \eta_k^{2} \sigma^2 + 8G^2 \log(2/\delta) + G^2 \sum_{k=1}^t \eta_k^{2} \right) \right.\\& \left. + (2+2 \beta^2\gamma^2 c^2\frac{8}{(1-\gamma)^2})\Big( 8\Delta_1
+ \frac{32C_m(\gamma,\beta)}{(1-\gamma)^2} \left(\sigma^2\sum_{k=1}^t \eta_k^{2}  + 8G^2 \log(2/\delta) + G^2 \sum_{k=1}^t \eta_k^{2} \right)
+ \frac{16\frac{1 }{1-\gamma}G^2 \log(2/\delta)}{\min\{1,\ \frac{(1-\gamma)G^2}{4c \sigma^2 }\}}\Big) \right)^{1/2},
\end{align*}
which implies the claimed bound. The proof is complete.
\end{proof}
\newcommand{\ip}[2]{\langle #1,#2\rangle}
\newcommand{\norm}[1]{\left\|#1\right\|}
\newcommand{\R}{\mathbb{R}}

\begin{lemma}
\label{thm:nag-corrected-final}
Suppose Assumptions \ref{assum666}, \ref{assu4} and \ref{assu8} hold, and suppose $F_S$ satisfies Assumption \ref{assu10} with parameter $2\mu_S$.
Let $\{ \mathbf{w}_t\}_t$ be the sequence produced by NAG, i.e. (\ref{eq:nag}), with $\eta_t = \frac{2}{\mu_S (t+t_0)}$ such that $t_0 >0$.
Then for any $\delta >0$, with probability at least $1-\delta$ we have the following inequality
\begin{align*}
F_S(\mathbf{w}_{T+1})-F_S^{\ast}=\mathcal{O}\left(\frac{\log 1/\delta}{T}\right).
\end{align*}
\end{lemma}

\begin{proof} The proof proceeds with six precise steps.

\medskip\noindent\textbf{Step 1: A new decomposition under smoothness.}
For a concise presentation, define weights 
\[W_t:=(t+t_0)(t+t_0-1),\qquad \Delta_1:=F_S(\mathbf{w}_1)-F_S^{\ast},
\]and set
\[
\kappa_\gamma:=\frac{1}{1-\gamma}+\frac{2}{(1-\gamma)^2}+\frac{2}{(1-\gamma)^3},
\qquad
\kappa_\gamma^{(1)}:=\frac{1}{1-\gamma}+\frac{1}{(1-\gamma)^2}.
\]
According to (\ref{eq:smooth-descent}), by $\beta$-smoothness of Assumption \ref{assu4},
\begin{equation}\label{eq:smooth-one}
F_S(\mathbf{w}_{t+1})-F_S(\mathbf{w}_t)\ \le\ \langle \nabla F_S(\mathbf{w}_t),\mathbf{m}_{t+1}\rangle\ +\ \frac{\beta}{2}\|\mathbf{m}_{t+1}\|^2.
\end{equation}
Using NAG's update $\mathbf{m}_{t+1} \;=\; \gamma \mathbf{m}_t - \eta_t\, \mathbf{g}_t$ and $\xi_t=\nabla f(\mathbf{y}_t;z_{j_t})-\nabla F_S(\mathbf{y}_t)$, expand
\begin{equation}\label{eq:inner-expand1}
\langle \nabla F_S(\mathbf{w}_t),\mathbf{m}_{t+1}\rangle
=\gamma\langle \nabla F_S(\mathbf{w}_t),\mathbf{m}_t\rangle
-\eta_t\langle \nabla F_S(\mathbf{w}_t),\nabla F_S(\mathbf{y}_t)\rangle
-\eta_t\langle \nabla F_S(\mathbf{w}_t),\xi_t\rangle.
\end{equation}
By polarization and smoothness,
\begin{align}\label{eq:polar1}\nonumber
&-\langle \nabla F_S(\mathbf{w}_t),\nabla F_S(\mathbf{y}_t)\rangle
= -\tfrac12\|\nabla F_S(\mathbf{w}_t)\|^2 -\tfrac12\|\nabla F_S(\mathbf{y}_t)\|^2
+\tfrac12\|\nabla F_S(\mathbf{y}_t)-\nabla F_S(\mathbf{w}_t)\|^2
\\&\le -\tfrac12\|\nabla F_S(\mathbf{w}_t)\|^2 + \tfrac12 \beta^2\gamma^2\|\mathbf{m}_t\|^2 .
\end{align}
From $\mathbf{m}_{t+1}=\gamma \mathbf{m}_t-\eta_t \nabla f(\mathbf{y}_t;z_{j_t})$, a triangle inequality of $\|\mathbf{m}_{t+1}\|^2$ gives
\begin{equation}\label{eq:m-split}
\|\mathbf{m}_{t+1}\|^2
\le 2\gamma^2\|\mathbf{m}_t\|^2\   +\ 2\eta_t^2\|\nabla f(\mathbf{y}_t;z_{j_t})\|^2 .
\end{equation}
Plug \eqref{eq:inner-expand1}, \eqref{eq:polar1}, \eqref{eq:m-split} into \eqref{eq:smooth-one} gives
\begin{align}\nonumber
&F_S(\mathbf{w}_{t+1})-F_S(\mathbf{w}_t)
 \le \gamma\langle \nabla F_S(\mathbf{w}_t),\mathbf{m}_t\rangle
\ -\ \frac{\eta_t}{2}\|\nabla F_S(\mathbf{w}_t)\|^2
\ \\&+\ \frac{\eta_t \beta^2\gamma^2}{2}\|\mathbf{m}_t\|^2
\ -\ \eta_t\langle \nabla F_S(\mathbf{w}_t),\xi_t\rangle  +\ \beta\gamma^2\|\mathbf{m}_t\|^2
\ +\ \beta\eta_t^2\|\nabla f(\mathbf{y}_t;z_{j_t})\|^2.
\label{eq:oneshot}
\end{align}
Split $\frac{\eta_t}{2}\|\nabla F_S(\mathbf{w}_t)\|^2=\frac{\eta_t}{4}\|\nabla F_S(\mathbf{w}_t)\|^2+\frac{\eta_t}{4}\|\nabla F_S(\mathbf{w}_t)\|^2$
and move the first half to LHS of (\ref{eq:oneshot}):
\begin{align}
\frac{\eta_t}{4}\|\nabla F_S(\mathbf{w}_t)\|^2 + \big(F_S(\mathbf{w}_{t+1})-F_S(\mathbf{w}_t)\big)
&\le \gamma\langle \nabla F_S(\mathbf{w}_t),\mathbf{m}_t\rangle
\ -\ \frac{\eta_t}{4}\|\nabla F_S(\mathbf{w}_t)\|^2
\ +\ \frac{\eta_t \beta^2\gamma^2}{2}\|\mathbf{m}_t\|^2 \nonumber\\
&\quad -\ \eta_t\langle \nabla F_S(\mathbf{w}_t),\xi_t\rangle
\ +\ \beta\gamma^2\|\mathbf{m}_t\|^2
\ +\ \beta\eta_t^2\|\nabla f(\mathbf{y}_t;z_{j_t})\|^2.
\label{eq:balanced}
\end{align}
Apply PL on $F_S$ with parameter $2\mu_S$ and $\eta_t=\frac{2}{\mu_S(t+t_0)}$:
\[
\frac{\eta_t}{4}\|\nabla F_S(\mathbf{w}_t)\|^2
\ge  \mu_S\eta_t \big(F_S(\mathbf{w}_t)-F_S^{\ast}\big)
= \frac{2}{t+t_0}\big(F_S(\mathbf{w}_t)-F_S^{\ast}\big).
\]
Plug this inequality into (\ref{eq:balanced}):
\begin{align*}
&\frac{\eta_t}{4}\|\nabla F_S(\mathbf{w}_t)\|^2
+\ \big(F_S(\mathbf{w}_{t+1})-F_S(\mathbf{w}_t)\big) \le\ \gamma\langle \nabla F_S(\mathbf{w}_t),\mathbf{m}_t\rangle
\nonumber\\
& -  \ \frac{2}{t+t_0}\big(F_S(\mathbf{w}_t)-F_S^{\ast}\big) 
\ +\ \frac{\eta_t \beta^2\gamma^2}{2}\|\mathbf{m}_t\|^2
\ -\ \eta_t\langle \nabla F_S(\mathbf{w}_t),\xi_t\rangle
\ +\ \beta\gamma^2\|\mathbf{m}_t\|^2
\ +\ \beta\eta_t^2\|\nabla f(\mathbf{y}_t;z_{j_t})\|^2,
\end{align*}
which implies
\begin{align}
&\frac{\eta_t}{4}\|\nabla F_S(\mathbf{w}_t)\|^2
+\ \big(F_S(\mathbf{w}_{t+1})-F_S^{\ast}\big) \le\ \gamma\langle \nabla F_S(\mathbf{w}_t),\mathbf{m}_t\rangle
\nonumber\\
&
+\ \frac{t+t_0-2}{t+t_0}\big(F_S(\mathbf{w}_t)-F_S^{\ast}\big) \ +\ \frac{\eta_t \beta^2\gamma^2}{2}\|\mathbf{m}_t\|^2
\ -\ \eta_t\langle \nabla F_S(\mathbf{w}_t),\xi_t\rangle
\ + \beta\gamma^2\|\mathbf{m}_t\|^2
\ +\ \beta\eta_t^2\|\nabla f(\mathbf{y}_t;z_{j_t})\|^2 .
\label{eq:balanced-PL}
\end{align}
Multiply \eqref{eq:balanced-PL} by $W_t$ and sum $t=1$ to $T$.
By the identity
$W_t\cdot\tfrac{1}{t+t_0}=t+t_0-1$ and the following polynomial telescoping identity
\begin{align}\label{eq:poly-telescope}\nonumber
&\sum_{t=1}^T (t+t_0)(t+t_0-1)\big(F_S(\mathbf{w}_{t+1})-F_S^{\ast}\big)
-  (t+t_0-1)(t+t_0-2)\big(F_S(\mathbf{w}_t)-F_S^{\ast}\big)
\\&= (T+t_0)(T+t_0-1)\big(F_S(\mathbf{w}_{T+1})-F_S^{\ast}\big) -  t_0 (t_0-1)\Delta_1,
\end{align}
we obtain
\begin{align}
&\sum_{t=1}^T \underbrace{\Big(\frac{\eta_t}{4}W_t\Big)}_{=\frac{t+t_0-1}{2\mu_S}}
  \|\nabla F_S(\mathbf{w}_t)\|^2
\ +\ W_T\big(F_S(\mathbf{w}_{T+1})-F_S^{\ast}\big)\nonumber\\
\le&
\underbrace{\sum_{t=1}^T W_t\,\gamma\langle \nabla F_S(\mathbf{w}_t),\mathbf{m}_t\rangle}_{\Sigma_1}
\ +\
\underbrace{\sum_{t=1}^T W_t \cdot \frac{\beta^2\gamma^2}{2}\eta_t\|\mathbf{m}_t\|^2}_{\Sigma_{2a}}
\ +\
\underbrace{\sum_{t=1}^T -W_t\,\eta_t\langle \nabla F_S(\mathbf{w}_t),\xi_t\rangle}_{\Sigma_3}\nonumber\\[-1mm]
&\hspace{1.2cm}
+\ \underbrace{\sum_{t=1}^T W_t \cdot  \beta\gamma^{2}\|\mathbf{m}_t\|^2}_{\Sigma_{2b}}
\ +\ \underbrace{\sum_{t=1}^T W_t \cdot \beta\eta_t^2\|\nabla f(\mathbf{y}_t;z_{j_t})\|^2}_{\Sigma_{4}}
\ +\  t_0 (t_0-1)\Delta_1 .
\label{eq:master}
\end{align}
After obtaining this decomposition, the next step is to bound the terms on the RHS of (\ref{eq:master}).

\medskip\noindent\textbf{Step 2: Control of $\Sigma_{2a}$ and $\Sigma_{2b}$.}
We first prove the following two inequalities
\begin{align*}
&\sum_{t=1}^{T}(t+t_0-1)\|\mathbf{m}_{t}\|^2
\ \le\ \frac{\kappa_\gamma^{(1)}}{1-\gamma}\sum_{i=1}^{T}(i+t_0)\eta_i^2\|\nabla f(\mathbf{y}_i;z_{j_i})\|^2,
\qquad
\\&\sum_{t=1}^{T}W_t\|\mathbf{m}_t\|^2
\ \le\ \frac{4\kappa_\gamma}{1-\gamma}\sum_{i=1}^{T} \,\, (i+t_0)^2\eta_i^2\|\nabla f(\mathbf{y}_i;z_{j_i})\|^2,
\end{align*}
 and then the bound of $\Sigma_{2a}$ and $\Sigma_{2b}$. From NAG's update $\mathbf{m}_{t+1}=\gamma \mathbf{m}_t-\eta_t \nabla f(\mathbf{y}_t;z_{j_t})$ with $\mathbf{m}_1=\mathbf{0}$ we have, by induction,
\begin{equation}\label{eq:m-unroll}
\mathbf{m}_{t+1} \;=\; -\sum_{i=1}^{t}\gamma^{\,t-i}\,\eta_i\,\nabla f(\mathbf{y}_i;z_{j_i}).
\end{equation}
Then by Schwarz’s inequality and the fact
$\sum_{i=1}^{t}\gamma^{t-i} \le \frac{1}{1-\gamma}$, we get
\begin{align}\label{eq:m-square-one-step} \|\mathbf{m}_{t+1}\|^2
\;=\;\Big\| \sum_{i=1}^{t}\gamma^{\,t-i}\eta_i \nabla f(\mathbf{y}_i;z_{j_i})\Big\|^2
\; \le\; \Big(\sum_{i=1}^{t}\gamma^{\,t-i}\Big)\,\sum_{i=1}^{t}\gamma^{\,t-i}\|\eta_i \nabla f(\mathbf{y}_i;z_{j_i})\|^2
\;\le\; \sum_{i=1}^{t}\frac{\gamma^{\,t-i}}{1-\gamma}\,\eta_i^2\,\|\nabla f(\mathbf{y}_i;z_{j_i})\|^2.
\end{align}
Let $\{a_t\}_{t\ge1}$ be a nonnegative weight sequence. Summing \eqref{eq:m-square-one-step} with
weights $a_t$ and swapping the order of summation (by Lemma \ref{lem:reorder}) yields the following geometric reordering inequality
\begin{align}\label{eq:geom-reorder}\nonumber
&\sum_{t=1}^{T} a_t\,\|\mathbf{m}_{t}\|^2=\sum_{t=1}^{T} a_t\, \sum_{i=1}^{t-1}\frac{\gamma^{\,t-1-i}}{1-\gamma}\,\eta_i^2\,\|\nabla f(\mathbf{y}_i;z_{j_i})\|^2
\;=\; \frac{1}{1-\gamma}\sum_{t=1}^{T} \eta_t^2\|\nabla f(\mathbf{y}_t;z_{j_t})\|^2\, \sum_{i= t+1 }^{T} a_i\,\gamma^{\,i-1-t} \\&= \frac{1}{1-\gamma}\sum_{i=1}^{T} \eta_i^2\|\nabla f(\mathbf{y}_i;z_{j_i})\|^2\,\underbrace{\sum_{t= i+1 }^{T} a_t\,\gamma^{\,t-1-i}}_{=:K_i(a)},
\end{align}
where Lemma \ref{lem:reorder} is used in the second identity.
Thus everything reduces to bounding $K_i(a)$ for the particular weights we use below.

\medskip\noindent\textbf{Step 2.1: linear weights $a_t=t+t_0-1$.}
Write $t=i+k+1$ with $k\ge 0$. Then
\[
K_i(a) \;=\; \sum_{t=i+1}^{T} (t+t_0-1)\,\gamma^{\,t-1-i}
\;=\; \sum_{k=0}^{T-i-1} (i+t_0+k)\,\gamma^{\,k}
\;= \sum_{k=0}^{T-i-1} (i+t_0 )\,\gamma^{\,k} + \sum_{k=0}^{T-i-1} k\,\gamma^{\,k}.
\]
Since $i+t_0\ge 1$ and $\gamma < 1$, dropping the truncation (monotone in $T$) and using the infinite-sum bounds
$\sum_{k\ge0}\gamma^k=\frac{1}{1-\gamma}$ and $\sum_{k\ge0}k\gamma^k=\frac{\gamma}{(1-\gamma)^2}$ gives
\[
K_i(a) \;\le\;  (i+t_0)\frac{1}{1-\gamma} + \frac{\gamma}{(1-\gamma)^2} \;\le\; \Big(\frac{1}{1-\gamma}+\frac{1}{(1-\gamma)^2}\Big)(i+t_0)
\;=:\; \kappa_\gamma^{(1)}\,(i+t_0). 
\]
Plugging this into \eqref{eq:geom-reorder} yields
\begin{equation}\label{eq:linear-weighted-m}
\sum_{t=1}^{T}(t+t_0-1)\|\mathbf{m}_t\|^2
\;\le\; \frac{\kappa_\gamma^{(1)}}{1-\gamma}\sum_{i=1}^{T}(i+t_0)\,\eta_i^2\,\|\nabla f(\mathbf{y}_i;z_{j_i})\|^2 \le \frac{\kappa_\gamma^{(1)} L^2}{1-\gamma}   \frac{4}{\mu_S^2 } \sum_{i=1}^{T} \frac{1}{(i+t_0)} \le \frac{\kappa_\gamma^{(1)} L^2}{1-\gamma}   \frac{4}{\mu_S^2  } \log\!\frac{T+t_0}{t_0},
\end{equation}
where we have used Assumption \ref{assum666} in the second inequality.
Thus
\begin{equation}\label{eq:S2a-final}
\Sigma_{2a}\ \le\ \frac{4\beta^2\gamma^2\kappa_\gamma^{(1)}}{(1-\gamma)\mu_S^3}\,L^2\,
\log\!\frac{T+t_0}{t_0}.
\end{equation}

\medskip\noindent\textbf{Step 2.2: quadratic weights $a_t=(t+t_0)(t+t_0-1)$.}
Let $x:=i+t_0+1$ and again write $t=i+k+1$ with $k\ge 0$. Then
\[
(t+t_0)(t+t_0-1)=(x+k)(x+k-1)=(x^2-x)+(2x-1)k+k^2,
\]
hence
\[
K_i(a)
=\sum_{t=i+1}^{T} (t+t_0)(t+t_0-1)\,\gamma^{\,t-1-i}
= \sum_{k=0}^{T-i-1}\big[(x^2-x)+(2x-1)k+k^2\big]\gamma^{k}.
\]
Since $i+t_0\ge 1$ and $\gamma < 1$, using the infinite-sum bounds $\sum_{k\ge0}\gamma^k=\frac{1}{1-\gamma}$, $\sum_{k\ge0}k\gamma^k=\frac{\gamma}{(1-\gamma)^2}$ and $\sum_{k\ge0}k^2\gamma^k=\frac{\gamma(1+\gamma)}{(1-\gamma)^3}$, we obtain
\begin{align}\label{eq-bound-on-genmetric}\nonumber
&K_i(a)
\;\le\; \frac{ (x^2-x)}{1-\gamma} + \frac{(2x-1)\gamma}{(1-\gamma)^2}
+ \frac{\gamma(1+\gamma)}{(1-\gamma)^3} \\&\le\; \left(\frac{1}{1-\gamma}+\frac{2}{(1-\gamma)^2}+\frac{2}{(1-\gamma)^3}\right)x^2
\;=:\; \kappa_\gamma\, (i+t_0+1)^2 \le4 \kappa_\gamma\, (i+t_0)^2. 
\end{align}
Plugging (\ref{eq-bound-on-genmetric}) into \eqref{eq:geom-reorder} gives
\begin{equation}\label{eq:quadratic-weighted-m}
\sum_{t=1}^{T} a_t\,\|\mathbf{m}_t\|^2
\;\le\;  \frac{4}{1-\gamma}\sum_{i=1}^{T} \eta_i^2\|\nabla f(\mathbf{y}_i;z_{j_i})\|^2\,\kappa_\gamma\, (i+t_0)^2\le \frac{ 4\kappa_\gamma}{1-\gamma}\cdot \frac{4L^2}{\mu_S^2}\,T,
\end{equation}
where we have used Assumption \ref{assum666} in the second inequality.
Thus
\begin{equation}\label{eq:Sigma2ap-final}
\Sigma_{2b}
\ \le\ \frac{16\beta\gamma^2\kappa_\gamma}{1-\gamma}\cdot \frac{L^2}{\mu_S^2}\,T.
\end{equation}

\medskip\noindent\textbf{Step 3: High-probability control of $\Sigma_3$.} Denote by $M_t:=-W_t\eta_t\langle \nabla F_S(\mathbf{w}_t),\xi_t\rangle$. Since $\mathbb{E}_{j_t} M_t = 0$, thus $\{M_t\}$ is a martingale difference sequence. By Assumption \ref{assum666}, we have $\|\nabla F_S(\mathbf{w}_t)\|\le  L$ and $\|\xi_t\|\le2L$. Together with $W_t\eta_t=\frac{2}{\mu_S}(t+t_0-1)\le \frac{2}{\mu_S}(T+t_0-1)$, these increments gives 
\begin{align}\label{eq-bound-free}
|M_t|\le W_t\eta_t \|\nabla F_S(\mathbf{w}_t) \| \|\xi_t\| \le \frac{4}{\mu_S}(T+t_0-1)\,L^2.
\end{align}
By Assumption \ref{assu8}, the conditional variance satisfies
\begin{align}\label{eq-bound-freedman}
\mathbb{E}_{j_t}[M_t^2 ] \le \mathbb{E}_{j_t} W_t^2\eta_t^2 \|\nabla F_S(\mathbf{w}_t) \|^2 \|\xi_t\|^2
\le \frac{4}{\mu_S^2}(t+t_0-1)^2\,\|\nabla F_S(\mathbf{w}_t)\|^2\,\sigma^2.
\end{align}
Apply Part (b) of Lemma \ref{lemma36} with (\ref{eq-bound-free}) and (\ref{eq-bound-freedman}) yields, with probability at least $1-\delta$,
\begin{align*}
\Sigma_3=\sum_{t=1}^T M_t
\ \le\ \frac{\rho \sigma^2}{ \mu_S L^2 (T+t_0-1)}\sum_{t=1}^T (t+t_0-1)^2\|\nabla F_S(\mathbf{w}_t)\|^2
\ +\ \frac{4}{\mu_S}(T+t_0-1)\frac{L^2\log(1/\delta)}{\rho}.
\end{align*}
Setting $\rho = \min\{1,\frac{L^2}{4\sigma^2}\}$, the above inequality implies
\begin{align}\label{eq:Sigma3-bound}
\Sigma_3 \le\ \frac{1}{ 4\mu_S}\sum_{t=1}^T (t+t_0-1)\|\nabla F_S(\mathbf{w}_t)\|^2
\ +\ \frac{4}{\mu_S}(T+t_0-1)\frac{L^2\log(1/\delta)}{\min\{1,\frac{L^2}{4\sigma^2}\}}.
\end{align}

\medskip\noindent\textbf{Step 4: Control of $\Sigma_4$.}
By Assumption \ref{assum666},
\begin{align*}
\sum_{t=1}^T W_t \cdot \beta\eta_t^2\|\nabla f(\mathbf{y}_t;z_{j_t})\|^2 \le \sum_{t=1}^T   \beta L^2 \frac{4}{\mu_S^2} = \frac{4\beta L^2T}{\mu_S^2}.
\end{align*}

\medskip\noindent\textbf{Step 5: High-probability control of $\Sigma_1$.}
By smoothness (Assumption \ref{assu4}) and NAG's update $\mathbf{w}_{t+1} \;=\; \mathbf{w}_t + \mathbf{m}_{t+1}$,
\[
\langle \nabla F_S(\mathbf{w}_t),\mathbf{m}_t\rangle
\ \le\ \langle \nabla F_S(\mathbf{w}_{t-1}),\mathbf{m}_t\rangle + \beta\|\mathbf{m}_t\|^2.
\]
Multiply by $\gamma W_t$ and sum over $t=1,\dots,T$:
\begin{equation}\label{eq:s1-first}
\Sigma_1
=\sum_{t=1}^{T}\gamma W_t\langle \nabla F_S(\mathbf{w}_t),\mathbf{m}_t\rangle = \sum_{t=2}^{T}\gamma W_t\langle \nabla F_S(\mathbf{w}_t),\mathbf{m}_t\rangle
\ \le\ \sum_{t=2}^{T}\gamma W_t\langle \nabla F_S(\mathbf{w}_{t-1}),\mathbf{m}_t\rangle
\ +\ \sum_{t=2}^{T}\gamma W_t\,\beta\|\mathbf{m}_t\|^2,
\end{equation}
where the second identity holds due to $\mathbf{m}_1=\mathbf{0}$.
Since $\mathbf{m}_t=\gamma \mathbf{m}_{t-1}-\eta_{t-1}\nabla f(\mathbf{y}_{t-1};z_{j_{t-1}})$,
\[
\langle \nabla F_S(\mathbf{w}_{t-1}),\mathbf{m}_t\rangle
= \gamma\langle \nabla F_S(\mathbf{w}_{t-1}),\mathbf{m}_{t-1}\rangle
-\eta_{t-1}\langle \nabla F_S(\mathbf{w}_{t-1}),\nabla f(\mathbf{y}_{t-1};z_{j_{t-1}})\rangle .
\]
Plugging this into \eqref{eq:s1-first},
\begin{align}\nonumber
&\Sigma_1=\sum_{t=1}^{T}\gamma W_t\langle \nabla F_S(\mathbf{w}_t),\mathbf{m}_t\rangle \\
&\le \sum_{t=2}^{T}\gamma^2 W_t\langle \nabla F_S(\mathbf{w}_{t-1}),\mathbf{m}_{t-1}\rangle
\ -\ \sum_{t=2}^{T}\gamma W_t\,\eta_{t-1}\langle \nabla F_S(\mathbf{w}_{t-1}),\nabla f(\mathbf{y}_{t-1};z_{j_{t-1}})\rangle
\ +\ \sum_{t=2}^{T}\gamma W_t\,\beta\|\mathbf{m}_t\|^2 \nonumber\\
&= \sum_{t=1}^{T-1}\gamma^2 W_{t+1}\langle \nabla F_S(\mathbf{w}_{t}),\mathbf{m}_{t}\rangle
\ -\ \sum_{t=1}^{T-1}\gamma W_{t+1}\,\eta_{t}\langle \nabla F_S(\mathbf{w}_{t}),\nabla f(\mathbf{y}_t;z_{j_t})\rangle
\ +\ \sum_{t=2}^{T}\gamma W_{t}\,\beta\|\mathbf{m}_{t}\|^2. \label{eq:s1-step2}
\end{align}
Apply \eqref{eq:s1-step2} recursively to the first sum on the right and stop at $\mathbf{m}_1=\mathbf{0}$; a standard induction gives the following inequality
\begin{equation}\label{eq:w-unroll}
\Sigma_1 
\;\le\;
-\sum_{i=1}^{T-1}\eta_i
\Big(\sum_{t=i+1}^{T}\gamma^{\,t-i} W_t\Big)\,
\langle \nabla F_S(\mathbf{w}_i),\nabla f(\mathbf{y}_i;z_{j_i})\rangle
\;+\;
\sum_{t=2}^{T}
\Big(\sum_{s=t}^{T}\gamma^{\,s-t} W_s\Big)\,\beta\|\mathbf{m}_t\|^2 .
\end{equation}
With the notation
\[
H_i(W):=\sum_{t=i+1}^{T}\gamma^{\,t-i} W_t,\qquad
K_t(W):=\sum_{s=t}^{T}\gamma^{\,s-t} W_s ,
\]
then
\begin{equation}\label{eq:sigma1-B123}
\Sigma_1
\;\le\; (B1)\;-\;(B2)\;+\;(B3),
\end{equation}
where
\[
\begin{aligned}
(B1) &:= \sum_{t=2}^{T}\underbrace{\Big(\sum_{s=t}^{T}\gamma^{\,s-t}\,W_s\Big)}_{:=\,K_t(W)}
\,\beta\,\|\mathbf{m}_t\|^2,\\[1mm]
(B2) &:= \sum_{t=1}^{T-1}\underbrace{\Big(\sum_{i=t+1}^{T}\gamma^{\,i-t}\,W_i\Big)}_{:=\,H_t(W)}
\,\eta_t\,\ip{\nabla F_S(\mathbf{w}_t)}{\nabla F_S(\mathbf{y}_t)},\\[1mm]
(B3) &:= -\sum_{t=1}^{T-1}H_t(W)\,\eta_t\,\ip{\nabla F_S(\mathbf{w}_t)}{\xi_t}.
\end{aligned}
\]

\medskip\noindent\textbf{Step 5.1: Bounding $H_t(W)$ and $K_t(W)$.}
Let $x:=t+t_0$ and again write $s=t+k$ with $k\ge 0$. Then
\[
(s+t_0)(s+t_0-1)=(x+k)(x+k-1)=(x^2-x)+(2x-1)k+k^2,
\]
hence
\[
K_t(W)
=\sum_{s=t}^{T} (s+t_0)(s+t_0-1)\,\gamma^{\,s-t}
= \sum_{k=0}^{T-t}\big[(x^2-x)+(2x-1)k+k^2\big]\gamma^{k}.
\]
Since $t+t_0\ge 1$ and $\gamma < 1$, using the infinite-sum bounds $\sum_{k\ge0}\gamma^k=\frac{1}{1-\gamma}$, $\sum_{k\ge0}k\gamma^k=\frac{\gamma}{(1-\gamma)^2}$ and $\sum_{k\ge0}k^2\gamma^k=\frac{\gamma(1+\gamma)}{(1-\gamma)^3}$, we obtain
\[
K_t(W)
\;\le\; \frac{ (x^2-x)}{1-\gamma} + \frac{(2x-1)\gamma}{(1-\gamma)^2}
+ \frac{\gamma(1+\gamma)}{(1-\gamma)^3} \le\; \left(\frac{1}{1-\gamma}+\frac{2}{(1-\gamma)^2}+\frac{2}{(1-\gamma)^3}\right)x^2
\;=:\; \kappa_\gamma\, (t+t_0 )^2 . 
\]
Thus we obtain
\begin{equation}\label{eq:kernel-bds}
K_t(W)\ \le\ \kappa_\gamma\,(t+t_0)^2 \qquad H_t(W)\ \le\ \kappa_\gamma\,(t+t_0)^2.
\end{equation}

\medskip\noindent\textbf{Step 5.2: Bounding $(B1)$.}
By \eqref{eq:kernel-bds},
\[
(B1)\ \le\ \beta\kappa_\gamma\sum_{t=1}^{T} (t+t_0)^2\|\mathbf{m}_t\|^2.
\]
From (\ref{eq:m-square-one-step}) we know
\[
\|\mathbf{m}_t\|^2\ \le\ \sum_{i=1}^{t-1}\frac{\gamma^{\,t-1-i}}{1-\gamma}\,\eta_i^2\,\|\nabla f(\mathbf{y}_i;z_{j_i})\|^2
\ \le\ \frac{L^2}{1-\gamma}\sum_{i=1}^{t-1}\gamma^{\,t-1-i}\eta_i^2,
\]
where the second inequality holds due to Assumption \ref{assum666}.
By reordering with Lemma \ref{lem:reorder},
\begin{align*}
\sum_{t=1}^{T} (t+t_0)^2\|\mathbf{m}_t\|^2 &\le \frac{L^2}{1-\gamma}\sum_{t=1}^{T} (t+t_0)^2 \sum_{i=1}^{t-1}\gamma^{\,t-1-i}\eta_i^2
\ \\&\le\ \frac{L^2}{1-\gamma}\sum_{i=1}^{T-1}\eta_i^2
\sum_{t=i+1}^{T}\gamma^{\,t-1-i} (t+t_0)^2
\ \le\ \frac{4\kappa_\gamma L^2}{1-\gamma}\sum_{i=1}^{T-1}\eta_i^2\,(i+t_0)^2,
\end{align*}
where the last inequality holds because (\ref{eq-bound-on-genmetric}).
Since $\eta_i^2=\frac{4}{\mu_S^2(i+t_0)^2}$, we get
\begin{equation}\label{eq:B1-final}
(B1)\ \le\ \frac{16\beta\kappa_\gamma^2}{1-\gamma}\cdot \frac{L^2}{\mu_S^2}\,T.
\end{equation}

\medskip\noindent\textbf{Step 5.3: Bounding $(B2)$.}
Using the polarization lower bound:
\[
\langle \nabla F_S(\mathbf{y}_t),\nabla F_S(\mathbf{w}_t)\rangle
\ \ge\ \tfrac12\|\nabla F_S(\mathbf{w}_t)\|^2\ -\ \tfrac12 \beta^2\gamma^2\|\mathbf{m}_t\|^2,
\]
we have 
\[
-(B2)
\ \le\ -\sum_{t=1}^{T-1} H_t(W) 
\,\eta_t\, \tfrac12\|\nabla F_S(\mathbf{w}_t)\|^2\ +\  \sum_{t=1}^{T-1} H_t(W) 
\,\eta_t\,\tfrac12 \beta^2\gamma^2\|\mathbf{m}_t\|^2.
\]
Together with $H_t(W)\ge \gamma W_{t+1}$ and $\gamma \eta_tW_{t+1}=\gamma \frac{2}{\mu_S}(t+t_0+1)$,
\begin{equation}\label{eq:B2-final}
-(B2)\ \le\ -\frac{\gamma}{\mu_S}\sum_{t=1}^{T-1}(t+t_0+1)\,\|\nabla F_S(\mathbf{w}_t)\|^2
\ +\ \frac{\kappa_\gamma \beta^2\gamma^2}{\mu_S}\sum_{t=1}^{T-1}(t+t_0)\,\|\mathbf{m}_t\|^2,
\end{equation}
where we have also used
$H_t(W)\ \le\ \kappa_\gamma\,(t+t_0)^2$ in (\ref{eq:kernel-bds}).
Following the proof of (\ref{eq:geom-reorder}) and then the proof in Step 2.1, we can prove that
\begin{align*}
\sum_{t=1}^{T-1}(t+t_0)\|\mathbf{m}_t\|^2 \;\le\;  \frac{\kappa_\gamma^{(1)}}{\gamma(1-\gamma)}\sum_{i=1}^{T-1} \eta_i^2\|\nabla f(\mathbf{y}_i;z_{j_i})\|^2\, (i+t_0)
\;\le\;  \frac{\kappa_\gamma^{(1)} L^2}{\gamma(1-\gamma)}   \frac{4}{\mu_S^2  } \log\!\frac{T+t_0-1}{t_0}.
\end{align*}
Thus we have 
\begin{align*}
-(B2)\ \le\ -\frac{\gamma}{\mu_S}\sum_{t=1}^{T-1}(t+t_0+1)\,\|\nabla F_S(\mathbf{w}_t)\|^2
\ +\ \frac{\kappa_\gamma  \beta^2\gamma^2}{\mu_S}\frac{\kappa_\gamma^{(1)} L^2}{\gamma(1-\gamma)}   \frac{4}{\mu_S^2  } \log\!\frac{T+t_0-1}{t_0}.
\end{align*}

\medskip\noindent\textbf{Step 5.4: Bounding $(B3)$.}
Let $M_t:=-\eta_t\,\langle \xi_t,\nabla F_S(\mathbf{w}_t)\rangle \, H_t(W)$, where $t=1,\dots,T-1$. Since $\mathbb{E}_{j_t} M_t=0$, $\{M_t\}$ is a martingale difference sequence and $(B3)=\sum_{t=1}^{T-1} M_t$.
We bound its increments and conditional variance. First,
\[
|M_t|\ \le\ \eta_t\,\|\xi_t\|\,\|\nabla F_S(\mathbf{w}_t)\|\, H_t(W)
\ \le\ 2L\cdot L\cdot \eta_t\,H_t(W)
\ \le\ \frac{4\kappa_\gamma L^2}{\mu_S}\,(t+t_0),
\]
where we have used the fact that $\|\xi_t\|\le \|\nabla f(\mathbf{y}_t;z_{j_t})\|+\|\nabla F_S(\mathbf{y}_t)\|\le 2L$ and
$\|\nabla F_S(\mathbf{w}_t)\|\le L$ by Assumption \ref{assum666}, and have used $\eta_t=\frac{2}{\mu_S(t+t_0)}$ and $H_t(W)\le\kappa_\gamma (t+t_0)^2$ in (\ref{eq:kernel-bds}).
Thus a uniform bound is
\begin{align}\label{bound-increment-on]}
|M_t|\ \le\  \frac{4\kappa_\gamma L^2}{\mu_S}\,(T+t_0-1).
\end{align}
Next, by Assumption \ref{assu8}, the conditional variance satisfies
\begin{align}\label{bound-increment-on]-varian}
\mathbb{E}_{j_t}[M_t^2 ]
\ \le\ \eta_t^2 (H_t(W))^2 \sigma^2 \|\nabla F_S(\mathbf{w}_t)\|^2
\ \le\ \frac{4\kappa_\gamma^2\sigma^2}{\mu_S^2}\,(t+t_0)^2 \|\nabla F_S(\mathbf{w}_t)\|^2.
\end{align}
Apply Part (b) of Lemma \ref{lemma36} with (\ref{bound-increment-on]}) and (\ref{bound-increment-on]-varian}) yields, with probability at least $1-\delta$,
\begin{equation}\label{eq:B3-final}
(B3)\ \le\ \frac{\rho \kappa_\gamma \sigma^2 }{\mu_S (T+t_0-1) L^2}\, \sum_{t=1}^{T-1}(t+t_0)^2 \|\nabla F_S(\mathbf{w}_t)\|^2\  +\ \frac{4\kappa_\gamma L^2}{\mu_S\rho}\,(T+t_0-1)\log\frac{1}{\delta}.
\end{equation}
Setting $\rho=\min \{1,\frac{\gamma L^2}{2\kappa_\gamma \sigma^2}  \}$, we have
\begin{align*}
(B3)\ \le\ \frac{\gamma }{2\mu_S }\, \sum_{t=1}^{T-1}(t+t_0) \|\nabla F_S(\mathbf{w}_t)\|^2\  +\ \frac{4\kappa_\gamma L^2}{\mu_S \min \{1,\frac{\gamma L^2}{2\kappa_\gamma \sigma^2}  \}}\,(T+t_0-1)\log\frac{1}{\delta}.
\end{align*}

\medskip\noindent\textbf{Step 5.5: Taken together.}
Collecting bounds on $(B1)$–$(B3)$ yields, with probability at least $1-\delta$,
\begin{align*}
\Sigma_1
&\le\; \frac{16\beta\kappa_\gamma^2}{1-\gamma}\cdot \frac{L^2}{\mu_S^2}\,T
 -\frac{\gamma}{\mu_S}\sum_{t=1}^{T-1}(t+t_0+1)\,\|\nabla F_S(\mathbf{w}_t)\|^2
\ +\ \frac{\kappa_\gamma \beta^2\gamma^2}{\mu_S}\frac{\kappa_\gamma^{(1)} L^2}{\gamma(1-\gamma)}   \frac{4}{\mu_S^2  } \log\!\frac{T+t_0-1}{t_0}\\&+ \ \frac{\gamma }{2\mu_S }\, \sum_{t=1}^{T-1}(t+t_0) \|\nabla F_S(\mathbf{w}_t)\|^2\  +\ \frac{4\kappa_\gamma L^2}{\mu_S \min \{1,\frac{\gamma L^2}{2\kappa_\gamma \sigma^2}  \}}\,(T+t_0-1)\log\frac{1}{\delta}
\\&\le\frac{16\beta\kappa_\gamma^2}{1-\gamma}\cdot \frac{L^2}{\mu_S^2}\,T + \frac{\kappa_\gamma  \beta^2\gamma^2}{\mu_S}\frac{\kappa_\gamma^{(1)} L^2}{\gamma(1-\gamma)}   \frac{4}{\mu_S^2  } \log\!\frac{T+t_0-1}{t_0}+\ \frac{4\kappa_\gamma L^2}{\mu_S \min \{1,\frac{\gamma L^2}{2\kappa_\gamma \sigma^2}  \}}\,(T+t_0-1)\log\frac{1}{\delta}.
\end{align*}

\medskip\noindent\textbf{Step 6: Final bound.}
Till here, we collect all these bounds on $\Sigma_{1}$, $\Sigma_{2a}$, $\Sigma_{2b}$, $\Sigma_{3}$ and $\Sigma_{4}$ together. Plugging these bounds into (\ref{eq:master}), with probability $1-2\delta$ we get
\begin{align*}
&\sum_{t=1}^T \frac{t+t_0-1}{2\mu_S}\|\nabla F_S(\mathbf{w}_t)\|^2
+ W_T\big(F_S(\mathbf{w}_{T+1})-F_S^{\ast}\big)
\\&\le\frac{16\beta\kappa_\gamma^2}{1-\gamma}\cdot \frac{L^2}{\mu_S^2}\,T + \frac{\kappa_\gamma \beta^2\gamma^2}{\mu_S}\frac{\kappa_\gamma^{(1)} L^2}{\gamma(1-\gamma)}   \frac{4}{\mu_S^2  } \log\!\frac{T+t_0-1}{t_0}+\ \frac{4\kappa_\gamma L^2}{\mu_S \min \{1,\frac{\gamma L^2}{2\kappa_\gamma \sigma^2}  \}}\,(T+t_0-1)\log\frac{1}{\delta}\\&+\frac{4\beta^2\gamma^2\kappa_\gamma^{(1)}}{(1-\gamma)\mu_S^3}\,L^2\,
\log\!\frac{T+t_0}{t_0}+\frac{16\beta\gamma^2\kappa_\gamma}{1-\gamma}\cdot \frac{L^2}{\mu_S^2}\,T + \frac{1}{ 4\mu_S}\sum_{t=1}^T (t+t_0-1)\|\nabla F_S(\mathbf{w}_t)\|^2 +\frac{4}{\mu_S}(T+t_0-1)\frac{L^2\log(1/\delta)}{\min\{1,\frac{L^2}{4\sigma^2}\}} +T\frac{4\beta L^2}{\mu_S^2},
\end{align*}
which means that with probability $1-\delta$
\begin{align*}
& W_T\big(F_S(\mathbf{w}_{T+1})-F_S^{\ast}\big)
 \le\frac{16\beta\kappa_\gamma^2}{1-\gamma}\cdot \frac{L^2}{\mu_S^2}\,T + \frac{\kappa_\gamma \beta^2\gamma^2}{\mu_S}\frac{\kappa_\gamma^{(1)} L^2}{\gamma(1-\gamma)}   \frac{4}{\mu_S^2  } \log\!\frac{T+t_0-1}{t_0}+\ \frac{4\kappa_\gamma L^2}{\mu_S \min \{1,\frac{\gamma L^2}{2\kappa_\gamma \sigma^2}  \}}\,(T+t_0-1)\log\frac{2}{\delta}\\&+\frac{4\beta^2\gamma^2\kappa_\gamma^{(1)}}{(1-\gamma)\mu_S^3}\,L^2\,
\log\!\frac{T+t_0}{t_0}+\frac{16\beta\gamma^2\kappa_\gamma}{1-\gamma}\cdot \frac{L^2}{\mu_S^2}\,T  +\frac{4}{\mu_S}(T+t_0-1)\frac{L^2\log(2/\delta)}{\min\{1,\frac{L^2}{4\sigma^2}\}} +T\frac{4\beta L^2}{\mu_S^2}.
\end{align*}
The inequality implies the claimed bound
\begin{align*}
&F_S(\mathbf{w}_{T+1})-F_S^{\ast} = \mathcal O\left(\frac{\log 1/\delta}{T}
\right).
\end{align*}
The proof is complete.
\end{proof}
\begin{remark}[Novelty of NAG]\rm{}\label{optimiofsnag}
Proving high-probability guarantees for NAG is strictly harder than for SGD: NAG couples the iterate $\mathbf w_t$ with the lookahead $\mathbf y_t$ and the momentum $\mathbf m_t$, creating momentum–gradient cross-terms and geometric noise accumulation that standard SGD analyses cannot tightly control. We tackle this by developing a Lyapunov (energy–potential) framework that explicitly tracks momentum and geometrically reweights history. To handle the stochastic coupling, we apply an absorption inequality to the cross-term $\langle \nabla F_S(\mathbf w_t), \mathbf m_t\rangle$ and combine it with a geometrically time-weighted martingale argument that controls both the accumulated momentum and the $\mathbf w_t$-$\mathbf y_t$ interaction. These ingredients yield high-probability, last-iterate optimization guarantees for nonconvex objectives. 
\end{remark}

\subsection{Proof of Theorem \ref{theo6nag}}\label{section-proof-nag}
\begin{proof}
When Assumptions \ref{assu4}, \ref{assu7} and \ref{assu8} hold and when $\eta_t = \eta_1 t^{- 1/2}$ with $\eta_1 \leq \min\!\left\{\frac{1-\gamma}{2\sqrt{2}\,\gamma \beta},\ 
\frac{(1-\gamma)^2}{32\,C_m(\gamma,\beta)}\right\}$,  we can apply Lemma \ref{thm:nag-noncvx} to obtain the following inequality with probability at least $1-\delta/3$,
\begin{align*}
&\sum_{t = 1}^T \eta_t \| \nabla F(\mathbf{w}_t) \|^2 = \sum_{t = 1}^T \eta_t \| \nabla F(\mathbf{w}_t) - \nabla F_S(\mathbf{w}_t) + \nabla F_S(\mathbf{w}_t)\|^2  \leq 2\sum_{t = 1}^T \eta_t \| \nabla F(\mathbf{w}_t) - \nabla F_S(\mathbf{w}_t) \|^2 + 2\sum_{t = 1}^T \eta_t\| \nabla F_S(\mathbf{w}_t)\|^2 \\
&\leq 2\sum_{t = 1}^T \eta_t \max_{t=1,...,T}\| \nabla F(\mathbf{w}_t) - \nabla F_S(\mathbf{w}_t) \|^2  + \mathcal{O} \left(\sum_{t=1}^T \eta_t^2 + \log \left(\frac{1}{\delta} \right) \right),
\end{align*}
which implies that
\begin{align}\label{boun-weighted-generalzaition}
&\left(\sum_{t = 1}^T \eta_t \right)^{-1} \sum_{t = 1}^T \eta_t \| \nabla F(\mathbf{w}_t) \|^2  \leq 2 \max_{t=1,...,T}\| \nabla F(\mathbf{w}_t) - \nabla F_S(\mathbf{w}_t) \|^2   + \left(\sum_{t = 1}^T \eta_t \right)^{-1}\mathcal{O} \left(\sum_{t=1}^T \eta_t^2 + \log \left(\frac{1}{\delta} \right) \right).
\end{align}
When Assumptions \ref{assu4} and \ref{assu5} are satisfied, we can apply Lemma \ref{erfgefge} to obtain the following inequality with probability at least $1-\frac{\delta}{3}$,
\begin{align}\label{ineqdie}  \max_{t=1,...,T}\| \nabla F(\mathbf{w}_t) - \nabla F_S(\mathbf{w}_t) \|^2   
\leq   \max_{t=1,...,T} \Big[  c' \beta \max \left \{ \| \mathbf{w}_t - \mathbf{w}^{\ast} \|, \frac{1}{n} \right\} 
\eta  + \frac{B_{\ast}\log(\frac{12}{\delta})}{n} + \sqrt{\frac{2 \mathbb{E} [ \| \nabla f(\mathbf{w}^{\ast};z) \|^2 ] \log(\frac{12}{\delta})}{n}} \Big]^2   
\end{align}
where $\eta = \sqrt{\frac{d + \log \frac{24 \log_2(2n R +2)}{\delta}}{n}}  +\frac{d + \log \frac{24 \log_2(2n R +2)}{\delta}}{n} $.

When Assumptions \ref{assu4}, \ref{assu7} and \ref{assu8} hold and when $\eta_t = \eta_1 t^{- 1/2}$ with $\eta_1 \leq \min\!\left\{\frac{1-\gamma}{2\sqrt{2}\,\gamma \beta},\ 
\frac{(1-\gamma)^2}{32\,C_m(\gamma,\beta)}\right\}$, from Lemma \ref{thm:nag-generalization}, we have the following inequality with probability $1-\delta/3$ uniformly for all $t = 1,..,T$
\begin{align}\label{ineq die} \| \mathbf{w}_{t+1} - \mathbf{w}^{\ast} \| \le \| \mathbf{w}_{t+1} \| +\| \mathbf{w}^{\ast} \|  \leq  \mathcal O \left(  ( \sum_{k=1}^T \eta_k^2  )^{1/2} + ( \sum_{k=1}^t \eta_k^2  )^{1/2} ( \sum_{k=1}^t \eta_k  )^{1/2} + 1 \right) \log(\frac{1}{\delta}) \leq   
             \mathcal{O}(\log(\frac{1}{\delta})) T^{\frac{1}{4}} \log^{1/2}T, 
\end{align}
where we have used Lemma \ref{lei32} in the last inequality.
Combining (\ref{ineq die}) and (\ref{ineqdie}),  we can derive the following result with probability at least $1-\frac{2\delta}{3}$,
\begin{align}\label{buineqdie} 
&\max_{t=1,...,T}\| \nabla F(\mathbf{w}_t) - \nabla F_S(\mathbf{w}_t) \|^2  = \mathcal{O} \left( \log^2(\frac{1}{\delta})) T^{\frac{1}{2}} \log T \frac{d + \log \frac{24 \log_2(2n R +2)}{\delta}}{n} \right) .
\end{align}
 Plugging (\ref{buineqdie}) into (\ref{boun-weighted-generalzaition}), and using Lemma \ref{lei32} for the term $(\sum_{t = 1}^T \eta_t )^{-1}\mathcal{O} (\sum_{t=1}^T \eta_t^2 + \log (\frac{1}{\delta} ) )$, we finally obtain the following inequality with probability at least $1-\delta$
\begin{align*}
&\hphantom{{}={}}\left(\sum_{t = 1}^T \eta_t \right)^{-1} \sum_{t = 1}^T \eta_t \| \nabla F(\mathbf{w}_t) \|^2  =
            \mathcal{O}\left(\frac{d + \log \frac{ 1 }{\delta}}{n}\log^2(1/\delta) T^{\frac{1}{2}} \log T \right)+\mathcal{O}(\log(T/\delta)T^{-\frac{1}{2}}).
\end{align*}

Selecting $T \asymp nd^{-1}$, we obtain the following result with probability at least $1-\delta$
\begin{align*}
\left(\sum_{t = 1}^T \eta_t \right)^{-1} \sum_{t = 1}^T \eta_t \| \nabla F(\mathbf{w}_t) \|^2  = 
\mathcal{O} \left( \sqrt{\frac{d}{n}} \log(\frac{n}{d\delta})  \log^3(1/\delta) \right).
\end{align*}
The proof is complete.
\end{proof}

\subsection{Proof of Theorem \ref{theo67nag}}\label{setion-proof-4.4}
\begin{proof}
When Assumptions \ref{assu4} and \ref{assu5} hold and $F$ satisfies Assumption \ref{assu10} with parameter $\mu$, and when $n \geq \frac{c\beta^2(d+ \log(\frac{8 \log(2n R +2)}{\delta}))}{\mu^2}$, we can apply Lemma \ref{erfgefge} to obtain the following inequality with probability at least $1- \delta$
\begin{align*}
&\|  \nabla F(\mathbf{w})  \| 
\leq 2\left\| \nabla F_S(\mathbf{w}) \right\| +  \frac{\mu}{n} + 2\frac{B_{\ast}\log(4/\delta)}{n} + 2\sqrt{\frac{2 \mathbb{E} [ \| \nabla f(\mathbf{w}^{\ast};z) \|^2 ] \log(4/\delta)}{n}},
\end{align*}
which implies, with probability at least $1- \delta/2$, 
\begin{align}\label{theo1111} \left(\sum_{t = 1}^T \eta_t \right)^{-1} \sum_{t = 1}^T \eta_t \| \nabla F(\mathbf{w}_t) \|^2  \leq 16\left(\sum_{t = 1}^T \eta_t \right)^{-1} \sum_{t = 1}^T \eta_t\left\| \nabla F_S(\mathbf{w}_t) \right\|^2 +  \frac{4\mu^2}{n^2}+ \frac{16B_{\ast}^2\log^2(8/\delta)}{n^2} + \frac{32 \mathbb{E} [ \| \nabla f(\mathbf{w}^{\ast};z) \|^2 ] \log(8/\delta)}{n}.
\end{align}
When Assumptions \ref{assu4}, \ref{assu10} and \ref{assu5} hold and when $\eta_t = \eta_1 t^{- 1/2}$ with $\eta_1 \leq \min\!\left\{\frac{1-\gamma}{2\sqrt{2}\,\gamma \beta},\ 
\frac{(1-\gamma)^2}{32\,C_m(\gamma,\beta)}\right\}$,  we can apply Lemma \ref{thm:nag-noncvx} to  obtain the following inequality with probability at least $1-\delta/2$,
\begin{align}\label{theo1112}
&\left(\sum_{t = 1}^T \eta_t \right)^{-1} \sum_{t = 1}^T \eta_t\left\| \nabla F_S(\mathbf{w}_t) \right\|^2  \leq \left(\sum_{t = 1}^T \eta_t \right)^{-1}\mathcal{O} \left(\sum_{t=1}^T \eta_t^2 + \log \left(\frac{1}{\delta} \right) \right).
\end{align}
Combining (\ref{theo1111}) and (\ref{theo1112}), with probability at least $1-\delta$ 
\begin{align*}
 \left(\sum_{t = 1}^T \eta_t \right)^{-1} \sum_{t = 1}^T \eta_t \| \nabla F(\mathbf{w}_t) \|^2  \leq \left(\sum_{t = 1}^T \eta_t \right)^{-1}\mathcal{O} \left(\sum_{k=1}^T \eta_k^2 + \log \left(\frac{1}{\delta} \right) \right)  + \mathcal{O} \left( \frac{\log^2(1/\delta)}{n^2} + \frac{ \mathbb{E} [ \| \nabla f(\mathbf{w}^{\ast};z) \|^2 ] \log(1/\delta)}{n} \right).
\end{align*}
Together with Lemma \ref{lei32} and (\ref{smoothco}),
we finally obtain the following inequality with probability at least $1-\delta$,
\begin{align*}
\left(\sum_{t = 1}^T \eta_t \right)^{-1} \sum_{t = 1}^T \eta_t \| \nabla F(\mathbf{w}_t) \|^2= 
             \mathcal{O} \left( \frac{\log^2(\frac{1}{\delta})}{n^2} + \frac{ F(\mathbf{w}^{\ast}) \log(\frac{1}{\delta})}{n} \right)  + \mathcal{O}\left(\log(\frac{T}{\delta}) T^{-\frac{1}{2}}\right).
\end{align*}
Moreover, when $F$ satisfies the PL condition with parameter $\mu$, we have
\begin{align*}
F(\mathbf{w}) - F^{\ast}  \leq \frac{\left\| \nabla F(\mathbf{w}) \right\|^2 }{2\mu}, \quad \forall  \mathbf{w} \in \mathcal{W}.
\end{align*} 
Selecting $T \asymp n^4$, then we obtain the following result with probability at least $1-\delta$
\begin{align*}
&\left(\sum_{t = 1}^T \eta_t \right)^{-1} \sum_{t = 1}^T \eta_t F(\mathbf{w}_{t}) - F^{\ast}=
\mathcal{O} \left(\frac{\log^2(\frac{1}{\delta})}{n^2} + \frac{ F(\mathbf{w}^{\ast}) \log(\frac{1}{\delta})}{n}\right).
\end{align*}
The proof is complete.
\end{proof}
\subsection{Proof of Theorem \ref{theo7nag}}\label{section-proof-4.5bag}
\begin{proof}
Since $F$ satisfies the PL condition with parameter $2\mu$, we have
\begin{align}\label{pll1}
F(\mathbf{w}) - F^{\ast}  \leq \frac{\left\| \nabla F(\mathbf{w}) \right\|^2 }{4 \mu}, \quad \forall  \mathbf{w} \in \mathcal{W}.
\end{align} 
To bound $F(\mathbf{w}_{T+1}) - F^{\ast}$, we need to bound the term $\left\| \nabla F(\mathbf{w}_{T+1}) \right\|^2$.
It is clear
\begin{align}\label{pll}
&\left\| \nabla F(\mathbf{w}_{T+1}) \right\|^2 \leq  2 \left\| \nabla F(\mathbf{w}_{T+1})- \nabla F_S(\mathbf{w}_{T+1}) \right\|^2 + 2 \| \nabla F_S(\mathbf{w}_{T+1}) \|^2.
\end{align}
When assumptions \ref{assum666}, \ref{assu4} and \ref{assu8} hold and $F_S$ satisfies the PL condition, we can apply Lemma \ref{thm:nag-corrected-final} and (\ref{pltogradientwithfs}) to obtain the following inequality with probability at least $1-\delta/2$
\begin{align}\label{last008}
\| \nabla F_S(\mathbf{w}_{T+1})  \|^2 = \mathcal{O} \left(\frac{ \log (1/\delta)}{T} \right).
\end{align}
When Assumptions \ref{assu4} and \ref{assu5} hold and $F$ satisfies the PL condition, and
 when $n \geq \frac{c\beta^2(d+ \log(\frac{16 \log(2n R +2)}{\delta}))}{\mu^2}$, we can apply  Lemma \ref{erfgefge} to obtain the following inequality with probability at least $1- \delta/2$ 
\begin{align*}
&\left\| \nabla F (\mathbf{w}_{T+1} )- \nabla F_S(\mathbf{w}_{T+1}) \right\|  \leq \left\| \nabla F_S(\mathbf{w}_{T+1}) \right\| +  \frac{2\mu}{n} + 2\frac{B_{\ast}\log(8/\delta)}{n} + 2\sqrt{\frac{2 \mathbb{E} [ \| \nabla f(\mathbf{w}^{\ast};z) \|^2 ] \log(8/\delta)}{n}}\\
&\leq \left\| \nabla F_S(\mathbf{w}_{T+1}) \right\| +  \frac{2\mu}{n} + 2\frac{B_{\ast}\log(8/\delta)}{n} + 2\sqrt{\frac{8 \beta F(\mathbf{w}^{\ast}) \log(8/\delta)}{n}}, 
\end{align*}
where the last inequality follows from (\ref{smoothco}).
Involving the bound in (\ref{last008}), we can derive that with probability at least $1- \delta$ 
\begin{align}\label{last007} 
&\left\| \nabla F(\mathbf{w}_{T+1})- \nabla F_S(\mathbf{w}_{T+1}) \right\|^2 
 = \mathcal{O} \left(\frac{ \log(\frac{1}{\delta})}{T} \right) + \mathcal{O} \left(\frac{\log^2(\frac{1}{\delta})}{n^2} + \frac{ F(\mathbf{w}^{\ast}) \log(\frac{1}{\delta})}{n}\right).
\end{align}
Substituting (\ref{last007}) and (\ref{last008}) into (\ref{pll}),
we have the following inequality with probability at least $1-\delta$
\begin{align}\label{lsj95zj} 
&\left\| \nabla F(\mathbf{w}_{T+1}) \right\|^2  =  \mathcal{O} \left(\frac{\log(\frac{1}{\delta})}{T}\right) + \mathcal{O} \left(\frac{\log^2(\frac{1}{\delta})}{n^2} + \frac{ F(\mathbf{w}^{\ast}) \log(\frac{1}{\delta})}{n}\right).
\end{align}
Further substituting (\ref{lsj95zj}) into (\ref{pll1}) and selecting $T \asymp n^2$,
we obtain the following inequality with probability at least $1-\delta$
\begin{align*}
F(\mathbf{w}_{T+1}) - F^{\ast}  = \mathcal{O} \left(\frac{ \log^2(1/\delta)}{n^2} + \frac{ F(\mathbf{w}^{\ast}) \log(\frac{1}{\delta})}{n}\right).
\end{align*}
The proof is complete.
\end{proof}

\end{document}